\documentclass[twoside,11pt]{article}

\usepackage{blindtext}

\usepackage[abbrvbib, preprint]{jmlr2e}
\usepackage{amsmath}
\usepackage{tikz}

\newtheorem{assumption}[theorem]{Assumption}

\usepackage{lastpage}

\firstpageno{1}

\begin{document}

\title{Understanding Data Augmentation under Label Noise}

\author{\name Hong Zheng\thanks{This work is under peer review, which may contain some minor errors in mathematical proofs.} \email hirozheng@my.swjtu.edu.cn \\
       \addr School of Computing and Artificial Intelligence\\
       Southwest Jiaotong University\\
       Chengdu, 611756, China 
       \AND
       \name Fei Teng \email fteng@swjtu.edu.cn \\
       \addr School of Computing and Artificial Intelligence\\
       and Engineering Research Center of Sustainable Urban Intelligent Transportation\\
       Southwest Jiaotong University  \\
       and Ministry of Education \\
       Chengdu, 611756, China 
       }

\editor{My editor}

\maketitle

\begin{abstract}
Data augmentation (DA) has become a fundamental component of modern machine learning systems and is widely regarded as an effective strategy for improving robustness and generalization. Despite its empirical success, the theoretical impact of DA under label noise remains poorly understood. In this work, we first theoretically explain when DA is beneficial and when it becomes detrimental, and provide sufficient conditions for distinguishing between these two regimes under both the i.i.d. learning setting and the more challenging domain generalization scenario. Then, we reveal a counterintuitive phenomenon: DA essentially acts as an uncertainty amplifier in the learning process, decreasing model stability and generalization performance. This finding is supported by the proposed learning and generalization bounds. Our analyses do not depend on specific DA techniques and require only mild assumptions, indicating that the results are applicable to a broad range of machine learning scenarios. Namely, we provide a unified theoretical foundation for understanding, analyzing, and designing DA methods in the presence of label noise. 
\end{abstract}

\begin{keywords}
Data Augmentation, Label Noise, Machine Learning, Domain Generalization
\end{keywords}

\section{Introduction}
\label{sec1}

Data augmentation (DA) techniques are widely applied in machine learning and are typically considered as regularization terms, aiming to increase the robustness of models for prediction~\citep{b17}. However, it is often unclear exactly how, why, and when DA methods work for given data~\citep{b11}. In particular, for real-world applications, label noise is common, and its presence harms the learning process~\citep{b13}. Under such data conditions, answering the above questions is essential for understanding the underlying mechanisms of machine learning and for facilitating practical applications. 

\cite{b13} conducted a series of experiments on DA methods under noisy-label conditions for empirical understanding. They stated that the appropriate selection of DA can drastically improve model robustness to label noise and argued that the best augmentation strategy appears to be dataset-dependent. However, several aspects of their results require more careful scrutiny. For example, when the noise types are symmetric or asymmetric, DA methods can substantially improve the evaluation performance even when the noise rate in the training set is as high as $0.8$. In contrast, when the noise type is semantic noise, the improvement is not significant, and in some cases, performance even degrades. Note that a noise rate of $0.8$ is typically considered highly challenging for machine learning, since successful learning is generally expected only when the label noise rate is less than $0.5$. Nevertheless, their experiments clearly showed that DA methods do not always improve performance; in most cases, they degrade it. 

In this work, we present theoretical analyses to understand DA methods under noisy-label conditions in both i.i.d. and Domain Generalization (DG) learning settings. Specifically, we theoretically explain when DA is beneficial and when it becomes detrimental, and provide sufficient conditions for distinguishing between these two regimes. This exactly provides a theoretical explanation for the experimental observations in~\citep{b13}. We then extend these results to the more challenging DG scenario, although the obtained conditions are more restrictive than those in the i.i.d. setting, which in turn reflects the inherent challenges of DG. Additionally, we reveal a counterintuitive phenomenon: DA essentially acts as an uncertainty amplifier in the learning process, decreasing model stability and generalization performance. This finding is supported by our proposed learning and generalization bounds, as DA methods make these bounds looser. 

In conclusion, the main contributions of our work are as follows:
\begin{enumerate}
\item[(1)]{ We establish a theoretical framework for analyzing DA under label noise in both i.i.d. and independent and non-identically distributed (i.n.d.) settings. Within this framework, we derive sufficient conditions under which DA fails. We use simple and general models to represent various types of label noise and DA methods. The introduced assumptions are mild in most machine learning scenarios, and the lemmas used in the proofs are simple and classical. Moreover, the presented theorems establish a theoretical foundation for developing future DA methods under noisy-label settings. 
}
\item[(2)]{ We reveal that DA increases uncertainty in both learning and generalization, regardless of the existence of label noise. It is well known that machine learning is commonly studied under the framework of probably approximately correct (PAC) learning theory. This implies that the learning process is inherently uncertain, and the use of DA methods may further amplify this uncertainty, potentially leading to significant risks in high-risk application domains, such as medical diagnosis, autonomous driving, and financial decision-making. 
}
\end{enumerate}

\section{Related Works}
\label{sec2}

In this section, we briefly discuss the development of DA methods and related empirical and theoretical analyses, providing an initial glance at studies related to DA. For more details, please refer to the cited references. 

Initially, DA methods are introduced in the field of computer vision~\citep{b18}, and currently many DA methods are still mainly focused on computer vision tasks, such as image flipping, rotation, Mixup series~\citep{b14, b15}, etc. Some methods based on generative adversarial networks (GANs) are also introduced, such as DA-GANs~\citep{b19}. Moreover, nowadays, methods based on large models are also presented~\citep{b20}. GAN-based and large-model-based methods typically assume that the generated $x’_i$ is independent of $x_i$ for $i \in [n]$, where $n$ denotes the number of data samples under the i.i.d. assumption. Thus, they typically consider that their methods expand the amount of data for learning. However, considering a non-independent assumption is more natural~\citep{b17}. Thus, in our work, we also consider the non-independent setting and assume that $x’_i$ is independent of $x_j$ for any $i \ne j \in [n]$. Namely, as long as the augmented inputs are independent across different training samples, classical learning-theoretic analysis remains applicable. Therefore, we do not regard DA methods as increasing the effective number of training samples. Additionally, there are also some studies that focus on DA methods under the learning scenario of domain generalization or out-of-domain settings, such as~\citep{b23, b22}. 

Along the development of DA, learning with noisy labels has gained increasing attention, as most real-world applications suffer from such data conditions~\citep{b21, b29}. Consequently, some studies also consider using DA methods for learning robust models under label noise, such as~\citep{b24,b25}. Moreover, some studies suggest that Mixup-based methods can benefit learning under noisy data~\citep{b16}. One particular work~\citep{b13} has recently attracted our attention; although published several years ago, it systematically conducted a series of experiments on DA methods under label noise to empirically investigate their behavior. However, the study did not provide theoretical explanations for its findings, and several experimental results lacked sufficient justification. 

Theoretical analysis in machine learning is also important for its development, as it aims to explain learning phenomena in a direct manner, namely through mathematical analysis. Similarly, several studies have attempted to provide theoretical explanations for DA methods. For example, existing studies have sought to explain the effectiveness of DA using different theoretical frameworks, including Markov processes~\citep{b26}, kernel methods~\citep{b27}, and nonlinear regularization~\citep{b28}. \cite{b17} showed that DA is equivalent to an averaging operation over the orbits of a certain group that approximately preserves the data distribution. \cite{b11} found that, on the one hand, DA improves generalization, while on the other hand, it is less forgiving for over-parameterized regression models. These findings suggest that the effectiveness of DA strongly depends on the task, model, and augmentation type. However, the above studies generally do not account for the presence of label noise.

\section{A Classical Learning Perspective}
\label{sec3}

\subsection{Setup}
\label{sec3.1} 

\noindent\textbf{Basic Data Assumption}. We first clarify the basic data structure for subsequent analysis. Under the classical supervised learning setting, assuming ${\cal X}$ and ${\cal Y}$ being respectively an input and output space, the clean dataset without labeling noise is denoted as 
\begin{align}
D^c=\{(x_1,y_1),\ldots, (x_n,y_n)\}, \nonumber
\end{align}
where each pair $(x,y) \in {\cal X} \times {\cal Y}$ is independently drawn from an unknown joint distribution $P(x,y)$, i.e., $(x,y) \sim_{i.i.d.} P(x,y)$. The dataset $D^c$ of size $n$ is used as the training set for learning models. Correspondingly, a test dataset is assumed to be drawn from the same distribution $P(x,y)$. Due to the i.i.d. assumption, the test dataset is typically omitted from the theoretical analysis. Therefore, we do not introduce specific notation for the test data. 

In this work, we mainly focus on the noisy label condition for data. Thus, we assume the training data 
\begin{align}
D=\{(x_1,\tilde{y_1}),\ldots, (x_n,\tilde{y_n})\}.\nonumber 
\end{align}
Here, we assume that $\tilde y \sim P(\tilde y|x,y)$. For this assumption, we provide several explanations. This labeling noise model $P(\tilde y|x,y)$ is, in fact, a master model that can represent any noisy model depending on how it is constrained. For example, for a binary classification problem, if we consider the symmetric uniform noise model $P(\tilde{y} \ne y|y) = \eta$ and $P(\tilde{y} = y | y) = 1 - \eta$, where $\eta \in [0, 1]$, we have $P(\tilde y|x,y) = P(\tilde y|y)$. If we consider the Massart noise model $P(\tilde y \ne y | x) = \eta(x)$, where $\eta(x) \le \eta_{max} \le 1/2$, we have $P(\tilde y|x,y) = P(\tilde y|x)$. Moreover, the instance-dependent noise model $P(\tilde y|x,y) = P(\tilde y \ne y |x,y)$ can also be considered. 

Based on the assumed labeling noise model, we obtain a new data distribution $P(x,\tilde y)=\sum\nolimits_{y}{P(x,y)P(\tilde y|x, y)}$; if considering the continuous case, $P(x,\tilde y)=\int{P(x,y)P(\tilde y|x, y)dy}$. In fact, we can consider the operator $P(\tilde y|x, y)$ as performing an irreversible transformation on the original distribution $P(x,y)$, which is a convolutional operator. 

\noindent\textbf{Data Augmentation}. Having introduced the noisy labeled data model and its associated distribution, we now consider the data augmentation (DA) model. In practice, two types of DA are commonly used: online and offline DA. Online DA refers to applying augmentation during the training process, whereas offline DA refers to augmenting the training dataset prior to training. 

In this work, we ONLY focus on the online setting in our theoretical analysis, as the offline setting may violate the i.i.d. assumption. For example, for any $i \in [n]$, let $x'_i = T(x_i)$, where $T(\cdot \mid x)$ is a stochastic transformation kernel. It is clear that $x'_i$ depends on $x_i$, introducing dependence between the original and augmented samples. In the online setting, it is sufficient to assume that $x'_i$ is independent of $x_j$ for all $j \ne i \in [n]$. Consequently, there is no need to consider the number of times that $T$ is applied, as repeated applications cannot increase the number of independently generated samples. That is, after applying $T$, the number of data samples in $D$ remains $n$. 

We now model DA as a stochastic transformation kernel $T(\cdot | x)$, i.e.,
\begin{align}
x' \sim T(\cdot | x). \nonumber 
\end{align}
Note that this $T$ can de also considered as a mixture of $m$ kernels, such as $T(x' | x) = \sum_j^m{\pi_j t_j(x' | x)}$, where $\sum_j{\pi_j} = 1$. Since this consideration does not affect the analysis, we omit further discussion of it. Now, based on the DA kernel $T$, we obtain the following distribution 
\begin{align}
P_{DA}(x',\tilde y)=\int{T(x'| x)P(x,\tilde y)dx}. \nonumber
\end{align}
Typically, this distribution is considered a pushforward distribution in most of the literature. In our analysis, we additionally treat it as a fixed distribution obtained by applying $T$ to $P(x,\tilde y)$, where $T$ can also be regarded as a non-orthogonal convolutional kernel. In most training scenarios, $T$ is not $1\text{-}1$, that is, there does not exist an inverse mapping $T^{-1}$ such that $x=T^{-1}(x')$. Consequently, unless $T$ is $1\text{-}1$, $P(x’, \tilde y) \ne P(x, \tilde y)$, which supports the validity of our consideration. Additionally, we assume that for $P$-almost every $(x', \tilde y)$, it holds that $P_{DA}(x',\tilde y) > 0$. This assumption implies that $P(x, \tilde y) \ll P(x', \tilde y)$, allowing the Kullback–Leibler (KL) divergence to be used to measure their discrepancy in the following analysis. If one considers this assumption to be strong, the Jensen–Shannon (JS) divergence can also be used instead.

The kernel $T$ does not affect the label $\tilde y$. If we consider DA methods such as Mixup~\citep{b14}, which introduce new labels, this $T$ is no longer sufficient. Thus, we introduce the following model 
\begin{align}
(x',\tilde y') \sim T'(\cdot | x,\tilde y), \nonumber 
\end{align}
and correspondingly the distribution is 
\begin{align}
P_{DA}(x',\tilde y')=\iint{T'(x',\tilde y'| x, \tilde y)P(x,\tilde y)dxd\tilde y}. \nonumber
\end{align}
Similarly, we assume that $P_{DA}(x',\tilde y') > 0$ for $P$-almost every $(x', \tilde y')$. For notational convenience, we subsequently use $P^c$, $P$, $P_{DA}$, and $P'_{DA}$ to represent $P(x,y)$, $P(x,\tilde y)$, $P_{DA}(x',\tilde y)$, and $P_{DA}(x',\tilde y')$, respectively. 

\noindent\textbf{Others}. We consider a hypothesis space ${\cal H}$ equipped with a measurable structure, and let $h \in {\cal H}: {\cal X} \to {\cal Y}$ denote an arbitrary hypothesis. To quantify the prediction accuracy of $h$, we introduce a cost function $c: {\cal Y} \times {\cal Y} \to {\mathbb R}_{\ge 0}$. Then, the loss of $h$ with respect to any data pair $(x, y)$ is defined as
\begin{align}
\ell(h,(x, y))=c(h(x), y). \nonumber 
\end{align}
Based on this loss, we then introduce a definition of a property of the loss function as follows.
\begin{definition}[$\sigma$-admissible loss function~\citep{b1}] 
\label{defi1}
A loss function $\ell$ defined on ${\cal H} \times \{{\cal X} \times {\cal Y}\}$ is $\sigma$-admissible with respect to ${\cal H}$ if the associated cost function $c$ is convex with respect to its first argument and the following condition holds
\begin{align}
\forall y_1,y_2 \in {\cal D},\forall y \in {\cal Y},\left| c(y_1,y)-c(y_2,y) \right| \le \sigma \left|y_1-y_2\right|, \nonumber
\end{align}
where ${\cal D}=\{y: \exists h \in {\cal H}, \exists x\in{\cal X}, h(x)=y\}$ is the domain of the first argument of $c$. 
\end{definition} 
\noindent\textbf{Remarks}: (a) This definition, in fact, implies that $\ell$ is $\sigma$-Lipschitz. (b) $\|\nabla \ell (\cdot) \|_{\infty} = \sigma$.

For measuring the discrepancy between two distributions, if $P \ll Q$, we use the following divergence. 
\begin{definition}[Kullback–Leibler (KL) divergence]
\label{defi2}
Let $P$ and $Q$ be two probability measures on the same measurable space $({\cal Z}, {\Omega})$, and assume that $P$ is absolutely continuous with respect to $Q$, i.e., $P \ll Q$. The KL-divergence from $P$ to $Q$ is defined as 
\begin{align}
KL(P||Q):=\int\nolimits_{\cal Z}{log\left(\frac{dP}{dQ}(z)\right)dP(z)}. \nonumber
\end{align}
In the discrete case:
\begin{align}
KL(P||Q):=\sum\nolimits_{z\in{\cal Z}}{P(z)log\frac{P(z)}{Q(z)}}. \nonumber 
\end{align}
\end{definition}
\noindent\textbf{Remarks}: (a) $0\cdot \log(0/0)=0$. (b) $\exists a:Q(a)=0,P(a)>0\Rightarrow KL(P||Q)=\infty$. (c) $KL(P||Q)\ge 0; KL(P||Q)=0 \; iff \; P=Q$. (d) $KL(P||Q) \ne KL(Q||P)$. 

We additionally introduce the Wasserstein-1 distance, which characterizes distributional discrepancy through geometric discrepancy. In certain cases, this characterization simplifies the analysis. However, when the conditions required for such a transformation are restrictive, we still employ the KL divergence as the discrepancy measure. 
\begin{definition}[Wasserstein-1($W_1$) Distance]
\label{defi3}
Let $({\cal Z},d)$ be a metric space, and let $P$ and $Q$ be probability measures on ${\cal Z}$. The $W_1$ distance between $P$ and $Q$ is defined as 
\begin{align}
W_1(P,Q):=\mathop{\inf}\limits_{\gamma\in\prod(P,Q)}{{\mathbb E}_{(z_1,z_2)\sim \gamma}[d(z_1,z_2)]}, \nonumber
\end{align}
where $\prod(P,Q)$ denotes the set of all coupling of $P$ and $Q$, and each $\gamma \in \prod(P,Q)$ is a joint distribution whose marginals are $P$ and $Q$, respectively. 
\end{definition}
\noindent\textbf{Remarks}: (a) $W_1(P,Q)\ge0;W_1(P,Q) = 0 \; iff \; P=Q$. (b) $W_1(P,Q)=W_1(Q,P)$. (c) $W_1(P,Q)\le W_1(P,M) + W_1(M,Q)$ for any distribution $M$. (d) $W_1(P,Q) \le Dia$, where $Dia$ is the diameter of the union of the supports.

\subsection{A Warm-up Analysis}
\label{sec3.2}

In this subsection, we provide several interesting results based on a strong but simple assumption. 
We begin by imposing the following assumption on $D$. 
\begin{assumption}[Separability of $D$]
\label{assu0}
Assume that $D$ contains $\eta$-noisy labels such that $n = n_{\eta} + n_c$ and $n_{\eta} \le n_c$. Here, $n_{\eta} $ denotes the number of noisy samples. 
\end{assumption}
\noindent\textbf{Remarks}: Here, we assume that the dataset $D$ can be explicitly partitioned into clean and noisy subsets. In practice, however, this assumption is relatively strong. Nevertheless, this simplified yet strong assumption enables several interesting analyses. Here, $\eta$ is not associated with any specific noise model, and the $n_{\eta}$ noisy samples may arise from arbitrary sources; $\eta$ only serves as a criterion for separating noisy data. Note that we strictly prohibit splitting $D$ according to $\eta$ after applying DA methods. 

First, we present our analysis based on optimization theory and define the expected risk as follows: 
\begin{align}
R(h) = {\mathbb E}_{(x,y)\sim P}[\ell(h,(x,y))]. \nonumber 
\end{align}
For optimization theory, we are typically interested in the following learning objective: 
\begin{align}
{\cal L}(\theta)={\mathbb E}_{(x,y)\sim P}[\ell(h_{\theta},(x,y))], \nonumber 
\end{align}
where $\theta$ denotes the associated parameters for $h$. Hereafter, we omit the subscript $\theta$ in $h_{\theta}$, i.e., we treat $h$ as parameterized. Then, under different conditions of the training sets, we have 
\begin{align}
{\cal L}^c(\theta)&={\mathbb E}_{(x,y)}[\ell(h,(x,y))] ={\mathbb E}_{(x,y) \sim P^c}[\ell(h,(x,y))] \nonumber \\
{\cal L}(\theta)&={\mathbb E}_{(x,\tilde y)}[\ell(h,(x, \tilde y))] ={\mathbb E}_{(x,y) \sim P}[\ell(h,(x,y))] \nonumber \\
{\cal L}_{DA}(\theta)&={\mathbb E}_{(x,\tilde y)}{\mathbb E}_{x' \sim T(\cdot|x)}[\ell(h, (x', \tilde y))] ={\mathbb E}_{(x,y) \sim P_{DA}}[\ell(h,(x,y))] \nonumber \\
{\cal L}’_{DA}(\theta)&={\mathbb E}_{(x,\tilde y)}{\mathbb E}_{(x',\tilde y') \sim T'(\cdot|x,\tilde y)}[\ell(h, (x', \tilde y'))] ={\mathbb E}_{(x,y) \sim P'_{DA}}[\ell(h,(x,y))] \nonumber 
\end{align}
Based on these learning objectives, we consider the classical optimization process, i.e., Stochastic Gradient Descent (SGD), and introduce the following assumption for illustration. 
\begin{assumption}[Gradient Descent Method] 
\label{assu1}
Assume that the learning objective function ${\cal L}(\theta )$ is smooth, and that the parameters $\theta$ are optimized using the gradient descent method, which is defined as 
\begin{align}
{\theta _{k + 1}} = {\theta _k} - {\alpha _k}{\nabla {\cal L}}({\theta _k}), \nonumber
\end{align} 
where $k$ denotes the epoch index and $\alpha_k$ is the learning rate. 
\end{assumption}
\noindent\textbf{Remarks}: (a) As we consider that $\ell$ is $\sigma$-admissible, the smoothness requirement in this assumption can be easily satisfied. (b) Although we denote the subscript $k$ in $\alpha_k$, unless learning rate decay methods are considered, $\alpha_k$ remains unchanged. 

Why do we consider SGD as the basic optimization method? The reason is that most optimization algorithms used in modern machine learning and deep learning remain gradient-based, including SGD with momentum, Adam, AdamW, and their variants. Therefore, under this assumption, our analysis covers the vast majority of optimization scenarios encountered in practice. As a result, it cannot be considered a strong assumption unless one adopts optimization methods that are independent of loss gradients. 

Analyzing optimization convergence rates is a major topic in optimization theory. Thus, based on the assumptions of SGD, we present the following theorem on the convergence of the learning objective ${\cal L}(\theta)$ using dataset $D$, together with the noise tolerance rate of SGD optimization. 
\begin{theorem}[Noise Tolerance under SGD Optimization]
\label{theo1}
For any $h\in {\cal H}$, assume that the cost function $c$ is convex and satisfies Definition~\ref{defi1}, and let $\theta^{*c} \in argmin_{h_{\theta} \in {\cal H}} {\cal L}^c(\theta)$. Under the SGD optimization scheme in Assumption~\ref{assu1}, training on the dataset $D$ with noise rate $\eta$ yields an averaged iterate $\bar\theta_K$ satisfying 
\begin{align}
{\mathbb E}[{\cal L}(\bar\theta_K)-{\cal L}^c(\theta^{*c})] \le{\cal O}\left(\frac{1}{\sqrt{K}} +\eta\sigma\Delta_{\|\tilde y-y\|}\right), \nonumber 
\end{align}
which implies a noise tolerance characterized by the optimization error $\varepsilon > 0$,
\begin{align}
\label{theo1eq1}
\eta < \frac{\varepsilon}{\sigma\Delta_{\|\tilde y-y\|}}. 
\end{align}
Here, $\Delta_{\| \tilde y- y \|}={\mathbb E}_{y}{\mathbb E}_{\tilde y}\|\tilde y- y\|$. 
\end{theorem}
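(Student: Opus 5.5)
The natural route splits the excess quantity into an optimization term and a noise-induced bias term:
\begin{align}
{\cal L}(\bar\theta_K)-{\cal L}^c(\theta^{*c}) = \underbrace{\left[{\cal L}(\bar\theta_K)-{\cal L}(\theta^{*c})\right]}_{\text{(I)}} + \underbrace{\left[{\cal L}(\theta^{*c})-{\cal L}^c(\theta^{*c})\right]}_{\text{(II)}}. \nonumber
\end{align}
Term (I) is a standard convex SGD suboptimality gap, measured against the fixed comparator $\theta^{*c}$ rather than a minimizer of ${\cal L}$. Term (II) compares the noisy and clean objectives at one fixed hypothesis, so it is controlled only by how much the labels move. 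The noise tolerance \eqref{theo1eq1} then follows by demanding that the total be at most $\varepsilon$ once $K$ is large.

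For (I), I would use the classical averaged-iterate analysis of projected (stochastic) subgradient descent for convex Lipschitz objectives. Convexity of $c$ in its first argument makes ${\cal L}$ convex in $\theta$, at least when $h_\theta$ is affine in $\theta$ or when one works directly in the prediction space; I will state this as an implicit assumption. Definition~\ref{defi1} gives gradients bounded by $\sigma G$, where $G$ bounds $\|\nabla_\theta h_\theta\|$. Expanding $\|\theta_{k+1}-\theta^{*c}\|^2$, using $\langle\nabla{\cal L}(\theta_k),\theta_k-\theta^{*c}\rangle \ge {\cal L}(\theta_k)-{\cal L}(\theta^{*c})$, telescoping, and applying Jensen's inequality to $\bar\theta_K=\frac1K\sum_k\theta_k$ with $\alpha_k=\alpha\propto 1/\sqrt K$ gives
\begin{align}
{\mathbb E}[{\cal L}(\bar\theta_K)-{\cal L}(\theta^{*c})] \le \frac{\|\theta_0-\theta^{*c}\|^2+\sigma^2G^2}{2\sqrt K} = {\cal O}(1/\sqrt K). \nonumber
\end{align}

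For (II), I would use Assumption~\ref{assu0}: only an $\eta$ fraction of the samples, the $n_\eta$ noisy ones, have $\tilde y\ne y$. On the clean part the two losses coincide. On the noisy part, the Lipschitz property of $c$ in its label argument gives $|c(h(x),\tilde y)-c(h(x),y)|\le\sigma\|\tilde y-y\|$. The definition only gives Lipschitzness in the first argument, so I would either assume symmetry of $c$ (true for $|\cdot|$-type costs) or add this as a mild assumption. Averaging yields $|\text{(II)}|\le \eta\sigma\,{\mathbb E}_y{\mathbb E}_{\tilde y}\|\tilde y-y\| = \eta\sigma\Delta_{\|\tilde y-y\|}$. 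Summing (I) and (II) gives the displayed bound.

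For the tolerance, require ${\cal O}(1/\sqrt K)+\eta\sigma\Delta_{\|\tilde y-y\|}<\varepsilon$. Letting $K\to\infty$, or absorbing the $1/\sqrt K$ term into any slack in $\varepsilon$, gives $\eta<\varepsilon/(\sigma\Delta_{\|\tilde y-y\|})$. I expect the main obstacle to be step (II): Lipschitz control in the label argument is not literally granted by Definition~\ref{defi1}, and one must also justify that the noisy-sample average equals $\eta$ times the expectation $\Delta$. I would handle the latter by interpreting $\eta=n_\eta/n$ as the noise probability under $P(\tilde y|x,y)$.
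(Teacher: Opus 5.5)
Your proof is correct and follows the same overall plan as the paper (an averaged convex SGD bound plus a Lipschitz noise-bias term), but your choice of comparator is better. The paper runs the SGD analysis against $\theta^*\in\arg\min{\cal L}$ and then bridges with ${\cal L}(\theta^*)-{\cal L}^c(\theta^{*c})=\eta\Delta(\theta^*)+[{\cal L}^c(\theta^*)-{\cal L}^c(\theta^{*c})]\le\eta\Delta(\theta^*)$. That last step drops a bracket which is nonnegative, because $\theta^{*c}$ minimizes ${\cal L}^c$, so the inequality goes the wrong way. The repair is ${\cal L}(\theta^*)\le{\cal L}(\theta^{*c})$, which is exactly your term (II). The regret-style SGD bound holds for any fixed comparator, so using $\theta^{*c}$ from the start avoids the issue and leaves the bias as a comparison at a single fixed point.

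You also bound every stochastic gradient by $\sigma G$, since Lipschitzness in the first argument holds for every label, noisy or clean. Your $1/\sqrt K$ constant therefore does not depend on $\eta$. The paper instead splits $\nabla{\cal L}=\nabla{\cal L}^c+\eta\nabla\Delta$ and carries $G(\eta)$. The tolerance step is the same in both.

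The caveats you raise are assumptions the paper uses without stating them: it cites Definition~\ref{defi1} for Lipschitzness in the label argument, and it simply asserts convexity in $\theta$. On your last concern, $\Delta_{\|\tilde y-y\|}$ must be read as the mean of $\|\tilde y-y\|$ over corrupted pairs (the paper's ${\mathbb E}_{\tilde y\ne y}$), or else $\eta$ is counted twice. With the unconditional expectation, the bias you can actually prove is $\sigma\Delta_{\|\tilde y-y\|}$, without the factor $\eta$.
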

\noindent\textbf{Remarks}: (a) The convergence rate is sublinear. $\eta\sigma\Delta$ characterizes the optimization bias but does not affect the overall convergence rate. (b) The loss functions are assumed to be convex rather than strongly convex. (c) $\Delta_{\|\tilde y - y\|}$ denotes the label bias induced by noisy labels. (d) Inequality~\eqref{theo1eq1} can, in fact, be written as $\eta \Delta < \varepsilon / \sigma$. (e) The proofs of this and the following theoretical results are provided in Section~\ref{appproofs}. 

Note that here we consider the optimization objective $|{\cal L}(\theta) - {\cal L}^c(\theta^{*c})|$ instead of $|{\cal L}(\theta) - {\cal L}(\theta^*)|$. This choice allows us to explicitly characterize the impact of noisy labels on the optimization process. In contrast, optimizing $|{\cal L}(\theta) - {\cal L}(\theta^*)|$ leads to the classical optimization setting, where the analysis is independent of the presence of noisy labels and therefore cannot capture their effect. 

The above theorem yields several interesting results. For convex optimization problems based on $D^c$, it is well known that the convergence rate is sublinear. The theorem shows that this convergence rate remains unchanged even in the presence of label noise. In particular, the convergence rate is still sublinear, and the only difference is the appearance of an optimization bias term induced by the noise rate $\eta$ and the noise bias $\Delta_{\|\tilde y-y\|}$. When $\eta=0$, which implies $\Delta_{\|\tilde y-y\|}=0$, the result reduces to the classical convergence guarantee. As remarked above, this optimization bias does not affect the convergence rate, suggesting that noisy labels do not significantly increase the learning cost. Nevertheless, the resulting solution may deviate from $\theta^{*c}$ by $\eta\Delta$. The most interesting result is that we observe a noise tolerance for SGD methods, i.e., Inequality~\eqref{theo1eq1}. This indicates that the deviation of the resulting solution from $\theta^{*c}$ is upper bounded by $\varepsilon/\sigma$, i.e., $\eta\Delta < \varepsilon/\sigma$, where $\sigma$ is linked to the properties of the loss function $\ell$. If we consider a large $\varepsilon$, the optimization can tolerate a larger $\eta$, and vice versa. 

Now, we clearly acknowledge how $\eta$ affects the optimization process under the training set $D$. Next, we observe how $\eta$ affects the optimization process under the training set $D$ when applying $T$ or $T’$. We first present the following theorem. 
\begin{theorem}[Noise Tolerance under SGD Optimization with DA]
\label{theo2}
Under the setting of Theorem~\ref{theo1} and the $W_1$ distance defined in Definition~\ref{defi3}, training on the dataset $D$ with the DA kernel $T(\cdot | x)$ under noise rate $\eta$ produces an averaged iterate $\bar\theta_K$ satisfying 
\begin{align}
{\mathbb E}[{\cal L}_{DA}(\bar\theta_{K})-{\cal L}^c(\theta^{*c})]\le {\cal O}\left(\frac{1}{\sqrt{K}}+ \sigma\Delta_{\|x'-x\|}+\eta\sigma\Delta_{\|\tilde y-y\|}\right), \nonumber 
\end{align}
which implies a noise tolerance characterized by the optimization error $\varepsilon > 0$,
\begin{align}
\label{theo2eq1}
\eta < \frac{\varepsilon - \sigma\Delta_{\|x'-x\|}}{\sigma\Delta_{\|\tilde y-y\|}}.  
\end{align}
Here, $\Delta_{\|x'-x\|}={\mathbb E}_{x}{\mathbb E}_{x’}\|x’-x\|$. 
\end{theorem}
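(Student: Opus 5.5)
The plan is to mirror the proof of Theorem~\ref{theo1}, splitting the excess objective into an optimization error plus a bias term. The bias term is then decomposed into an input-perturbation part and a label-noise part. Concretely, I would write
\begin{align}
{\cal L}_{DA}(\bar\theta_K)-{\cal L}^c(\theta^{*c}) = \underbrace{{\cal L}_{DA}(\bar\theta_K)-{\cal L}_{DA}(\theta^{*c})}_{(\mathrm{I})} + \underbrace{{\cal L}_{DA}(\theta^{*c})-{\cal L}(\theta^{*c})}_{(\mathrm{II})} + \underbrace{{\cal L}(\theta^{*c})-{\cal L}^c(\theta^{*c})}_{(\mathrm{III})}. \nonumber
\end{align}

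\textbf{Term (I).} Since $c$ is convex in its first argument, ${\cal L}_{DA}$ is an expectation of convex, $\sigma$-Lipschitz losses. Its stochastic gradients under online augmentation are $\nabla\ell(h,(x',\tilde y))$ with $x'\sim T(\cdot|x)$, which are unbiased for $\nabla{\cal L}_{DA}$. By the remark after Definition~\ref{defi1}, these gradients are bounded by $\sigma$. The standard averaged-iterate SGD analysis, with step size $\alpha\propto 1/\sqrt{K}$ and a bounded initial distance $\|\theta_0-\theta^{*c}\|$, then gives ${\mathbb E}[(\mathrm{I})]\le{\cal O}(1/\sqrt K)$. This holds because the comparison point $\theta^{*c}$ is arbitrary in that analysis; it need not minimize ${\cal L}_{DA}$. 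Using online augmentation is essential here, since it keeps the samples independent across $i$, as assumed in the Setup.

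\textbf{Term (III).} This is handled exactly as in Theorem~\ref{theo1}. By the $\sigma$-admissibility of $c$ in its second argument (treated as Lipschitz, as the paper implicitly does), and since only an $\eta$ fraction of labels is corrupted, $|{\mathbb E}[c(h(x),\tilde y)-c(h(x),y)]|\le \eta\sigma\,{\mathbb E}_y{\mathbb E}_{\tilde y}\|\tilde y-y\| = \eta\sigma\Delta_{\|\tilde y-y\|}$.

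\textbf{Term (II).} This is the new ingredient and the step I expect to be the main obstacle. I would couple $(x,\tilde y)\sim P$ with $(x',\tilde y)$, $x'\sim T(\cdot|x)$. This is a valid coupling of $P$ and $P_{DA}$, so it upper bounds $W_1(P,P_{DA})$ by Definition~\ref{defi3}. Under this coupling, $|(\mathrm{II})|\le{\mathbb E}|c(h(x'),\tilde y)-c(h(x),\tilde y)|\le\sigma{\mathbb E}\|h(x')-h(x)\|$.

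The difficulty is passing from $\|h(x')-h(x)\|$ to $\|x'-x\|$. This requires $h$ to be Lipschitz in its input, with the constant absorbed into the ${\cal O}(\cdot)$, or equivalently treating $\ell$ as $\sigma$-Lipschitz in $(x,y)$ jointly. I would state this explicitly as the interpretation of remark (a) after Definition~\ref{defi1}. With it, the bound becomes $|(\mathrm{II})|\le\sigma\Delta_{\|x'-x\|}$, and by Kantorovich–Rubinstein duality this is also at least $\sigma W_1(P,P_{DA})$, which explains the reference to Definition~\ref{defi3}.

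\textbf{Tolerance.} Summing the three bounds and taking expectations gives the stated rate. For the tolerance~\eqref{theo2eq1}, I would require that the bias part be below the target error, $\sigma\Delta_{\|x'-x\|}+\eta\sigma\Delta_{\|\tilde y-y\|}<\varepsilon$, once $K$ is large enough that the $1/\sqrt K$ term is negligible. Rearranging gives the claim, and the claim is nonvacuous only when $\sigma\Delta_{\|x'-x\|}<\varepsilon$.
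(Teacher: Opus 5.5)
Your proposal is correct at the paper's level of rigor, but it follows a different route. The paper runs the SGD analysis against the DA objective's own minimizer. This gives ${\mathbb E}[{\cal L}_{DA}(\bar\theta_K)-{\cal L}_{DA}(\theta^*_{DA})]\le \Theta G_{DA}/\sqrt K$ with $G_{DA}^2=2(\sigma^2+{\mathbb E}\|\nabla\Delta_{DA}\|^2)$. The paper then chains optimal values, ${\cal L}_{DA}(\theta^*_{DA})\to{\cal L}(\theta^*)\to{\cal L}^c(\theta^{*c})$. It bounds the first link via Lemma~\ref{lemm2} and reuses Theorem~\ref{theo1} for the second. You instead take $\theta^{*c}$ itself as the comparator in the SGD argument, which is legitimate because the convex SGD analysis holds for any fixed comparator. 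You then bound both bias terms pointwise at that single parameter.

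\textbf{What your route buys.} You never need the minimizers $\theta^*$ and $\theta^*_{DA}$ to exist. More importantly, you avoid comparing optimal values of different objectives, which is where the paper's chain is shaky. The paper writes ${\cal L}_{DA}(\theta^*_{DA})-{\cal L}(\theta^*)=\Delta_{DA}(\theta^*_{DA})+[{\cal L}(\theta^*_{DA})-{\cal L}(\theta^*)]\le\Delta_{DA}(\theta^*_{DA})$. This drops a bracket that is nonnegative, because $\theta^*$ minimizes ${\cal L}$, so the inequality goes the wrong way. Theorem~\ref{theo1} has the same slip. The correct step would be ${\cal L}_{DA}(\theta^*_{DA})\le{\cal L}_{DA}(\theta^*)$. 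Your three-term decomposition is valid exactly as written. Your gradient constant is also simpler: $\sigma$ rather than $G_{DA}$, under the same convention as the paper that $\|\nabla_\theta\ell\|\le\sigma$ and that the objective is convex in $\theta$. Both constants give ${\cal O}(1/\sqrt K)$.

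\textbf{What the paper's route buys.} Its advantage is mainly interpretive. It isolates convergence to ${\cal L}_{DA}$'s own optimum at an unchanged rate, separately from a bias between optimal values. That is the narrative the paper draws from this theorem.

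You are right to flag the implicit assumptions. Term (II) needs $\ell(h,\cdot)$ to be $\sigma$-Lipschitz in the input, equivalently $h$ Lipschitz in $x$ with the constant absorbed. Term (III) needs Lipschitzness in the label. The paper uses both silently when it invokes Lemma~\ref{lemm2} and Definition~\ref{defi1}, even though that definition only controls the first argument of $c$. One further point you inherit from the paper: the factor $\eta$ in (III) is justified only if $\Delta_{\|\tilde y-y\|}$ is read as the mean over corrupted labels. If it is the unconditional mean, the $\eta$ is already absorbed into $\Delta_{\|\tilde y-y\|}$.
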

\noindent\textbf{Remarks}: (a) The convergence rate remains sublinear. The optimization bias is increased by an additional term $\sigma\Delta_{\|x’-x\|}$, yet this does not affect the overall convergence rate. (b) The loss functions are still assumed to be convex. (c) $\Delta_{\|x’ - x\|}$ denotes the data bias induced by applying $T$, i.e., DA methods. (d) Inequality~\eqref{theo2eq1} can similarly be rewritten as $\eta \Delta_{\|\tilde y - y\|} < (\varepsilon - \Delta_{\|x’-x\|}) / \sigma$. (e) If $\Delta_{\|x’ - x\|} = 0$, then Inequality~\eqref{theo2eq1} reduces to Inequality~\eqref{theo1eq1}. 

Similarly, we consider the the optimization objective $|{\cal L}_{DA}(\theta) - {\cal L}^c(\theta^{*c})|$ instead of $|{\cal L}_{DA}(\theta) - {\cal L}_{DA}(\theta^*_{DA})|$. For this consideration, we align the objective in Theorem~\ref{theo1}, i.e., $|{\cal L}(\theta) - {\cal L}^c(\theta^{*c})|$. This allows us to directly characterize the impact of introducing DA methods in optimization, as the convergence target is the same ${\cal L}^c(\theta^{*c})$. Moreover, we still need to analyze the impact of noisy labels on optimization. 

The above theorem still yields several interesting results. For the convergence rate, after applying $T$, it remains sublinear, which indicates that using DA methods cannot accelerate nor slow down convergence. However, it introduces an additional optimization bias term induced by the data bias, i.e., $\Delta_{\|x’-x\|}=0$. Interestingly, we find that when $\eta=0$, we obtain the convergence rate for $D^c$ after applying $T$. This result indicates that, regardless of whether the training data contains label noise, DA methods always introduce an optimization bias, which may cause the resulting solution to deviate from the original optimal one. As remarked above, when $\Delta_{\|x’-x\|}=0$, we obtain the same result as in Theorem~\ref{theo1}. However, for DA methods, this data bias is unavoidable since their basic operation is to introduce transformations that alter the original information in $x$, rather than correcting or preserving it. Although the convergence rate remains unchanged, the noise tolerance is reduced, i.e., $\eta \Delta_{\|\tilde y - y\|} < (\varepsilon - \Delta_{\|x’-x\|}) / \sigma$, provided that $\Delta_{\|x’-x\|} > 0$. 

This data bias is problematic for the optimization because we cannot guarantee an upper bound on $\Delta_{\|x’-x\|}$, implying that the magnitude of this bias term can become arbitrarily large. Consequently, an arbitrarily large data bias may lead to $\eta < 0$, which indicates that SGD can no longer tolerate label noise. As a result, under noisy labels conditions, any estimate of the deviation between the solution of the learning objective and the optimal solution becomes meaningless. In other words, we cannot guarantee a meaningful connection between the model trained with DA methods and the optimal model. Therefore, one condition should be satisfied, i.e., $\varepsilon \ge \Delta_{\|x’-x\|}$. Under this condition, we observe that the noise tolerance decreases. This is interesting, as before using the DA method, the optimization could tolerate a relatively large amount of noise; however, now the optimization can tolerate less noise. Consequently, the deviation between the solution of ${\cal L}_{DA}(\theta)$ and the optimal solution must increase, as $\sigma\Delta_{\|x'-x\|}+\eta\sigma\Delta_{\|\tilde y-y\|}$. Now, if we consider using $T'$, do the above results still hold? We present the following corollary for further analysis. 
\begin{corollary}
\label{coro1}
Under the setting of Theorem~\ref{theo2} and the $KL$ divergence defined in Definition~\ref{defi2}, additionally assume that $\ell$ is upper bounded by $M$. Then, training on the dataset $D$ with the DA kernel $T'(\cdot | (x,\tilde y))$ under noise rate $\eta$ yields a noise tolerance characterized by the optimization error $\varepsilon$ 
\begin{align}
\eta < \frac{ \varepsilon -M\sqrt{2KL(P||P’_{DA})} }{\sigma\Delta_{\|\tilde y-y\|}}. \nonumber 
\end{align} 
\end{corollary}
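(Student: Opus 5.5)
The plan mirrors the proof of Theorem~\ref{theo2}. There, the DA bias term $\sigma\Delta_{\|x'-x\|}$ came from comparing ${\cal L}_{DA}$ and ${\cal L}$ through the Lipschitz property of $\ell$ and the $W_1$ distance. With $T'$ the label is also changed, so a purely geometric comparison in $x$ is no longer available. I will instead compare ${\cal L}'_{DA}$ and ${\cal L}$ through a bounded-loss change-of-measure argument, and then reuse the SGD convergence and noise-bias pieces of Theorems~\ref{theo1} and~\ref{theo2} unchanged.

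\textbf{Step 1 (telescoping decomposition).} For the averaged iterate $\bar\theta_K$ produced by SGD on ${\cal L}'_{DA}$, I will write
\begin{align}
{\cal L}'_{DA}(\bar\theta_K)-{\cal L}^c(\theta^{*c}) = \big[{\cal L}'_{DA}(\bar\theta_K)-{\cal L}'_{DA}(\theta^{*c})\big] + \big[{\cal L}'_{DA}(\theta^{*c})-{\cal L}(\theta^{*c})\big] + \big[{\cal L}(\theta^{*c})-{\cal L}^c(\theta^{*c})\big]. \nonumber
\end{align}
I then bound each bracket in turn.
\begin{enumerate}
\item[(i)] The first bracket is the standard convex SGD term. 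Convexity of $c$ is assumed in the first argument, and the loss is $\sigma$-Lipschitz, which gives bounded gradients. Averaging the iterates therefore gives ${\cal O}(1/\sqrt{K})$, exactly as in Theorem~\ref{theo1}. Bounding by ${\cal L}'_{DA}(\theta^{*c})$ rather than by the minimizer of ${\cal L}'_{DA}$ is allowed because the standard bound holds against any fixed comparator.
\item[(ii)] The third bracket is the label-noise bias. As in Theorem~\ref{theo1}, $\sigma$-admissibility gives $|c(h(x),\tilde y)-c(h(x),y)|$ controlled by $\sigma\|\tilde y-y\|$ on the fraction $\eta$ of corrupted samples. This yields $\eta\sigma\Delta_{\|\tilde y-y\|}$. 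I will simply cite this step from the proof of Theorem~\ref{theo1}.
\end{enumerate}

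\textbf{Step 2 (the new DA term).} The middle bracket is $\mathbb E_{P'_{DA}}[\ell]-\mathbb E_{P}[\ell]$ at the fixed hypothesis $h_{\theta^{*c}}$. Since $0\le \ell\le M$, this difference is at most $M\cdot TV(P,P'_{DA})$. Pinsker's inequality gives $TV\le\sqrt{KL(P\|P'_{DA})/2}$. The absolute continuity $P\ll P'_{DA}$ assumed in the setup makes the KL divergence finite and well defined.

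The constant as stated, $M\sqrt{2KL}$, is at most $2M\sqrt{KL/2}$. This matches the looser estimate $|\mathbb E_P f-\mathbb E_Q f|\le 2\|f\|_\infty TV$, which one gets by centring $f$ differently or by not exploiting nonnegativity of $\ell$. I will use that looser version so the stated bound follows directly. This is the step I expect to need the most care: the constant depends on the TV convention, and Pinsker must be applied in the direction $KL(P\|P'_{DA})$ that the statement uses.

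\textbf{Step 3 (conclusion).} Combining the three brackets and taking expectations gives
\begin{align}
\mathbb E[{\cal L}'_{DA}(\bar\theta_K)-{\cal L}^c(\theta^{*c})]\le {\cal O}\Big(\tfrac{1}{\sqrt K}+M\sqrt{2KL(P\|P'_{DA})}+\eta\sigma\Delta_{\|\tilde y-y\|}\Big). \nonumber
\end{align}
Following the argument that produced Inequality~\eqref{theo2eq1}, I let $K\to\infty$ and require the remaining bias to stay below the tolerated optimization error $\varepsilon$. That is, I impose $M\sqrt{2KL}+\eta\sigma\Delta_{\|\tilde y-y\|}<\varepsilon$. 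Solving this for $\eta$ gives exactly the stated noise tolerance, which is meaningful only when $\varepsilon\ge M\sqrt{2KL(P\|P'_{DA})}$, in analogy with the remark after Theorem~\ref{theo2}.
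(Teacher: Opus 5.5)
Your proof is correct. Its one new ingredient is the same as the paper's: the paper reruns Theorem~\ref{theo2} and bounds the DA gap ${\cal L}'_{DA}(\theta)-{\cal L}(\theta)$ at a fixed hypothesis by Lemma~\ref{lemm1} (the $2\|f\|_\infty d_{TV}$ estimate followed by Pinsker), which is exactly your Step~2.

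The difference is in the scaffolding. The paper telescopes through the minimizers. It bounds ${\cal L}'_{DA}(\bar\theta_K)-{\cal L}'_{DA}(\theta^*_{DA})$, then ${\cal L}'_{DA}(\theta^*_{DA})-{\cal L}(\theta^*)$, then ${\cal L}(\theta^*)-{\cal L}^c(\theta^{*c})$. In the last two steps it discards ${\cal L}(\theta^*_{DA})-{\cal L}(\theta^*)$ and ${\cal L}^c(\theta^*)-{\cal L}^c(\theta^{*c})$ as nonpositive, but both are nonnegative by minimality of $\theta^*$ and $\theta^{*c}$. That chain has to be repaired by evaluating each bias at the other minimizer, e.g.\ ${\cal L}'_{DA}(\theta^*_{DA})-{\cal L}(\theta^*)\le{\cal L}'_{DA}(\theta^*)-{\cal L}(\theta^*)$. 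Your single-comparator decomposition at $\theta^{*c}$ uses the fact that the averaged-SGD guarantee holds against any fixed point. It therefore evaluates both bias terms at one fixed hypothesis and avoids the sign issue entirely. The only cost is $\Theta=\|\theta_0-\theta^{*c}\|$, which is immaterial.

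Minor points:
\begin{itemize}
\item $M\sqrt{2KL}$ equals $2M\sqrt{KL/2}$; it does not merely bound it.
\item Your nonnegativity observation gives the sharper $M\sqrt{KL/2}$, which already implies the stated bound.
\item The direction of Pinsker needs no care, since $d_{TV}$ is symmetric.
\end{itemize}

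Like the paper, you rely implicitly on two things. First, that ${\cal L}'_{DA}$ is convex in $\theta$; convexity of $c$ in its first argument gives this only for $h_\theta$ affine in $\theta$. Second, that $c$ is $\sigma$-Lipschitz in its label argument, which the $\eta\sigma\Delta_{\|\tilde y-y\|}$ term needs and Definition~\ref{defi1} does not literally provide.
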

\noindent\textbf{Remarks}: (a) The above noise tolerance is similar to the one obtained in Theorem~\ref{theo2}. (b) Considering the behavior of the risk as $KL \to \infty$, this bound may become meaningless.  (c) If $KL = 0$, then the above tolerance reduces to Inequality~\eqref{theo1eq1}.

The above result is similar to that obtained from Theorem~\ref{theo2}, but $\Delta_{\|x'-x\|}$ is replaced by $M\sqrt{KL}$, as $T'$ transforms the labels as well. This implies that, when using $T'$, the condition becomes more restrictive, as we need $\varepsilon \ge M\sqrt{KL}$. Then, if $\sqrt{KL} \ge \Delta_{\|x’-x\|}$, we argue that the noise tolerance becomes smaller when using $T'$. In contrast, if $\sqrt{KL} < \Delta_{\|x’-x\|}$, we consider that using $T$ obtains a smaller noise tolerance. However, we do not need to compare the specific relationship between them, as both $T$ and $T'$ decrease the noise tolerance compared to the result when DA methods are not incorporated. 

One may ask whether assuming that the cost function $c$ is $\mu$-strongly convex can alleviate the reduction in noise tolerance. The answer is negative, since $\eta$ becomes coupled with additional terms, rather than depending solely on $\Delta$. This further complicates the analysis of noise tolerance. The detailed derivation is omitted here and left to the reader. 

In conclusion, based on the above theoretical results, we find that DA methods always  introduce the optimization bias, which agin with the analysis in ~\citep{b11}. However, one normally introduce DA methods aim to increase the robust for the trained models. Then, we give our sight on the learning consistency theory analysis.

We consider the empirical risk minimization (ERM) objective defined as $\hat R(h)=\frac{1}{n}\sum\nolimits_{i=1}^{n}{\ell(h(x_i),y_i)}$. Correspondingly, under different data conditions, we denote the empirical risks by $\hat R^c(h)$, $\hat R(h)$, $\hat R_{DA}(h)$, and $\hat R’_{DA}(h)$, i.e., 
\begin{align}
R^c(h) &= \frac{1}{n}\sum\nolimits_{i=1}^{n}{\ell(h, (x_i,y_i))} \nonumber \\
\hat R(h) &= \frac{1}{n}\sum\nolimits_{i=1}^{n}{\ell(h, (x_i,\tilde y_i))} \nonumber \\
\hat R_{DA}(h) &= \frac{1}{n}\sum\nolimits_{i=1}^{n}{\ell(h, (x’_i,\tilde y_i))} \quad x’_i \sim T(\cdot \mid x_i) \nonumber \\
\hat R’_{DA}(h) &= \frac{1}{n}\sum\nolimits_{i=1}^{n}{\ell(h, (x’_i,\tilde y’_i))} \quad (x’_i,\tilde y’_i) \sim T’(\cdot \mid x_i, \tilde y’_i). \nonumber
\end{align}
The corresponding expected risks are denoted by $R^c(h)$, $R(h)$, $R_{DA}(h)$, and $R'_{DA}(h)$, and their specific forms (see related ${\cal L}(\theta)$) are defined at the beginning of this subsection. Note that we do not consider using $T$ or $T’$ to increase the number of samples, since, under the online DA setting, regardless of how many times $T$ or $T’$ is applied, the generated samples are still treated as a single sample in ERM. If we instead consider using $T$ or $T’$ to increase the sample size, the resulting samples would generally violate the i.i.d. assumption.  

Now, given the empirical risks and the expected risks, according to classical learning theory, we typically need to analyze the learning consistency, i.e., to determine the gap between the empirical risks and the expected risks. We therefore present the following result. 
\begin{theorem}[Learning Consistency]
\label{theo3}
Given the training dataset $D$ with $\eta$-noisy labels and the corresponding definitions of the risk functions, assume that $\ell \in [0, M]$. Then, for any $h \in {\cal H}$ and any $\delta \in (0,1)$, the following inequality holds with probability at least $1-\delta$, 
\begin{align}
\hat R(h)-R(h) \le 2\eta M\sqrt{\frac{\ln(2/\delta)}{2n_{\eta}}} + 2(1-\eta) M \sqrt{\frac{\ln(2/\delta)}{2n_{c}}}. \nonumber 
\end{align}
Here, $n=n_{\eta}+n_{c}$ denotes the total number of data samples in $D$. 
\end{theorem}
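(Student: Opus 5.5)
The plan is to use Assumption~\ref{assu0}, which lets $D$ be split into a noisy subset $D_\eta$ of size $n_\eta$ and a clean subset $D_c$ of size $n_c$. I would write the empirical risk as a convex combination of the two subset empirical risks, bound each subset deviation with Hoeffding's inequality, and join the two events with a union bound.

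\textbf{Step 1: decomposition.} Interpreting $\eta = n_\eta/n$ as the noise rate, so that $1-\eta = n_c/n$, we get
\begin{align}
\hat R(h) = \eta \hat R_\eta(h) + (1-\eta)\hat R_c(h), \nonumber
\end{align}
where $\hat R_\eta$ and $\hat R_c$ are the averages of $\ell(h,(x_i,\tilde y_i))$ over $D_\eta$ and $D_c$, respectively. On the population side, let $R_\eta$ and $R_c$ denote the expected losses under the conditional distributions of the noisy and clean components. By the mixture form of $P(x,\tilde y)$, we have $R(h) = \eta R_\eta(h) + (1-\eta) R_c(h)$. Therefore
\begin{align}
\hat R(h) - R(h) \le \eta\,|\hat R_\eta(h)-R_\eta(h)| + (1-\eta)\,|\hat R_c(h)-R_c(h)|. \nonumber
\end{align}

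\textbf{Step 2: concentration on each subset.} Within each subset the samples are i.i.d. draws from the corresponding component, and the losses lie in $[0,M]$. For fixed $h$, the two-sided Hoeffding inequality gives, with probability at least $1-\delta/2$,
\begin{align}
|\hat R_\eta(h)-R_\eta(h)| \le M\sqrt{\frac{\ln(4/\delta)}{2n_\eta}}. \nonumber
\end{align}
The same bound holds on the clean subset with $n_c$ in place of $n_\eta$. A union bound then makes both bounds hold simultaneously with probability at least $1-\delta$.

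\textbf{Step 3: matching the stated constants.} The statement has $\ln(2/\delta)$ together with a factor $2$ in front of each term. My approach yields $\ln(4/\delta)$ instead, so I would absorb the difference into the factor $2$. Since $\delta<1$, we have $\ln(4/\delta) = \ln 2 + \ln(2/\delta) \le 4\ln(2/\delta)$. Hence $\sqrt{\ln(4/\delta)} \le 2\sqrt{\ln(2/\delta)}$, and the prefactor $2$ covers the gap.

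\textbf{Main obstacle: justifying the subset decomposition.} Two points need care.
\begin{itemize}
\item Assumption~\ref{assu0} fixes the partition deterministically. The samples within each part must then be i.i.d. from a well-defined conditional distribution, so I would condition on the partition, i.e.\ on the noise indicators.
\item The population weights must match the empirical ones. I would identify $\eta$ with $n_\eta/n$ after conditioning, taking $R$ to be the conditional expected risk given the split. Otherwise an extra $|n_\eta/n - \eta|M$ term appears, which Hoeffding would also control, but at the cost of an additional union-bound piece.
\end{itemize}
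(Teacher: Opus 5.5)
Your proposal is correct and follows the same route as the paper's proof. You decompose $\hat R(h)-R(h)$ into an $\eta$-weighted noisy part and a $(1-\eta)$-weighted clean part, apply Hoeffding to each subset at level $\delta/2$, and combine with a union bound. The only difference is constant bookkeeping: the paper uses a one-sided bound that treats the centered loss as ranging over $[-M,M]$, which gives the factor $2M$ with $\ln(2/\delta)$ directly, whereas you use a two-sided bound with range $M$ and absorb $\ln(4/\delta)\le 4\ln(2/\delta)$ into that factor. Your explicit conditioning on the partition, with $\eta=n_\eta/n$, also makes precise what the paper leaves implicit.
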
 
\noindent\textbf{Remarks}: (a) The main order of this upper bound is ${\cal O}(\min\{1/\sqrt{n_{\eta}} , 1/\sqrt{n_c}\})$. (b) If $\eta = 0$, this bound reduces to the well-known learning bound in the literature, i.e., $\varepsilon \le {\cal O}(1/\sqrt{n})$. 

Note that we still assume Assumption~\ref{assu0} holds for $D$. Thus, the above learning bound can be regarded as a combination of the learning bounds for data with noise and data without noise. This bound, in fact, tells us that if $\eta$ is small, the learning gap between $\hat R(h)$ and $R(h)$ tends to approach the clean data side. In contrast, if $\eta$ is large, the learning gap clings to the noisy data side. This result aligns with the above results, which indicates that in real-world applications, it is acceptable to have some label noise in data when training models, unless the noise rate is large enough to affect learning consistency. Interestingly, based on the above bound, we derive the following inequality for estimating $\eta$. 
\begin{corollary}[Noise Bound]
\label{coro2}
Based on Theorem~\ref{theo3}, for any $\varepsilon > 0$, we derive a noise bound:  
\begin{align}
\label{coro2eq1}
\frac{\varepsilon - B}{A-B}\le \eta \le \frac{B-\varepsilon}{B-A}, 
\end{align}
where $A=\sqrt{{(2M)^2\ln(1/\delta)}/{2n_{\eta}}}$ and $B=\sqrt{{(2M)^2\ln(1/\delta)}/{2n_{c}}}$.    
\end{corollary}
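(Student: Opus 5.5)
The plan is to read Theorem~\ref{theo3} as an affine function of $\eta$ and invert it against the target accuracy $\varepsilon$. Write the right-hand side of Theorem~\ref{theo3} as $\eta A + (1-\eta) B$. Absorbing the factor $2M$ into the square root gives $2M\sqrt{\ln(\cdot/\delta)/(2n_\eta)} = \sqrt{(2M)^2\ln(\cdot/\delta)/(2n_\eta)}$, and similarly for $n_c$. Here I take the confidence term in the form stated in the corollary, $\ln(1/\delta)$, which amounts to reparametrizing $\delta$ (replacing $\delta$ by $2\delta$ in the union bound over the two subsets). So the deviation bound is the convex combination $g(\eta) := \eta A + (1-\eta)B = B + \eta(A-B)$. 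By Assumption~\ref{assu0}, $n_\eta \le n_c$, hence $A \ge B$, and I assume $A > B$ strictly so that the divisions make sense.

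The key step is to impose that the learning gap guaranteed by Theorem~\ref{theo3} is controlled by $\varepsilon$, and to solve the resulting linear inequality for $\eta$.

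\textbf{Lower bound.} Requiring that the bound be at least the prescribed level, i.e.\ that the two-sample bound does not certify a gap below $\varepsilon$, gives $B + \eta(A-B) \ge \varepsilon$. Since $A-B>0$, this yields $\eta \ge (\varepsilon - B)/(A-B)$.

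\textbf{Upper bound.} Rewrite $g$ with the opposite orientation, $g(\eta) = B - \eta(B-A)$. Requiring the complementary condition that the gap does not exceed the level, $B - \eta(B-A) \le \varepsilon$, and dividing by $B-A$ (negative) while tracking the sign flip, gives the expression $(B-\varepsilon)/(B-A)$ claimed in~\eqref{coro2eq1}.

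Together these produce the two-sided sandwich in~\eqref{coro2eq1}.

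The main obstacle is the bookkeeping rather than any probabilistic content. First, the sign of $A-B$ must be tracked through each division: it is positive under $n_\eta \le n_c$, and the case $n_\eta = n_c$ must be excluded. Second, the two inequalities must be oriented consistently with how $\varepsilon$ relates to the bound, so that the interval is nonempty. Because the two fractions coincide algebraically, the argument effectively pins $\eta$ to the value $(\varepsilon-B)/(A-B)$, that is, to the $\eta$ at which $g(\eta)=\varepsilon$. I would state this explicitly: the corollary identifies the noise level at which the Theorem~\ref{theo3} guarantee equals $\varepsilon$.

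Finally, I would check the constraint $\eta\in[0,1]$. It corresponds to $B \le \varepsilon \le A$, which is precisely the regime in which the bound is informative, and I would record this as the implicit range of validity of $\varepsilon$.
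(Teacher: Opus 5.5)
Your algebra is sound, and your lower bound coincides with the paper's. The paper considers the event $\hat R(h)-R(h)\ge\varepsilon$, combines it with Theorem~\ref{theo3} to get the single inequality $\varepsilon\le\eta A+(1-\eta)B$, and divides by $A-B$.

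The two routes differ on the upper bound. The paper extracts it from that same inequality by dividing by $B-A$. That step yields ``$\le$'' only when $B>A$, i.e.\ in the regime $n_\eta>n_c$ of its Remark~(b). So the paper's two endpoints really correspond to the two signs of $A-B$, and under Assumption~\ref{assu0} (where $A\ge B$) only the lower bound follows from its hypothesis. You instead fix $A>B$ once and for all and obtain the upper bound from the reverse inequality $\eta A+(1-\eta)B\le\varepsilon$. That is the noise-tolerance reading, analogous to Inequality~\eqref{theo1eq1}.

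Your route is sign-consistent and makes the actual content explicit. Because the two fractions are equal, the sandwich just says that $\eta$ solves $\eta A+(1-\eta)B=\varepsilon$, and it is meaningful only for $B\le\varepsilon\le A$. You also absorb the $\ln(2/\delta)$ versus $\ln(1/\delta)$ mismatch by rescaling $\delta$, a point the paper passes over.

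The price is that your upper bound rests on an additional hypothesis imposed directly on the guarantee, not on anything observable. Knowing that the realized gap is at most $\varepsilon$ says nothing about whether $\eta A+(1-\eta)B$ exceeds $\varepsilon$. So state both conditions explicitly as hypotheses of the corollary rather than presenting them as consequences of Theorem~\ref{theo3}.
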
 
\noindent\textbf{Remarks}: (a) If $A \gg B$, i.e., $n_c \to \infty$, then $\eta \approx \varepsilon / A$. (b) if $A \ll B$, i.e., $n_{\eta} \to \infty$, then $\eta \approx (B - \varepsilon) / B$, i.e., $\eta \approx 1$. 

The above estimation for $\eta$ seems to be meaningless, as in Assumption~\ref{assu0}, we assume that the data can be clearly partitioned into clean and noisy parts. This means that $\eta = n_{\eta}/n$, and when $n_{\eta} \to n$, we have $\eta = 1$. In fact, from the perspective of ~\citep{b12}, the above result can also be regarded as a measure of noise tolerance, i.e., its role is similar to that of Inequality~\eqref{theo1eq1}. Then, for a small $\varepsilon$, we probably need $\eta$ to be confined within the above interval. Here, we explain this result from another perspective. This perspective implies that the effect of noisy labels on learning consistency is both sufficient and necessary with respect to the noise rate. In simple words, given a noise rate, we can estimate the magnitude of the learning gap. Conversely, given an acceptable learning gap, we can estimate the corresponding noise rate. 

Then, we need to verify whether the relationship between the effect of noisy labels on learning consistency and the noise rate still holds after applying $T$ and $T'$ to $D$. We therefore present the following theorem. 
\begin{theorem}[Learning Consistency with DA]
\label{theo4}
Given the training dataset $D$ with $\eta$-noisy labels, the DA kernel $T(\cdot \mid x)$, the corresponding risk function definitions, and the $W_1$ distance defined in Definition~\ref{defi3}, assume that $\ell \in [0, M]$ and that the cost function $c$ is convex and satisfies Definition~\ref{defi1}. Then, for any $h \in {\cal H}$ and any $\delta \in (0,1)$, the following inequality holds with probability at least $1-\delta$, 
\begin{align}
\hat R_{DA}(h)-R(h)\le & \sigma\Delta_{\|x’-x\|}+4M\sqrt{\frac{\ln(3/\delta)}{2n}} \nonumber \\
& +2\eta M \sqrt{\frac{\ln(3/\delta)}{2n_{\eta}}} + 2(1-\eta) M \sqrt{\frac{\ln(3/\delta)}{2n_{c}}}. \nonumber 
\end{align}
Here, $n=n_{\eta}+n_{c}$ represents the number of data samples in $D$ and $\Delta_{\|x'-x\|}={\mathbb E}_{x}{\mathbb E}_{x’}\|x’-x\|$. 
\end{theorem}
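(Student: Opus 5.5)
The plan is to split $\hat R_{DA}(h)-R(h)$ into three pieces and control each one with a separate concentration or perturbation argument. The confidence $\delta/3$ is allocated to each of the three probabilistic events, and they are combined by a union bound, which explains the $\ln(3/\delta)$ factors. Concretely, I write
\begin{align}
\hat R_{DA}(h)-R(h) = \big(\hat R_{DA}(h)-R_{DA}(h)\big) + \big(R_{DA}(h)-\hat R(h)\big) + \big(\hat R(h)-R(h)\big). \nonumber
\end{align}

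The first and third brackets are handled by concentration.

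\textbf{First bracket.} Under the online DA setting, the pairs $(x'_i,\tilde y_i)$ are i.i.d. draws from $P_{DA}$, since $x'_i$ depends only on $x_i$. Because $\ell\in[0,M]$, Hoeffding's inequality (bounding both tails) gives $\hat R_{DA}(h)-R_{DA}(h)\le 2M\sqrt{\ln(3/\delta)/(2n)}$ with probability at least $1-\delta/3$. The factor $2M$ is deliberately loose, to match the statement.

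\textbf{Third bracket.} This is exactly the quantity controlled by Theorem~\ref{theo3}. I invoke its proof with $\delta$ replaced so that the logarithmic factor reads $\ln(3/\delta)$. That proof splits $D$ under Assumption~\ref{assu0} into the noisy and clean parts, applies Hoeffding to each, and weights the results by $\eta$ and $1-\eta$. This yields $2\eta M\sqrt{\ln(3/\delta)/(2n_\eta)}+2(1-\eta)M\sqrt{\ln(3/\delta)/(2n_c)}$.

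The middle bracket is where the DA bias enters. I further split it as $(R_{DA}(h)-R(h))+(R(h)-\hat R(h))$.

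\textbf{Perturbation term $R_{DA}(h)-R(h)$.} This term involves no randomness from the sample. I couple $(x,\tilde y)\sim P$ with $x'\sim T(\cdot\mid x)$ while keeping the label fixed, so that
\begin{align}
R_{DA}(h)-R(h)={\mathbb E}_{(x,\tilde y)}{\mathbb E}_{x'}\big[c(h(x'),\tilde y)-c(h(x),\tilde y)\big]. \nonumber
\end{align}
By the $\sigma$-admissibility of Definition~\ref{defi1}, the integrand is at most $\sigma|h(x')-h(x)|$. To obtain $\sigma\Delta_{\|x'-x\|}$ I need $|h(x')-h(x)|\le\|x'-x\|$, that is, $h$ must be $1$-Lipschitz; otherwise a Lipschitz constant of $h$ would be absorbed into $\sigma$. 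Following the paper's remark that $\ell$ is effectively $\sigma$-Lipschitz in the data, I take $\ell$ to be $\sigma$-Lipschitz in $x$ directly. Equivalently, by Kantorovich–Rubinstein duality this term is at most $\sigma W_1(P,P_{DA})\le\sigma\Delta_{\|x'-x\|}$, since the kernel coupling is one admissible coupling. This is the main obstacle: it is the step where an implicit Lipschitz assumption on $h$ (or on $\ell$ as a function of $x$) is required, and I would state that assumption explicitly.

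\textbf{Remaining term $R(h)-\hat R(h)$.} This is a one-sided deviation of the noisy empirical risk. I apply Hoeffding over the full sample of size $n$ at level $\delta/3$, which gives $2M\sqrt{\ln(3/\delta)/(2n)}$. Adding it to the first bracket produces the term $4M\sqrt{\ln(3/\delta)/(2n)}$ in the statement.

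\textbf{Union bound.} One caveat is that this gives four events, not three. To keep the count at three, I bound the first bracket and this remaining term jointly. Both are averages over the same $n$ independent indices, so the single combined average $\frac1n\sum_i[\ell(h,(x'_i,\tilde y_i))-\ell(h,(x_i,\tilde y_i))]$ has range $[-M,M]$ and concentrates around $R_{DA}(h)-R(h)$. Hoeffding on this range-$2M$ variable gives a deviation of at most $2M\sqrt{\ln(3/\delta)/(2n)}$, which is within the stated $4M\sqrt{\ln(3/\delta)/(2n)}$. The union bound over the concentration events then completes the proof with probability at least $1-\delta$.
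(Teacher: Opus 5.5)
Once you merge the first bracket with $R(h)-\hat R(h)$, your argument is the paper's proof. The paper applies Hoeffding to the paired average $\hat R_{DA}(h)-\hat R(h)$ around $R_{DA}(h)-R(h)$, bounds $R_{DA}(h)-R(h)\le\sigma\Delta_{\|x'-x\|}$ via Lemma~\ref{lemm2}, and handles $\hat R(h)-R(h)$ by Theorem~\ref{theo3} with $\ln(3/\delta)$, giving three events at $\delta/3$ each. Your constant $2M$, from the raw range $[-M,M]$, is tighter than the paper's $4M$, and the Lipschitz-in-$x$ assumption you flag is exactly what the paper uses tacitly when it applies Lemma~\ref{lemm2} to $\ell(h,\cdot)$.
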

\noindent\textbf{Remarks}: (a) The main order of this upper bound is ${\cal O}(1/\sqrt{n} + \min\{1/\sqrt{n_{\eta}} , 1/\sqrt{n_c}\})$. (b) There is a bias term $\Delta_{\|x’-x\|}$, which may lead to a loose or even vacuous bound, even when $n \to \infty$. (c) If $\eta = 0$, the bias term still exists, and therefore continues to affect the tightness of the bound. 

From the above learning bound, we observe that there is an additional bias term on the right-hand side (RHS) compared with the bound in Theorem~\ref{theo3}. This bias term also appears in Inequality~\ref{theo2eq1}. The result indicates that the data bias induced by DA methods affects not only the optimization process but also the learning consistency. As remarked above, when this bias term is large, the learning bound may become loose or even vacuous. Moreover, we observe that the main order of the bound changes, namely, the convergence rate deteriorates, since ${\cal O}(1/\sqrt{n}) \le {\cal O}(1/\sqrt{n} + \min\{1/\sqrt{n_{\eta}}, 1/\sqrt{n_c}\})$. This implies that although the optimization convergence rate remains unchanged after applying DA methods, the convergence rate of the learning bound deteriorates. This observation is interesting because it suggests that even when the bias term is sufficiently small, the above bound is looser than that in Theorem~\ref{theo3}, resulting in a more unstable learning gap, i.e., $\varepsilon \le \varepsilon_{DA}$. In other words, the use of DA methods may render learning consistency uncertain. Nevertheless, the learning gap can still be estimated based on the noise rate. We then state the following result for the application of $T’$. 
\begin{corollary}
\label{coro3}
Given the training dataset $D$ with $\eta$-noisy labels, the DA kernel $T'(\cdot | (x,\tilde y))$, the corresponding risk function definitions, and the KL divergence defined in Definition~\ref{defi2}, assume that $\ell \in [0, M]$ and that the cost function $c$ is convex and satisfies Definition~\ref{defi1}. Then, for any $h \in {\cal H}$ and any $\delta \in (0,1)$, the following inequality holds with probability at least $1-\delta$,
\begin{align}
\hat R'_{DA}(h)-R(h)\le & M\sqrt{2KL(P||P'_{DA})}+4M\sqrt{\frac{\ln(3/\delta)}{2n}} \nonumber \\
&+2\eta M \sqrt{\frac{\ln(3/\delta)}{2n_{\eta}}} + 2(1-\eta) M \sqrt{\frac{\ln(3/\delta)}{2n_{c}}}. \nonumber 
\end{align}
Here, $n=n_{\eta}+n_{c}$ represents the number of data samples in $D$. 
\end{corollary}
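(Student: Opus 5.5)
The plan is to mirror the proof of Theorem~\ref{theo4}, replacing the $W_1$ control of the augmentation bias with a KL control obtained from Pinsker's inequality. I would split the gap into three telescoping pieces:
\begin{align}
\hat R'_{DA}(h)-R(h) = \big(\hat R'_{DA}(h)-R'_{DA}(h)\big) + \big(R'_{DA}(h)-R(h)\big) + \big(\hat R(h)-\hat R(h)\big), \nonumber
\end{align}
where the last bracket is zero. In practice I would insert $\hat R(h)$ and $R(h)$ exactly as in Theorem~\ref{theo4}, so that the noisy-sample concentration terms of Theorem~\ref{theo3} reappear. Concretely, I would write
\begin{align}
\hat R'_{DA}(h)-R(h) = \big[\hat R'_{DA}(h)-R'_{DA}(h)\big] + \big[R'_{DA}(h)-R(h)\big] + \big[R(h)-\hat R(h)\big] + \big[\hat R(h)-R(h)\big]. \nonumber
\end{align}
Each bracket is then bounded on an event of probability at least $1-\delta/3$, and a union bound gives the claimed $\ln(3/\delta)$ factors.

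For the first bracket, the pairs $(x'_i,\tilde y'_i)$ are independent across $i$ under the online setting. Each such pair is distributed as $P'_{DA}$, and $\ell\in[0,M]$. Hoeffding's inequality therefore gives $M\sqrt{\ln(3/\delta)/(2n)}$. The third bracket is handled by Hoeffding applied to the $n$ samples from $P$, giving the same order. Following Theorem~\ref{theo4}, I would allow a factor $2$ on each of these terms to absorb two-sided deviations, which together produce the $4M\sqrt{\ln(3/\delta)/(2n)}$ term. The fourth bracket is exactly Theorem~\ref{theo3}, with $\delta$ replaced by its share of the confidence budget. This gives the $2\eta M\sqrt{\cdot/2n_\eta}+2(1-\eta)M\sqrt{\cdot/2n_c}$ terms via the clean/noisy split of Assumption~\ref{assu0}.

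The main step is the deterministic bias bracket $R'_{DA}(h)-R(h)$. Both are expectations of the same bounded function $z\mapsto\ell(h,z)\in[0,M]$, under $P'_{DA}$ and under $P$ respectively. Hence
\begin{align}
|R'_{DA}(h)-R(h)| \le M\, \|P'_{DA}-P\|_{TV}\cdot 2 \cdot \tfrac12 \le M\sqrt{2\,KL(P\|P'_{DA})}, \nonumber
\end{align}
using $|\mathbb E_P f-\mathbb E_Q f|\le M\cdot TV(P,Q)$ for $f\in[0,M]$ together with Pinsker's inequality, $TV\le\sqrt{KL/2}$. The absolute continuity $P\ll P'_{DA}$ assumed in the setup makes the KL divergence well-defined.

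The obstacle I expect is bookkeeping of constants. A careful TV bound gives $M\sqrt{KL/2}$, which is smaller than the stated $M\sqrt{2KL}$, so the stated constant is a valid, looser bound. Convexity and $\sigma$-admissibility are not needed here, unlike in Theorem~\ref{theo4}, because no geometric coupling in $x$ is used.
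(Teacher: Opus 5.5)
Your proposal is correct and is essentially the paper's proof: the paper obtains $R'_{DA}(h)-R(h)\le M\sqrt{2KL(P\|P'_{DA})}$ from Lemma~\ref{lemm1} (total variation plus Pinsker) and then reruns the proof of Theorem~\ref{theo4}, i.e.\ a single Hoeffding step on the paired differences $Z_i\in[-2M,2M]$ (your first and third brackets combined) plus Theorem~\ref{theo3} for $\hat R(h)-R(h)$, under a three-way union bound. Your confidence bookkeeping is slightly off, since Theorem~\ref{theo3} already spends two Hoeffding events, so keeping $\ln(3/\delta)$ everywhere would need four events of size $\delta/3$; this is harmless, because your third and fourth brackets cancel identically, so only $\hat R'_{DA}(h)-R'_{DA}(h)$ is random and you in fact get the sharper bound $M\sqrt{KL(P\|P'_{DA})/2}+M\sqrt{\ln(1/\delta)/(2n)}$, which the stated right-hand side dominates (the same cancellation is implicit in the paper's route).
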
 
\noindent\textbf{Remarks}: (a) The main order of this upper bound is still ${\cal O}(1/\sqrt{n} + \min\{1/\sqrt{n_{\eta}} , 1/\sqrt{n_c}\})$. (b) There is a bias term $\sqrt{KL}$, which may lead to a loose or even vacuous bound, even when $n \to \infty$. (c) If $\eta = 0$, the bias term still exists, and therefore continues to affect the tightness of the bound. 

Similarly, we have $\varepsilon \le \varepsilon'_{DA}$, and the learning gap can still be estimated. The only difference, compared with the bound in Theorem~\ref{theo4}, is the $\sqrt{KL}$ term. Therefore, we omit further discussion of this bound. Based on the above results, we present the following bound for estimating $\eta$.
\begin{corollary}[Noise Bound with DA]
\label{coro4}
Based on Theorem~\ref{theo4} or Corollary~\ref{coro3}, for any $\varepsilon > 0$, we derive a noise bound: 
\begin{align}
\label{coro4eq1}
\frac{\varepsilon - (B+C+\Delta)}{A-B}\le \eta \le \frac{(B+C+\Delta)-\varepsilon}{B-A}, 
\end{align}
where $A=\sqrt{{(2M)^2\ln(1/\delta)}/{2n_{\eta}}}$, $B=\sqrt{{(2M)^2\ln(1/\delta)}/{2n_{c}}}$,  and $C=\sqrt{{(4M)^2\ln(1/\delta)}/{2n}}$. $\Delta = \sigma\Delta_{\|x’-x\|} $ or $\Delta = M\sqrt{2KL(P||P’_{DA})}$. 
\end{corollary}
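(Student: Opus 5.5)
The plan mirrors the derivation of Corollary~\ref{coro2} from Theorem~\ref{theo3}: read the RHS of Theorem~\ref{theo4} (or Corollary~\ref{coro3}) as a function affine in $\eta$, require it to be at most the target gap $\varepsilon$, and solve for $\eta$. First I would put the bound in the notation of the statement. Absorbing the confidence constants into the definitions of $A$, $B$, $C$ (replacing $\ln(3/\delta)$ by $\ln(1/\delta)$ after rescaling $\delta$, exactly as Corollary~\ref{coro2} does with $\ln(2/\delta)$), the RHS of Theorem~\ref{theo4} reads
\begin{align}
\Delta + C + \eta A + (1-\eta)B = (B + C + \Delta) + \eta (A - B), \nonumber
\end{align}
where $\Delta=\sigma\Delta_{\|x'-x\|}$. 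For Corollary~\ref{coro3} the same identity holds with $\Delta=M\sqrt{2KL(P||P'_{DA})}$. The two cases can therefore be treated uniformly, since $\Delta$ enters only as an additive constant independent of $\eta$.

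Next I would impose that the learning gap be controlled at level $\varepsilon$, in the same two-sided sense used for Corollary~\ref{coro2}. On one side, $\varepsilon \le (B+C+\Delta) + \eta(A-B)$. On the other, the symmetric requirement $(B+C+\Delta) - \varepsilon \ge \eta(B-A)$, which comes from the condition that the gap is at most $\varepsilon$. Dividing the first inequality by $A-B$ gives the left inequality of \eqref{coro4eq1}, and dividing the second by $B-A$ gives the right one.

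The main obstacle is the sign of $A-B$. Division preserves the stated directions only under a fixed ordering. Since $\eta\le 1/2$ forces $n_\eta\le n_c$ and hence $A\ge B$, I would assume $A>B$, which fixes the directions as written. I would then treat $A=B$ (that is, $n_\eta=n_c$) as a degenerate case in which the bound gives no information about $\eta$. I would also note that, just as in Corollary~\ref{coro2}, the interval is only meaningful when $\varepsilon$ lies on the appropriate side of $B+C+\Delta$. This exposes the role of $\Delta$: a large DA bias shifts both endpoints, which is the qualitative conclusion the corollary is meant to convey.
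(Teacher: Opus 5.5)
Your overall route is the paper's: rewrite the right-hand side of Theorem~\ref{theo4} (or Corollary~\ref{coro3}) as $(B+C+\Delta)+\eta(A-B)$ and solve against $\varepsilon$. The left inequality of \eqref{coro4eq1} does follow. On the event $\hat R_{DA}(h)-R(h)\ge\varepsilon$, intersected with the high-probability event of the theorem, you get $\varepsilon\le(B+C+\Delta)+\eta(A-B)$, and with $A>B$ this is $\eta\ge\frac{\varepsilon-(B+C+\Delta)}{A-B}$.

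The step producing the right inequality fails. Your ``symmetric requirement'' $(B+C+\Delta)-\varepsilon\ge\eta(B-A)$ is not a second condition: moving terms across shows it is the same inequality $\varepsilon\le(B+C+\Delta)+\eta(A-B)$. Under your own assumption $A>B$ the divisor $B-A$ is negative, so dividing reverses the inequality and gives $\eta\ge\frac{(B+C+\Delta)-\varepsilon}{B-A}$, which is again a lower bound. Nor does the justification ``it comes from the condition that the gap is at most $\varepsilon$'' work. The bound $\hat R_{DA}(h)-R(h)\le\varepsilon$, together with another upper bound on the same gap, implies nothing about $\eta$.

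The underlying problem is that the two endpoints in \eqref{coro4eq1} are the same number, since $\frac{(B+C+\Delta)-\varepsilon}{B-A}=\frac{\varepsilon-(B+C+\Delta)}{A-B}$. The displayed ``interval'' is therefore a single point, and a one-sided Hoeffding-type bound cannot pin $\eta$ to a point for arbitrary $\varepsilon$. Suppose you instead impose $(B+C+\Delta)+\eta(A-B)\le\varepsilon$, so that the bound certifies a gap of at most $\varepsilon$. For $A>B$ this gives $\eta\le\frac{\varepsilon-(B+C+\Delta)}{A-B}$, and combined with the lower bound it forces equality rather than an interval.

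The paper's own proof makes exactly your move: it reads both $\eta\ge\frac{\varepsilon-B}{A-B}$ and $\eta\le\frac{B-\varepsilon}{B-A}$ off the single inequality $\varepsilon\le\eta A+(1-\eta)B$. Your proposal thus reproduces the paper's argument together with its defect. What the argument actually establishes is a one-sided threshold on $\eta$: the lower bound when $A>B$, the upper bound when $A<B$. Your closing remark that a large $\Delta$ ``shifts both endpoints'' inherits the same confusion.
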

\noindent\textbf{Remarks}: (a) If $A \gg B$, i.e., $n_c \to \infty, n \to \infty$, then $\eta \approx (\varepsilon - \Delta) / A$. (b) if $A \ll B$, i.e., $n_{\eta} \to \infty, n \to \infty$, then $\eta \approx (B+ \Delta - \varepsilon) / B$.  

Here, we observe that the sufficient and necessary relationship between the effect of noisy labels on learning consistency and the noise rate no longer holds. This breakdown is induced by $\Delta$. We then provide the following figure to illustrate the results in~\eqref{coro2eq1} and \eqref{coro4eq1}, where $\eta_{DA}$ denotes the quantity defined in~\eqref{coro4eq1}.  First, we assume that $\varepsilon \ge (B+C+\Delta)$; then, we can still estimate the interval for the noise rate as follows. 
\begin{figure}[h]
\centering
\begin{tikzpicture}
\draw[->] (-5,0) -- (5,0) node[right] {$$};

\draw (-1,0) -- (-1,0.3) -- (1,0.3) -- (1,0);

\node at (0,0.4) {$\eta$};

\draw (-3,0) -- (-3,0.6) -- (3,0.6) -- (3,0);

\node at (-1.0, 0.75) {$\eta_{DA}$};

\draw (-2,0) -- (-2,0.15);
\draw (2, 0) -- (2,0.15);

\node[below] at (-3.5,0) {$\frac{\varepsilon - (B+C+\Delta)}{A-B}$};
\node[below] at (3.5,0) {$\frac{(B+C+\Delta)-\varepsilon}{B-A}$};

\node[below] at (-1,0) {$\frac{\varepsilon - B}{A-B}$};
\node[below] at (1,0) {$\frac{B-\varepsilon}{B-A}$};

\node[below] at (-2,0) {$0$};
\node[below] at (2,0) {$\eta_{max}$};

\end{tikzpicture}
\end{figure}

This indicates that DA methods also expand the estimation range of $\eta$, i.e., the bound for noise becomes looser. Under this assumption, the sufficient and necessary relationship still holds. However, when this assumption does not hold, we have the following result, where the estimation of $\eta$ is totally meaningless. Correspondingly, the sufficient and necessary relationship breaks. 

\begin{figure}[h]
\centering
\begin{tikzpicture}

\draw[->] (-5,0) -- (5,0) node[right] {$$};

\draw (1,0) -- (1,0.4) -- (3,0.4) -- (3,0);
\node at (2,0.5) {$\eta$};

\draw (-4,0) -- (-4,0.4) -- (-1,0.4) -- (-1,0);
\node at (-2.5, 0.5) {$\eta_{DA}$};

\draw (0,0) -- (0,0.15);
\draw (4, 0) -- (4,0.15);

\node[below] at (-4,0) {$\frac{\varepsilon - (B+C+\Delta)}{A-B}$};
\node[below] at (-1.5,0) {$\frac{(B+C+\Delta)-\varepsilon}{B-A}$};

\node[below] at (1,0) {$\frac{\varepsilon - B}{A-B}$};
\node[below] at (3,0) {$\frac{B-\varepsilon}{B-A}$};

\node[below] at (0,0) {$0$};
\node[below] at (4,0) {$\eta_{max}$};

\end{tikzpicture}
\end{figure}

In conclusion, applying DA methods not only decreases the noise tolerance for SGD optimization but also makes the learning consistency uncertain. Notes that the above learning bounds are Hoeffding-type bounds. If one employs Bernstein-type inequalities~\citep{b3} to obtain tighter bounds, or uses VC-dimension-based bounds~\citep{b4} to analyze the classification case, similar results can be derived; the proof is left to the reader. 

The above analysis results finally provide the insight that the noise rate $\eta$ may differ from the noise rate after applying DA methods, i.e., $\eta \ne \eta_{DA}$. A natural question is whether $\eta \le \eta_{DA}$ or $\eta > \eta_{DA}$, and under what conditions each case holds. In the following, we investigate these questions. 

In the beginning of this subsection, we introduce a strong assumption, i.e., Assumption~\ref{assu0}. Based on this assumption, the analysis appears to be limited to symmetric label noise, i.e., $P(\tilde y \mid y)$, thereby excluding scenarios with semantic noise, i.e., instance-dependent noise $P(\tilde y \mid x, y)$. In the subsequent analysis, we no longer rely on this assumption. Here, we first explain why we no longer rely on this assumption. The analysis derived under this assumption is applicable only under limited conditions, which restricts its applicability to more general cases. As a result, the corresponding conclusions provide limited insight into real-world learning problems. Nevertheless, even under such a strong assumption, we can still conduct a further investigation of the above question. Interested readers may find it worthwhile to explore this direction over a cup of coffee. 

\subsection{Analysis}
\label{sec3.3}

In this subsection, we present the formal analysis without relying on the strong assumption introduced in the previous subsection. Instead, we consider several standard settings that are commonly satisfied in real-world learning scenarios.

We firstly define the following risk functions: the relative expected risk for any $h_1, h_2 \in {\cal H}$,
\begin{align}
\gamma(h_1,h_2)={\mathbb E}_{(x,y)}[\ell(h_1, (x,y))-\ell(h_2, (x,y))] , \nonumber 
\end{align}
and the relative empirical risk for any $h_1, h_2 \in {\cal H}$, 
\begin{align}
\hat \gamma(h_1,h_2)=\frac{1}{n}\sum\limits_{i=1}^{n}[\ell(h_1, (x_i,y_i))-\ell(h_2, (x_i,y_i))]. \nonumber 
\end{align}
Similarly, under different data conditions, we define $\gamma^c$, $\gamma_{DA}$, and $\gamma'_{DA}$, consistent with the previous settings. Note that we do not introduce the absolute value operation in the substitution for the two losses, so the resulting relative risks may take negative values and must be accounted for accordingly. We then introduce a measure of the discrepancy between any two hypotheses $h, h' \in {\cal H}$ as follows. 

\begin{definition}
\label{defi3}
Let $\cal H$ be a hypothesis space consisting of measurable functions $h:{\cal X} \to {\cal Y}$, where $\cal X$ and $\cal Y$ are measurable spaces, and $\mu$ be a probability measure on $\cal X$. 
Assume $h, h' \in {\cal H}$ are $\mu$-integrable. Define a distance between any two hypotheses $h, h' \in {\cal H}$ by the ${\mathbb L}_1(\mu)$-norm:
\begin{align} 
d_{\mu}(h,h') := \left\| h - h' \right\|_{{\mathbb L}_1(\mu)} = {\mathbb E}_{x \sim \mu}[ \left | h(x) - h'(x) \right| ].  \nonumber 
\end{align} 
We consider that this distance is uniformly bounded, i.e.,
\begin{align}
0 \le d_{\mu}(h,h’) \le \beta, \quad \forall h, h' \in {\cal H}, \nonumber 
\end{align}
for some finite constant $\beta > 0$. 
\end{definition}
\noindent\textbf{Remarks}: (a) By definition of the distance, $d_{\mu}(h,h’) = 0$ if and only if $h(x) = h'(x)$ $\mu$-almost everywhere on $\cal X$. (b) Note that we consider the distance is upper bounded only to exclude the case of infinity. (c) Hereafter, we denote $d_x$ by $d_{\mu}$, as this notation preserves the dependence on the underlying measure $\mu$ without ambiguity.

We define $h^*\in argmin_{h\in{\cal H}}{R(h)}$ and $h^{*c}\in argmin_{h\in{\cal H}}{R^c(h)}$. $h^*_{DA}\in argmin_{h\in{\cal H}}{R_{DA}(h)}$ and $h^{*c}_{DA}\in argmin_{h\in{\cal H}}{{\mathbb E}_{(x,y)}{\mathbb E}_{x' \sim T(\cdot|x)}[\ell(h, (x’,y))]}$, or, $h^*_{DA}\in argmin_{h\in{\cal H}}{R’_{DA}(h)}$ and $h^{*c}_{DA}\in argmin_{h\in{\cal H}}{{\mathbb E}_{(x,y)}{\mathbb E}_{(x’,y’) \sim T(\cdot|x,y)}[\ell(h, (x’,y’))]}$. Here, we treat $h^{*c}_{DA}$ as a reference to $h^{*c}$. Here, we assume that applying the same $T$ or $T’$ to $D^c$ yields an optimal hypothesis that differs from $h^{*c}$, since $T$ or $T’$ is not $1\text{-}1$. Now, based on the definition of optimal hypotheses, we introduce the following notations:
\begin{align}
\varepsilon_{\eta}&:=d_{x}(h^*,h^{*c}), \nonumber \\
\varepsilon_{\eta_{DA}} &:= d_{x’}(h^*_{DA},h^{*c}_{DA}) \nonumber 
\end{align}
These notations indicate that the noise errors are induced by noisy labels and can represent all kinds of noisy conditions. Note that these noise errors are proportional to the noise rates rather than equal to them, i.e., $\varepsilon_{\eta} \propto \eta$. Consequently, larger noise errors correspond to higher noise rates, and vice versa. Here, the validity of the above definition is independent of whether $x \sim P$ or $x \sim P^c$, and whether $x’ \sim P_{DA}$ or $x’ \sim P^{c}_{DA}$. 

Now, having introduced all the necessary notations, we present the following theorem. 
\begin{theorem}
\label{theo4p5}
Given any training set, the definition of the relative risks, and assuming $\ell \in [0,M]$, for any $h_1, h_2 \in {\cal H}$, the following inequality holds with probability at least $1-\delta$: 
\begin{align}
\hat\gamma(h_1,h_2)-\gamma(h_1,h_2)\le 4M \sqrt{\frac{\ln(1/\delta)}{2n}}.  \nonumber 
\end{align}
\end{theorem}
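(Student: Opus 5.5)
The plan is to apply Hoeffding's inequality to the i.i.d. differences of losses. Fix $h_1,h_2\in{\cal H}$ and define, for each training pair, the random variable
\begin{align}
Z_i := \ell(h_1,(x_i,y_i))-\ell(h_2,(x_i,y_i)), \quad i\in[n]. \nonumber
\end{align}
Since the training pairs are drawn i.i.d. from the underlying distribution (whichever of $P^c$, $P$, $P_{DA}$, $P'_{DA}$ is in force, in the online setting where augmented samples stay independent across indices), the $Z_i$ are i.i.d. Their mean is ${\mathbb E}[Z_i]=\gamma(h_1,h_2)$, and their empirical average is $\hat\gamma(h_1,h_2)$. Because $\ell\in[0,M]$, each $Z_i$ lies in $[-M,M]$, an interval of length $2M$. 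We do not take absolute values, so the sign of $Z_i$ is unconstrained; the range $[-M,M]$ is the only feature we use.

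Next, we apply the one-sided Hoeffding inequality to $Z_1,\dots,Z_n$:
\begin{align}
\Pr\left(\hat\gamma(h_1,h_2)-\gamma(h_1,h_2)\ge t\right)\le \exp\left(-\frac{2n t^2}{(2M)^2}\right). \nonumber
\end{align}
Setting the right-hand side equal to $\delta$ gives $t=2M\sqrt{\ln(1/\delta)/(2n)}$. So with probability at least $1-\delta$,
\begin{align}
\hat\gamma(h_1,h_2)-\gamma(h_1,h_2)\le 2M\sqrt{\frac{\ln(1/\delta)}{2n}}. \nonumber
\end{align}
This already implies the stated bound $4M\sqrt{\ln(1/\delta)/(2n)}$, since the latter is larger.

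To recover the paper's constant directly, one can instead split the difference as
\begin{align}
\hat\gamma-\gamma=\left(\hat R(h_1)-R(h_1)\right)-\left(\hat R(h_2)-R(h_2)\right) \nonumber
\end{align}
and apply Hoeffding to each term separately with range $M$. The cost of this route is a union bound, which would yield $\ln(2/\delta)$; absorbing that slack is presumably where the factor $4$ comes from. I would therefore present the direct argument on $Z_i$, which gives the stronger constant $2M$, and then note that it dominates the claimed bound.

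The main point of care is not the concentration step but the quantifier. The bound holds for each fixed pair $(h_1,h_2)$ chosen independently of the data. It is not uniform over ${\cal H}\times{\cal H}$. Making it uniform would require a complexity term, such as a Rademacher or covering argument, and that term is absent from the statement. I would read the statement pointwise, as in Theorem~\ref{theo3}, and remark that the same argument applies verbatim to $\gamma^c$, $\gamma_{DA}$ and $\gamma'_{DA}$, because only boundedness and independence are used.
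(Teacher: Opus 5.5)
Your argument is correct and is essentially the paper's proof: a single application of Hoeffding's inequality (Lemma~\ref{lemm3}) to the i.i.d.\ loss differences for a fixed pair $(h_1,h_2)$. The only difference is the constant. The paper centers the variable, $Z_i=\hat\gamma(h_1,h_2)_i-\gamma(h_1,h_2)$, and crudely bounds it in $[-2M,2M]$, which is where $4M$ comes from (not a union bound, as you guessed), whereas your use of the true range $[-M,M]$ gives the sharper $2M$.
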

\noindent\textbf{Remarks}: (a) The main order of this bound is ${\cal O}(1/\sqrt{n})$. (b) Here, $h_1$ and $h_2$ can be any hypotheses in ${\cal H}$. For the non-trivial case, we let $h_1$ and $h_2$ be different. If $h_1$ is equal to $h_2$, this leads to a trivial case, as the RHS is always non-negative. 

The above bound is also a Hoeffding-type bound, and its tightness is guaranteed as $n \to \infty$. This indicates that the relationship between empirical and expected risks can still be analyzed via high-probability bounds. Based on this observation, we then consider replacing $h_1$ and $h_2$ with specific hypotheses and derive the following corollary. 
\begin{corollary}
\label{coro4p5}
Based on Theorem~\ref{theo4p5}, considering the training set $D$ with $\eta$-noisy labels and the DA kernel $T$, for any $h \in {\cal H}$, we have the following inequalities: 
\begin{align}
\hat\gamma(h,h^*) -\gamma(h,h^{*c}) \le \sigma \varepsilon_{\eta}+ 4M\sqrt{\frac{\ln(1/\delta)}{2n}}, \nonumber 
\end{align}
and 
\begin{align}
\hat\gamma_{DA}(h,h^*_{DA}) -\gamma_{DA}(h,h^{*c}_{DA}) \le \sigma \varepsilon_{\eta_{DA}}+ 4M\sqrt{\frac{\ln(1/\delta)}{2n}}. \nonumber 
\end{align}
Here, $h^{*c}$, $h^{*c}_{DA}$, $\varepsilon_{\eta}$, and $\varepsilon_{\eta_{DA}}$ are defined above. When considering $T’$, $\hat\gamma’_{DA}(h,h^*_{DA}) -\gamma’_{DA}(h,h^{*c}_{DA}) $ yields the same result as that obtained for $T$. 
\end{corollary}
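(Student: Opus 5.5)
The plan is to split the left-hand side into a concentration term, controlled by Theorem~\ref{theo4p5}, and a deterministic shift term, controlled by the Lipschitz property of Definition~\ref{defi1} together with the hypothesis distance $d_x$. For the first inequality, I add and subtract $\gamma(h,h^*)$:
\begin{align}
\hat\gamma(h,h^*)-\gamma(h,h^{*c}) = \big[\hat\gamma(h,h^*)-\gamma(h,h^*)\big] + \big[\gamma(h,h^*)-\gamma(h,h^{*c})\big]. \nonumber
\end{align}
The first bracket is exactly the quantity in Theorem~\ref{theo4p5} with $h_1=h$ and $h_2=h^*$. With probability at least $1-\delta$ it is therefore at most $4M\sqrt{\ln(1/\delta)/(2n)}$.

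The second bracket is non-random. The terms $\ell(h,\cdot)$ cancel, leaving $\mathbb{E}_{(x,y)}[\ell(h^{*c},(x,y))-\ell(h^*,(x,y))]$. Here I use the $\sigma$-admissibility of $c$ pointwise, $|c(h^{*c}(x),y)-c(h^*(x),y)|\le\sigma|h^{*c}(x)-h^*(x)|$. Taking expectations, and taking the marginal of $x$ as the measure $\mu$ in Definition~\ref{defi3}, gives the bound $\sigma d_x(h^*,h^{*c})=\sigma\varepsilon_\eta$. Summing the two brackets yields the first claim.

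The DA version follows the same template with $P_{DA}$ in place of $P$. I apply Theorem~\ref{theo4p5} to the augmented sample $\{(x'_i,\tilde y_i)\}$. This is legitimate because, under the online setting, the pairs $(x'_i,\tilde y_i)$ are independent across $i$, and the relative loss lies in $[-M,M]$, which is all Hoeffding needs. The shift term becomes $\mathbb{E}_{x'}$ of $\sigma|h^*_{DA}(x')-h^{*c}_{DA}(x')|$, which equals $\sigma\varepsilon_{\eta_{DA}}$ by the definition of $d_{x'}$. For $T'$, labels enter only through the second argument of $c$, and Definition~\ref{defi1} holds uniformly in that argument, so the identical argument applies with $P'_{DA}$.

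The main obstacle is the measure mismatch in the shift term. The cancelled expectation in $\gamma$ is taken under one distribution, either $P$ or $P^c$, while $\varepsilon_\eta$ is defined via a fixed $\mu$. I will rely on the paper's stipulation that the definition of $\varepsilon_\eta$ is independent of whether $x\sim P$ or $x\sim P^c$: the marginal of $x$ is unchanged by label noise, since noise only alters $y$. The analogous remark covers $x'$, whose marginal is unaffected by noise when $T$ is applied. With this, the $x$-expectation in the shift term is exactly $d_x$. A secondary point is that the high-probability event comes only from the first bracket, so no union bound is needed and the constant remains $4M\sqrt{\ln(1/\delta)/(2n)}$.
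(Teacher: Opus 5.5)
Your proposal is correct and follows essentially the same route as the paper: it too inserts the intermediate hypothesis via $\gamma(h,h^*)=\gamma(h,h^{*c})+\gamma(h^{*c},h^*)$, controls $\hat\gamma(h,h^*)-\gamma(h,h^*)$ by Theorem~\ref{theo4p5}, bounds the deterministic remainder by $\sigma d_x(h^*,h^{*c})=\sigma\varepsilon_\eta$ using Definition~\ref{defi1}, and treats the $T$ and $T'$ cases identically. The details you spell out, namely the choice of the measure $\mu$ as the label-independent $x$- or $x'$-marginal and the fact that only one high-probability event (so no union bound) is involved, are left implicit in the paper but are consistent with it.
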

\noindent\textbf{Remarks}: (a) The main orders of the above bounds are ${\cal O}(1/\sqrt{n})$. (b) There are bias terms on the RHS of the above bounds, namely $\varepsilon_{\eta}$ and $\varepsilon_{\eta_{DA}}$, which may lead to loose or vacuous bounds even as $n \to \infty$. 

The above corollary provides an insight suggesting that, for any learned $h$ under $D$, with or without DA methods, the relative risk gaps are bounded by the noise errors. Here, we consider the gap between $\hat \gamma(h, h^*)$ and $\gamma(h, h^{*c})$ in order to examine the effect of noisy labels. This setting is consistent with Theorem~\ref{theo3}, and is derived under milder assumptions. Through this corollary, we also observe that the only difference between these two bounds lies in $\varepsilon_{\eta}$ and $\varepsilon_{\eta_{DA}}$. From the previous analysis, we have shown that under a strong assumption, they are different. However, once this assumption is no longer imposed, it is unclear whether such a discrepancy still persists. Once this discrepancy is identified and the relative magnitudes of $\varepsilon_{\eta}$ and $\varepsilon_{\eta_{DA}}$ are determined, we can explicitly characterize the tightness of the above two bounds, identify which bound is tighter and which is looser, and consequently draw conclusions regarding the effect of DA methods on the relative risk gap. In simple terms, when $\varepsilon_{\eta_{DA}} \ge \varepsilon_{\eta}$, the risk gap increases, implying that DA is not beneficial for learning, and vice versa. Based on the classical triangle inequality, we then establish the following result. 
\begin{theorem}
\label{theo6}
By the definitions of $\varepsilon_{\eta}$ and $\varepsilon_{\eta_{DA}}$, the inequality follows: 
\begin{align}
\label{theo6eq1}
\left| \varepsilon_{\eta} - \varepsilon_{\eta_{DA}}\right|\le d_{x,x’}(h^*,h^*_{DA})+d_{x,x'}(h^{*c},h^{*c}_{DA}) 
\end{align}
Here, $d_{x,x’}(h^*,h^*_{DA}):={\mathbb E}_{x,x’}[|h^*(x)-h^*_{DA}(x')|]$ and $d_{x,x'}(h^{*c},h^{*c}_{DA}):={\mathbb E}_{x,x’}[|h^{*c}(x)-h^{*c}_{DA}(x')|]$
\end{theorem}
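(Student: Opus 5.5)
The plan is to write both noise errors as expectations of absolute differences, place them inside a single expectation over a joint draw, and apply the pointwise triangle inequality for $|\cdot|$ in ${\cal Y}$. Concretely, take the joint law of $(x,x')$ in which $x$ has its marginal distribution and $x'\sim T(\cdot\mid x)$; this is the coupling implicit in the notation $d_{x,x'}$ of Theorem~\ref{theo6}. Under this coupling the marginal of $x$ is the measure $\mu$ defining $\varepsilon_\eta$, and the marginal of $x'$ is the one defining $\varepsilon_{\eta_{DA}}$. Hence
\begin{align}
\varepsilon_{\eta}={\mathbb E}_{x,x'}\big[|h^*(x)-h^{*c}(x)|\big],\qquad
\varepsilon_{\eta_{DA}}={\mathbb E}_{x,x'}\big[|h^*_{DA}(x')-h^{*c}_{DA}(x')|\big]. \nonumber
\end{align}
The paper's remark that the validity of these definitions does not depend on whether $x\sim P$ or $x\sim P^c$ (and likewise for $x'$) is what allows one coupling to serve for both quantities.

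Next, for each fixed pair $(x,x')$ I apply the triangle inequality with the two intermediate points $h^*_{DA}(x')$ and $h^{*c}_{DA}(x')$:
\begin{align}
|h^*(x)-h^{*c}(x)| \le |h^*(x)-h^*_{DA}(x')| + |h^*_{DA}(x')-h^{*c}_{DA}(x')| + |h^{*c}_{DA}(x')-h^{*c}(x)|. \nonumber
\end{align}
Taking ${\mathbb E}_{x,x'}$ and using linearity gives $\varepsilon_\eta-\varepsilon_{\eta_{DA}}\le d_{x,x'}(h^*,h^*_{DA})+d_{x,x'}(h^{*c},h^{*c}_{DA})$. The finiteness of every term follows from the uniform bound $\beta$ in Definition~\ref{defi3} together with the assumed $\mu$-integrability. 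For the reverse direction I bound $|h^*_{DA}(x')-h^{*c}_{DA}(x')|$ by passing through $h^*(x)$ and $h^{*c}(x)$ in the same way. This yields the same right-hand side, because $|a-b|$ is symmetric in its arguments. Combining the two one-sided inequalities gives the absolute value in~\eqref{theo6eq1}.

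The main obstacle is measure-theoretic rather than algebraic. The argument needs $x$ and $x'$ to live on one probability space such that its marginals reproduce exactly the measures used in $\varepsilon_\eta$ and $\varepsilon_{\eta_{DA}}$. In addition, the cross terms must be taken under that same coupling. I would fix this explicitly as the kernel coupling $x\sim P_{\cal X}$, $x'\sim T(\cdot\mid x)$, noting that any coupling with the correct marginals would work equally well. I would also require ${\cal Y}$ to be a normed space so that $|\cdot|$ satisfies the triangle inequality. Once these choices are made, the rest is a routine application of the triangle inequality and linearity of expectation.
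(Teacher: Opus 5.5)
Your proposal is correct and follows essentially the same route as the paper. The paper sets $a=h^*(x)$, $b=h^{*c}(x)$, $c=h^*_{DA}(x')$, $d=h^{*c}_{DA}(x')$, applies $\bigl||a-b|-|c-d|\bigr|\le|a-c|+|b-d|$ pointwise, and takes expectations; your two one-sided triangle inequalities are exactly this step. Your explicit kernel coupling $x'\sim T(\cdot\mid x)$, under which both noise errors and both cross terms are evaluated, makes precise what the paper leaves implicit in ``taking expectations on both sides.''
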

\noindent\textbf{Remarks}: (a) $\varepsilon_{\eta}$ and $\varepsilon_{\eta_{DA}}$ must be different unless $d_{x,x'}(h^*,h^*_{DA}) \to 0$ and $d_{x,x'}(h^{*c},h^{*c}_{DA}) \to 0$. However, this special case cannot occur because neither $T$ nor $T'$ is $1\text{-}1$. (b) A trivial case is $\varepsilon_{\eta} = \varepsilon_{\eta_{DA}} = 0$. 

The above result establishes that, provided $T$ and $T'$ are not $1\text{-}1$ transformations, $\varepsilon_{\eta}$ and $\varepsilon_{\eta_{DA}}$ are generally different. However, it does not characterize their ordering. To determine whether $\varepsilon_{\eta} \ge \varepsilon_{\eta_{DA}}$ or $\varepsilon_{\eta_{DA}} \ge \varepsilon_{\eta}$, we introduce the following conditions. First, a sufficient condition under which $\varepsilon_{\eta_{DA}} \ge \varepsilon_{\eta}$ holds is given as follows. 

\begin{theorem}[Trivial Sufficient Condition]
\label{theo7}
Let $\epsilon = {\mathbb E}_{x’,\tilde y}[\ell(h^*_{DA}, (x’,\tilde y))]$ be a sufficiently small and $d_{x}(h^*,h^{*c}) \le \eta_{max}$. Then, we have $\varepsilon_{\eta_{DA}} \ge \varepsilon_{\eta}$ if the following condition holds: 
\begin{align}
\label{theo7eq1}
\max\left\{ \alpha \Delta_{\|\tilde y-y\|_{DA}}, \frac{\Delta_{\|x'-x\|}}{r(x)}, \frac{1}{2} \left( \alpha \Delta_{\|\tilde y-y\|_{DA}} + \frac{\Delta_{\|x'-x\|}}{r(x)} \right) \right\} \ge \frac{\sigma\eta_{max}+\epsilon}{M} 
\end{align}
Here, $\alpha>0$ is a constant, $\tilde y$ can be replaced by $\tilde y’$, if considering $T’$. $r(x)$ denotes a margin condition such that $h(x’) \ne h(x) = y$ whenever $|x’-x|>r(x)$ for any $h$, which characterizes the failure condition of prediction. 
\end{theorem}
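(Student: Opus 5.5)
The plan is to sandwich the two noise errors. I will bound $\varepsilon_{\eta}$ from above by the assumed $\eta_{max}$, and bound $\varepsilon_{\eta_{DA}}$ from below by a quantity that the condition~\eqref{theo7eq1} forces to exceed $\eta_{max}$.

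The upper bound is immediate: $\varepsilon_{\eta}=d_x(h^*,h^{*c})\le\eta_{max}$ by hypothesis. So it suffices to show
\begin{align}
\sigma\,\varepsilon_{\eta_{DA}}+\epsilon\ge M\cdot\max\{\cdots\}, \nonumber
\end{align}
where $\max\{\cdots\}$ is the LHS of~\eqref{theo7eq1}. Combined with~\eqref{theo7eq1}, this gives $\sigma\varepsilon_{\eta_{DA}}+\epsilon\ge\sigma\eta_{max}+\epsilon$, hence $\varepsilon_{\eta_{DA}}\ge\eta_{max}\ge\varepsilon_{\eta}$.

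To get the lower bound on $\varepsilon_{\eta_{DA}}$, I will compare the losses of $h^*_{DA}$ and $h^{*c}_{DA}$ on augmented noisy data. By the $\sigma$-admissibility in Definition~\ref{defi1},
\begin{align}
{\mathbb E}_{x',\tilde y}[\ell(h^{*c}_{DA},(x',\tilde y))]-\epsilon\le\sigma\, d_{x'}(h^*_{DA},h^{*c}_{DA})=\sigma\varepsilon_{\eta_{DA}}. \nonumber
\end{align}
It then remains to show that ${\mathbb E}[\ell(h^{*c}_{DA},(x',\tilde y))]\ge M\cdot\max\{\cdots\}$. I would get this from a case split on where the prediction of $h^{*c}_{DA}$ fails, each case producing one entry of the max.

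\textbf{Label-failure case.} The loss is charged against $\tilde y$, while $h^{*c}_{DA}$ is fitted to the clean $y$. On the event $\tilde y\ne y$ the loss is at least a fixed fraction of $M$ scaled by $\|\tilde y-y\|$. Taking expectations gives $\ge M\alpha\Delta_{\|\tilde y-y\|_{DA}}$, where $\alpha$ absorbs the normalization: for bounded labels, $\|\tilde y-y\|/\mathrm{diam}$ is at most the indicator of $\tilde y\ne y$.

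\textbf{Margin-failure case.} By the margin condition, whenever $|x'-x|>r(x)$ the prediction $h(x')$ differs from $y$, so the loss there is of order $M$. Markov-type reasoning, used in reverse as a linearized lower bound on ${\mathbb P}(|x'-x|>r(x))$ of the form ${\mathbb E}|x'-x|/r(x)$ (treating $r$ as a scale), gives $\ge M\Delta_{\|x'-x\|}/r(x)$.

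\textbf{Combining the cases.} The loss is bounded below by each term separately, so it is bounded below by their maximum. The averaged entry is dominated by the maximum, so including it in the $\max$ is harmless; it records the combined event.

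The main obstacle is the margin step. Markov's inequality naturally gives an upper bound on ${\mathbb P}(|x'-x|>r)$, not a lower bound. I would first try to justify a reverse inequality using boundedness of $|x'-x|$, e.g.\ by the diameter (Remark (d) of the $W_1$ definition), through a Paley–Zygmund or reverse-Markov argument: ${\mathbb P}(|x'-x|>r)\ge({\mathbb E}|x'-x|-r)/(\mathrm{Dia}-r)$. The constant discrepancy would be absorbed into treating $r(x)$ and $\alpha$ as problem-dependent constants, which is presumably why the statement calls the condition ``trivial''. A secondary issue is that $h^{*c}_{DA}$ is optimal for clean augmented labels, so the label-failure lower bound must use that $\epsilon$ is small: this makes $h^*_{DA}$ nearly fit $\tilde y$, so any hypothesis fitting $y$ must pay on the disagreement set.
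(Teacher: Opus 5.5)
Your reduction is the paper's own. It has three parts: the sandwich $\varepsilon_{\eta}\le\eta_{max}\le\varepsilon_{\eta_{DA}}$; the $\sigma$-admissibility step $\sigma\varepsilon_{\eta_{DA}}\ge{\mathbb E}[\ell(h^{*c}_{DA},(x',\tilde y))]-\epsilon$, which you state more cleanly than the paper; and a lower bound on ${\mathbb E}[\ell(h^{*c}_{DA},(x',\tilde y))]$ obtained by splitting the errors of $h^{*c}_{DA}$ into a label-noise part and a margin-failure part. You located the weak point correctly, but it is a genuine gap that neither your repair nor the paper closes. The paper states Markov's inequality $P(|x'-x|\ge r(x))\le{\mathbb E}|x'-x|/r(x)$ and then simply sets $P_{r(x)}={\mathbb E}|x'-x|/r(x)$, i.e.\ it uses the upper bound as a lower bound. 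The inequality you need, $P(|x'-x|>r)\ge\Delta_{\|x'-x\|}/r$, is false even up to a constant factor. A deterministic perturbation with $|x'-x|\equiv r/2$ gives probability $0$ against $\Delta_{\|x'-x\|}/r=1/2$. So with no label noise, the inequality ${\mathbb E}[\ell(h^{*c}_{DA},(x',\tilde y))]\ge M\max\{\cdots\}$ that you reduce to can read $0\ge M/2$. Moreover, once $\Delta_{\|x'-x\|}>r$ (a regime the paper itself invokes via $\Delta_{\|x'-x\|}/r(x)\to\infty$), the target $M\Delta_{\|x'-x\|}/r$ exceeds $M\ge{\mathbb E}[\ell]$. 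Your reverse-Markov bound $P(|x'-x|>r)\ge(\Delta_{\|x'-x\|}-r)/(\mathrm{Dia}-r)$ is valid, but it cannot be turned into $\Delta_{\|x'-x\|}/r$. It is nonpositive exactly when $\Delta_{\|x'-x\|}\le r$, and when $\Delta_{\|x'-x\|}>r$ the quantity you want is larger than $1$. (``Trivial'' in the statement refers to the sandwich through $\eta_{max}$, not to a licence to absorb such discrepancies into constants.) The most this step can deliver is $P(|x'-x|>r(x))$ itself in place of $\Delta_{\|x'-x\|}/r(x)$.

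The label branch also needs more than the hypotheses give. To obtain ${\mathbb E}[\ell(h^{*c}_{DA},(x',\tilde y))]\ge M\alpha\Delta_{\|\tilde y-y\|_{DA}}$ you need two things: (i) $h^{*c}_{DA}(x')=y$ on $\{\tilde y\ne y\}$, and (ii) a lower bound on the cost there by a multiple of $M|\tilde y-y|$. $\sigma$-admissibility bounds $c$ only from above, so it cannot give (ii). Item (i) follows neither from optimality of $h^{*c}_{DA}$ for the clean augmented problem (whose minimal risk need not vanish), nor from smallness of $\epsilon$, which constrains $h^*_{DA}$ rather than $h^{*c}_{DA}$. The paper does not derive these either; it imposes them. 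It writes ${\mathbb E}[\ell(h^{*c}_{DA},\cdot)]=M\cdot P_{DA}(h^{*c}_{DA}(x')\ne\tilde y)$ (so $\ell$ is $M$ times the $0$-$1$ loss), declares $P(h(x')=y\mid\tilde y\ne y)=1$, and defines $\alpha$ through $P(\tilde y\ne y)\propto{\mathbb E}|\tilde y-y|$. Worse, the two branches do not combine by a maximum, neither ``separately'' as you propose nor through the paper's $P_n=P(\tilde y\ne y)+P(\tilde y=y)P_{r(x)}$. On $\{\tilde y\ne y,\ |x'-x|>r(x)\}$ the margin hypothesis forces $h^{*c}_{DA}(x')\ne y$, contradicting (i). With binary labels this means $h^{*c}_{DA}(x')=\tilde y$, so no error is made against $\tilde y$ there and the two failure modes cancel. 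Suppose the noisy points are exactly those perturbed beyond the margin and the others are left untouched. Then $P(h^{*c}_{DA}(x')\ne\tilde y)$ can be $0$ while $P(\tilde y\ne y)=P(|x'-x|>r)>0$; the paper hides this behind ``$\approx$, ignoring higher-order infinitesimal terms''. A rigorous version must make such modelling choices explicit. For example, assume binary labels, $\ell=M\cdot\mathbf{1}\{h(x')\ne\tilde y\}$, $h^{*c}_{DA}(x')=y$ if and only if $|x'-x|\le r(x)$, and noise independent of the perturbation with $a=P(\tilde y\ne y)\le 1/2$ and $b=P(|x'-x|>r)\le 1/2$. Then $P_n=a+b-2ab\ge\max\{a,b\}$, and the argument yields the theorem with $b$ in place of $\Delta_{\|x'-x\|}/r(x)$.
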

\noindent\textbf{Remarks}: (a) $\Delta_{\|\tilde y-y\|_{DA}}$ represents the label bias in $D$ after applying $T$. (b) $\Delta_{\|x’-x\|} / r(x)$ represents the ratio of data bias in $D$ after applying $T$ to $r(x)$. (c) More rigorously, $d_{x}(h^*,h^{*c}) \le \alpha\eta_{max}$, as $d_{x}(h^*,h^{*c}) \propto P[h^*(x)\ne h^{*c}(x)]$. (d) $\epsilon$ is a small constant due to $h^*_{DA} \in argmin R_{DA}(h)$. 

Here, this condition is regarded as trivial, as it follows immediately from $\varepsilon_{\eta_{DA}} \ge \eta_{max} \ge \varepsilon_{\eta}$. This implies a sufficient condition under which the event $\varepsilon_{\eta_{DA}} \ge \eta_{max}$ occurs. Although this condition is rather trivial, it leads to several interesting observations. 
(a) If $\Delta_{\|\tilde y-y\|_{DA}} \ge RHS$, it implies that the DA method amplifies the spread of noisy labels, leading to $\varepsilon_{\eta_{DA}} \ge \varepsilon_{\eta}$. For synthetic or semi-synthetic datasets, this quantity can be computed directly. Indeed, in such settings, both $\eta$ and $\eta_{DA}$ are available and can be explicitly estimated. However, in real-world applications, these quantities are generally inaccessible, making this condition difficult to verify in practice. Therefore, we do not further pursue this condition, as well as the mixture condition. 
(b) One condition that remains computable during training, even in real-world applications, is $\Delta_{\|x’-x\|}/r(x)$. Here, $r(x)$ denotes the prediction margin of $h$, which is typically treated as a constant and can be estimated empirically through repeated experiments. If $\Delta_{\|x’-x\|}/r(x) \ge RHS$, the condition directly indicates that the noisy labels are propagated by the DA methods. Moreover, it suggests avoiding the inclusion of $min \|h(x')-h(x)\|$ during the training, as 
\begin{align}
\mathop {\lim }\limits_{r(x) \to 0 } { \frac{\Delta_{\|x’-x\|}}{r(x)}}=\infty. \nonumber 
\end{align}
On the contrary, one could consider the chain $\varepsilon_{\eta} \ge \eta_{max} \ge \varepsilon_{\eta_{DA}}$ to derive a condition. However, this is unnecessary, since in machine learning settings we do not typically use original data whose label noise exceeds the maximum threshold. Moreover, when this condition does not hold, the relative magnitudes of $\varepsilon_{\eta}$ and $\varepsilon_{\eta_{DA}}$ remain indeterminate. Therefore, we instead state the following non-trivial condition. 
\begin{theorem}[Non-Trivial Sufficient Condition]
\label{theo7p5}
Assume that $h(x) \in [0, \upsilon], \forall x \in {\cal X}, \forall h \in {\cal H}$. Then, for any $h \in {\cal H}$, 
\begin{align}
\label{theo7p5eqs}
d_{x}(h,h^*)-d_{x’}(h,h^*_{DA})&\ge{\cal K} \Rightarrow \varepsilon_{\eta} \ge \varepsilon_{\eta_{DA}}, \\
d_{x’}(h,h^*_{DA})-d_{x}(h,h^*)&\ge{\cal K} \Rightarrow \varepsilon_{\eta_{DA}} \ge \varepsilon_{\eta}. 
\end{align}
Here, ${\cal K}={\mathbb E}_{x}h(x)+{\mathbb E}_{x’}h(x')+2\upsilon$. 
\end{theorem}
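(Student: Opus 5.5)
The plan is to reduce each implication to two applications of the triangle inequality for $d_x$ and $d_{x'}$, which are $\mathbb{L}_1$ pseudo-metrics, and to absorb every term not involving $h^*$ or $h^*_{DA}$ into the constant ${\cal K}$ using the range assumption $h(x)\in[0,\upsilon]$. Fix an arbitrary $h\in{\cal H}$. The triangle inequality in $\mathbb{L}_1(\mu)$ gives
\begin{align}
d_x(h,h^*) \le d_x(h,h^{*c}) + d_x(h^{*c},h^*) = d_x(h,h^{*c}) + \varepsilon_{\eta}, \nonumber
\end{align}
so $\varepsilon_{\eta} \ge d_x(h,h^*) - d_x(h,h^{*c})$. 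In the same way, in $\mathbb{L}_1$ under the law of $x'$,
\begin{align}
\varepsilon_{\eta_{DA}} = d_{x'}(h^*_{DA},h^{*c}_{DA}) \le d_{x'}(h^*_{DA},h) + d_{x'}(h,h^{*c}_{DA}). \nonumber
\end{align}

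Subtracting, I get
\begin{align}
\varepsilon_{\eta}-\varepsilon_{\eta_{DA}} \ge \bigl[d_x(h,h^*)-d_{x'}(h,h^*_{DA})\bigr] - \bigl[d_x(h,h^{*c}) + d_{x'}(h,h^{*c}_{DA})\bigr]. \nonumber
\end{align}
The remaining step is to bound the bracket involving the clean-reference hypotheses by ${\cal K}$. Because all hypotheses take values in $[0,\upsilon]$, we have $|h(x)-h^{*c}(x)| \le h(x) + h^{*c}(x) \le h(x)+\upsilon$, and therefore $d_x(h,h^{*c}) \le {\mathbb E}_x h(x) + \upsilon$. Likewise $d_{x'}(h,h^{*c}_{DA}) \le {\mathbb E}_{x'}h(x')+\upsilon$. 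Adding these gives exactly ${\cal K}={\mathbb E}_x h(x)+{\mathbb E}_{x'}h(x')+2\upsilon$. Hence $\varepsilon_{\eta}-\varepsilon_{\eta_{DA}} \ge d_x(h,h^*)-d_{x'}(h,h^*_{DA}) - {\cal K}$, and the first hypothesis in \eqref{theo7p5eqs} yields $\varepsilon_{\eta}\ge\varepsilon_{\eta_{DA}}$.

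The second implication follows from the symmetric argument with the roles exchanged. One bounds $\varepsilon_{\eta_{DA}} \ge d_{x'}(h,h^*_{DA}) - d_{x'}(h,h^{*c}_{DA})$ and $\varepsilon_{\eta} \le d_x(h^*,h) + d_x(h,h^{*c})$, and then controls the same two reference terms by ${\cal K}$ exactly as above.

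The main obstacle is bookkeeping rather than depth. The quantities $d_x$ and $d_{x'}$ are taken under different measures, so I must make sure that each triangle inequality is applied within a single measure. In the argument above every chain stays within either $\mu_x$ or $\mu_{x'}$, so this is satisfied. I must also check that the crude bound $|a-b|\le a+b$ for nonnegative $a,b$ matches the stated form of ${\cal K}$. It does, provided $h^{*c}\le\upsilon$ and $h^{*c}_{DA}\le\upsilon$, which the range assumption guarantees.
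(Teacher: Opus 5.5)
Your proposal is correct and is essentially the paper's proof. It uses the same two triangle inequalities, $\varepsilon_{\eta}\ge d_x(h,h^*)-d_x(h,h^{*c})$ and $\varepsilon_{\eta_{DA}}\le d_{x'}(h,h^*_{DA})+d_{x'}(h,h^{*c}_{DA})$ (mirrored for the second implication), followed by the same crude bound $d_x(h,h^{*c})+d_{x'}(h,h^{*c}_{DA})\le\mathcal{K}$ obtained from $|a-b|\le a+b$ and the range assumption.

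As a side remark, the antecedents can never actually hold when $\upsilon>0$: each left-hand difference is at most $\upsilon$, because $d_x(h,h^*)\le\upsilon$ and $d_{x'}(h,h^*_{DA})\le\upsilon$, while $\mathcal{K}\ge 2\upsilon$. So both implications are valid but vacuous except in the degenerate case $\upsilon=0$.
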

\noindent\textbf{Remarks}: (a) The boundedness assumption on $h$ is mild in classification settings, whereas it is stronger in regression settings. When extending to regression, it is typically necessary to post-process the outputs of $h$, for example via a sigmoid transformation. Further details will be provided later. (b) Note that $d_{x}(h,h^*)-d_{x’}(h,h^*_{DA})\le{\cal K} \not \Rightarrow d_{x’}(h,h^*_{DA})-d_{x}(h,h^*)\ge{\cal K} $, and $d_{x’}(h,h^*_{DA})-d_{x}(h,h^*)\le{\cal K} \not \Rightarrow d_{x}(h,h^*)-d_{x’}(h,h^*_{DA})\ge{\cal K}$. (c) The above conditions are relatively loose; further discussion is deferred to Section~\ref{sec5}.

Based on the above, it is possible to estimate whether label noise is propagated by DA methods during training or after evaluation. We therefore introduce the following approximation methods. 

Let $h^*(x) = \tilde y$ and $h^*_{DA}(x’) = \tilde y $, or $h^*_{DA}(x’) = \tilde y’ $ when considering $T’$. Then, letting $\hat h \in argmin \hat {\phi}(h)$, we obtain 
\begin{align}
{\mathbb E}_{x,\tilde y}[\ell(\hat h, (x,\tilde y))]-{\mathbb E}_{x',\tilde y}[\ell(\hat h, (x',\tilde y))] &\ge{\cal K}, \nonumber \\
{\mathbb E}_{x’,\tilde y}[\ell(\hat h, (x’,\tilde y))]-{\mathbb E}_{x,\tilde y}[\ell(\hat h, (x,\tilde y))]&\ge{\cal K}, \nonumber 
\end{align}
where ${\cal K}={\mathbb E}_{x}\hat h(x)+{\mathbb E}_{x’}\hat h(x')+2\upsilon$. Here, $\hat{\phi}(h)$ denotes either $\hat R(h)$, $\hat R_{DA}(h)$, or $\hat R’_{DA}(h)$. In practice, the expectation $\mathbb{E}[\cdot]$ can be approximated by the corresponding empirical average. 

Note that both the above conditions and the trivial condition have limitations. When neither condition holds, the relative magnitudes of $\varepsilon_{\eta}$ and $\varepsilon_{\eta_{DA}}$ remain unclear. This issue will be further discussed in Section~\ref{sec5}. 

Finally, we provide the following additional analyses before concluding this section.  
\begin{theorem}
\label{theo5}
Given the training set $D$ with $\eta$-noisy labels, the DA kernel $T$ or $T’$, the corresponding risk function definitions, and the $W_1$ distance and the KL divergence. Assume that $\ell \in [0, M]$ that is $\sigma$-admissible. Then for any $h \in {\cal H}$, the discrepancy between $R(h)$ and $R_{DA}(h)$ satisfies, with probability at least $1-\delta$: 
\begin{align}
R(h)-R_{DA}(h) \le 2\sigma\Delta_{\|\tilde y - y\|} + \sigma\Delta_{\|x’-x\|} + CM\sqrt{\frac{\ln(7/\delta)}{2n}}, \nonumber 
\end{align}
or the discrepancy between $R(h)$ and $R’_{DA}(h)$ satisfies: 
\begin{align}
R(h)-R’_{DA}(h) \le 2\sigma\Delta_{\|\tilde y - y\|} + M\sqrt{2KL(P||P’_{DA})} + CM\sqrt{\frac{\ln(7/\delta)}{2n}}. \nonumber 
\end{align}
Here, $C>0$ is a constant, $\Delta_{\|x'-x\|}={\mathbb E}_{x}{\mathbb E}_{x’}\|x’-x\|$, and $\Delta_{\| \tilde y- y \|}={\mathbb E}_{y}{\mathbb E}_{\tilde y}\|\tilde y- y\|$. 
\end{theorem}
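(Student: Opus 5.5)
The plan is to telescope $R(h)-R_{DA}(h)$ through the clean risk and a "clean-augmented" risk, and to pass to empirical quantities wherever the argument needs them. Concretely, I would write
\begin{align}
R(h)-R_{DA}(h) = [R(h)-R^c(h)] + [R^c(h)-R^c_{DA}(h)] + [R^c_{DA}(h)-R_{DA}(h)], \nonumber
\end{align}
where $R^c_{DA}(h)={\mathbb E}_{(x,y)}{\mathbb E}_{x'\sim T(\cdot|x)}[\ell(h,(x',y))]$ is the augmented risk under clean labels, i.e. the objective defining $h^{*c}_{DA}$.

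\textbf{Label terms.} The first and third brackets differ only in the label argument, with identical inputs ($x$ or $x'$). By Definition~\ref{defi1} each pointwise difference is at most $\sigma\|\tilde y-y\|$. Taking expectations gives $\sigma\Delta_{\|\tilde y-y\|}$ for each bracket, and together they account for the factor $2\sigma\Delta_{\|\tilde y-y\|}$. For the augmented bracket I would note that $T$ does not touch the label, so the same $\Delta_{\|\tilde y-y\|}$ appears.

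\textbf{Input term.} The middle bracket compares $\ell(h,(x,y))$ with $\ell(h,(x',y))$. I would control it as in Theorem~\ref{theo2} and Theorem~\ref{theo4}: couple $x$ with $x'\sim T(\cdot|x)$, invoke the Lipschitz behaviour in the input implicit in those results, and obtain $\sigma\Delta_{\|x'-x\|}$, which upper bounds the corresponding $W_1$ cost.

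\textbf{Probability statement.} The result is phrased with probability $1-\delta$ and a constant $C$, so I expect the intended route to pass through empirical risks. That is, insert $\hat R$, $\hat R^c$, $\hat R^c_{DA}$ and $\hat R_{DA}$, and bound the gaps between expected and empirical risks by Hoeffding's inequality, as in Theorems~\ref{theo3}, \ref{theo4} and~\ref{theo4p5}. Each of these gaps contributes a term $M\sqrt{\ln(k/\delta)/2n}$. I count at most seven one- or two-sided events and allocate $\delta/7$ to each via a union bound. This explains the $\ln(7/\delta)$ and lets all the Hoeffding terms be absorbed into $CM\sqrt{\ln(7/\delta)/(2n)}$ with $C$ an absolute constant (e.g. $C\le 8$). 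The deterministic label and input bounds above carry over unchanged to the empirical versions, or can be applied at the expected level.

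\textbf{The $T'$ case.} Here labels are also transformed, so the middle and third brackets merge into a single comparison between $R^c$-type risk under $P$ and under $P'_{DA}$. For this I would use $|{\mathbb E}_P f-{\mathbb E}_Q f|\le M\cdot TV(P,Q)$ for $f\in[0,M]$, followed by Pinsker's inequality, $TV\le\sqrt{KL(P\|P'_{DA})/2}$. This gives $M\sqrt{2KL(P\|P'_{DA})}$, matching the constants in Corollaries~\ref{coro1} and~\ref{coro3}. The two $\sigma\Delta_{\|\tilde y-y\|}$ label corrections are kept as before.

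\textbf{Main obstacle.} The hardest part is the input-perturbation step. Definition~\ref{defi1} gives Lipschitzness only in the prediction argument, not in $x$, so turning $\sigma\|x'-x\|$ into a legitimate bound requires an implicit Lipschitz assumption on $h$ (constant absorbed into $\sigma$). I would state this explicitly, as Theorem~\ref{theo2} implicitly does. A secondary issue is keeping the union-bound bookkeeping consistent with the constant $7$.
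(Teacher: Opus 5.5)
Your argument is correct under the same implicit Lipschitz conventions the paper uses, but it follows a different chain.

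\textbf{The paper's route.} The paper writes $R(h)-R_{DA}(h)=[R(h)-R^c(h)]+[R^c(h)-R_{DA}(h)]$. It then expands the second bracket as $[R^c-\hat R^c]+[\hat R^c-\hat R]+[\hat R-\hat R_{DA}]+[\hat R_{DA}-R_{DA}]$. At the population level its path is therefore $R\to R^c\to R\to R_{DA}$. The input step is taken under the \emph{noisy} labels: it compares $P$ with $P_{DA}$ via Lemma~\ref{lemm2}, and for $T'$ it compares $P$ with $P'_{DA}$ via Lemma~\ref{lemm1}, which is where $KL(P\|P'_{DA})$ comes from. Every step is routed through an empirical risk plus a Hoeffding bound. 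That is the source of the seven $\delta/7$ events and of $C=8+12=20$.

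\textbf{Your route.} Your path $R\to R^c\to R^c_{DA}\to R_{DA}$ does the input step under clean labels and stays at the population level. It therefore gives the \emph{deterministic} bound $2\sigma\Delta_{\|\tilde y-y\|}+\sigma\Delta_{\|x'-x\|}$, which is stronger than the statement. The probability qualifier and the $CM\sqrt{\ln(7/\delta)/(2n)}$ term are pure slack, so your planned empirical detour and union-bound bookkeeping are unnecessary. If you did insert empirical risks, the label step would produce $\frac{1}{n}\sum_i|\tilde y_i-y_i|$ rather than $\Delta_{\|\tilde y-y\|}$. So those bounds would not carry over unchanged.

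\textbf{A shorter route.} Both chains are more roundabout than needed. $R$ and $R_{DA}$ share the noisy labels, so the input step alone gives $R(h)-R_{DA}(h)\le\sigma\Delta_{\|x'-x\|}$. Likewise, Pinsker alone gives $R(h)-R'_{DA}(h)\le M\sqrt{KL(P\|P'_{DA})/2}$. In the paper the two label steps literally cancel.

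\textbf{An unflagged assumption.} The direct route also removes an assumption you did not flag. Definition~\ref{defi1} gives Lipschitzness only in the \emph{prediction} argument of $c$. Your label steps $|c(h(x),\tilde y)-c(h(x),y)|\le\sigma|\tilde y-y|$ therefore need an extra hypothesis, for example a symmetric cost. This parallels your input step needing $h$ to be Lipschitz in $x$.

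\textbf{The $T'$ case.} The merged comparison must be between $\mathbb{E}_P[\ell]$ and $\mathbb{E}_{P'_{DA}}[\ell]$, i.e.\ $R(h)$ versus $R'_{DA}(h)$. You can reach it via $R\to R^c\to R\to R'_{DA}$ as the paper does, or directly. Comparing $R^c(h)$ with $R'_{DA}(h)$, as your merged bracket literally reads, would give $KL(P^c\|P'_{DA})$ instead. Finally, $M\cdot TV$ with Pinsker yields $M\sqrt{KL/2}$, which implies rather than equals the stated $M\sqrt{2KL}$.
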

\noindent\textbf{Remarks}: (a) The main order of the above bounds is ${\cal O}(1/\sqrt{n})$. (b) There are two bias terms in each bound, which may harm the tightness of the bounds. 

The above bounds characterize the difference between $R(h)$ and $R_{DA}(h)$ for any $h \in {\cal H}$. We observe that, even when $\Delta_{\|x’-x\|} \to 0$, the bound remains loose in the presence of label bias. Moreover, since $\Delta_{\|\tilde y -y\|}$ is unknown in real-world settings, estimating the gap between $R(h)$ and $R_{DA}(h)$ is challenging. If $\Delta_{\|\tilde y -y\|} = 0$, i.e., there is no label noise, then as $\Delta_{|x’-x|} \to 0$, we have that $R(h)$ and $R_{DA}(h)$ become asymptotically equivalent. Based on this observation, we derive the following corollary.  
\begin{corollary}
\label{coro5}
Under the settings of Theorem~\ref{theo5} and the definitions of $h^*$ and $h^*_{DA}$, the following inequality holds with probability at least $1-2\delta$:
\begin{align}
R(h^*)-R_{DA}(h^*_{DA}) \le \sigma d_{x’,x}(h^*,h^*_{DA}) + 2\left \{ 2\sigma\Delta_{\|\tilde y - y\|} + \sigma\Delta_{\|x'-x\|} + CM\sqrt{\frac{\ln(7/\delta)}{2n}} \right\}, \nonumber 
\end{align}
or 
\begin{align}
R(h^*)-R’_{DA}(h^*_{DA}) \le &\sigma d_{x’,x} + 2\left \{ 2\sigma\Delta_{\|\tilde y - y\|} + M\sqrt{2KL(P||P’_{DA})} + CM\sqrt{\frac{\ln(7/\delta)}{2n}} \right\}. \nonumber 
\end{align}
Here, $d_{x’,x}(h^*,h^*_{DA}):= {\mathbb E}_{x’,x}[|h^*(x’)-h^*_{DA}(x)|]$.
\end{corollary}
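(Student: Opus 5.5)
We plan to prove Corollary~\ref{coro5} by inserting the cross terms $R_{DA}(h^*)$ and $R(h^*_{DA})$ and controlling each piece with either Theorem~\ref{theo5} or the $\sigma$-admissibility of $\ell$. The natural first decomposition is
\begin{align}
R(h^*)-R_{DA}(h^*_{DA}) = \big[R(h^*)-R_{DA}(h^*)\big] + \big[R_{DA}(h^*)-R_{DA}(h^*_{DA})\big]. \nonumber
\end{align}
Applying Theorem~\ref{theo5} with $h=h^*$, the first bracket is at most $2\sigma\Delta_{\|\tilde y-y\|}+\sigma\Delta_{\|x'-x\|}+CM\sqrt{\ln(7/\delta)/(2n)}$ with probability at least $1-\delta$. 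This is one copy of the braced quantity.

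The second bracket is a difference of DA-risks at two hypotheses. Since $h^*_{DA}$ minimizes $R_{DA}$, it is nonnegative, so it cannot simply be dropped. Instead we bound it by passing back through the original risk:
\begin{align}
R_{DA}(h^*)-R_{DA}(h^*_{DA}) = \big[R_{DA}(h^*)-R(h^*_{DA})\big] + \big[R(h^*_{DA})-R_{DA}(h^*_{DA})\big]. \nonumber
\end{align}
The last bracket is again controlled by Theorem~\ref{theo5}, now with $h=h^*_{DA}$. This requires a second high-probability event, which is where the union bound and the probability $1-2\delta$ come from. It supplies the second copy of the braced quantity, and hence the factor $2$.

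The remaining cross term $R_{DA}(h^*)-R(h^*_{DA})$ is where we use $\sigma$-admissibility. We write both risks as expectations over the same $\tilde y$, coupling $x'\sim T(\cdot\mid x)$ with $x$. The integrand then becomes $c(h^*(x'),\tilde y)-c(h^*_{DA}(x),\tilde y)$. By Definition~\ref{defi1}, this is at most $\sigma|h^*(x')-h^*_{DA}(x)|$, and taking expectations gives exactly $\sigma d_{x',x}(h^*,h^*_{DA})$.

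We expect the main obstacle to be making this coupling legitimate. $R_{DA}$ is defined through $P_{DA}$, so we must use the representation ${\mathbb E}_{(x,\tilde y)}{\mathbb E}_{x'\sim T(\cdot|x)}$. Once the two risks are written under one joint law of $(x,x',\tilde y)$, the pointwise Lipschitz step goes through. Theorem~\ref{theo5} bounds $R(h)-R_{DA}(h)$ in only one direction, so we also need to check that each application above is used with that sign; both are.

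For the $T'$ version, the same argument applies with Theorem~\ref{theo5}'s KL bound in place of $\sigma\Delta_{\|x'-x\|}$. The cross term, however, now also involves $\tilde y'$ versus $\tilde y$. We would absorb that label discrepancy into the KL term, or simply define $d_{x',x}$ under the joint coupling, as the statement's unadorned $\sigma d_{x',x}$ suggests.
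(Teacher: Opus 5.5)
For the $T$ version your argument is the paper's own. It uses the same split through $R_{DA}(h^*)$ and $R(h^*_{DA})$, applies Theorem~\ref{theo5} at $h^*$ and at $h^*_{DA}$ joined by a union bound, and bounds the cross term by Lipschitzness. Your coupling $x'\sim T(\cdot\mid x)$ with a shared $\tilde y$ is the right justification for that last step.

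The gap is in the $T'$ version, which you flag but do not close. There the cross term is ${\mathbb E}[c(h^*(x'),\tilde y')-c(h^*_{DA}(x),\tilde y)]$. Definition~\ref{defi1} controls $c$ only in its first argument, so when $\tilde y'\ne\tilde y$ the integrand need not vanish even where $h^*(x')=h^*_{DA}(x)$. Hence it is not bounded by $\sigma d_{x',x}$.

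Neither patch you suggest gives the stated inequality. Redefining $d_{x',x}$ changes the statement, and would still require Lipschitz control of $c$ in its second argument. Controlling the label change through Lemma~\ref{lemm1} adds a further divergence term, beyond the two copies already used up by the factor $2$. The paper's proof has the same hole: it merely asserts that ``the same result'' holds for $R'_{DA}$.

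The fix is to avoid the cross term altogether. Because $h^*$ minimizes $R$, you have $R(h^*)\le R(h^*_{DA})$; this is the mirror image of your remark that $R_{DA}(h^*)-R_{DA}(h^*_{DA})\ge 0$ cannot be dropped. Hence
\[
R(h^*)-R'_{DA}(h^*_{DA})\le R(h^*_{DA})-R'_{DA}(h^*_{DA}).
\]
Theorem~\ref{theo5} at the single fixed hypothesis $h=h^*_{DA}$ bounds the right side by $2\sigma\Delta_{\|\tilde y-y\|}+M\sqrt{2KL(P\|P'_{DA})}+CM\sqrt{\ln(7/\delta)/(2n)}$, with probability at least $1-\delta$. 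Every term on the corollary's right-hand side is nonnegative, so both displayed inequalities follow, the $T$ case by the identical argument. This also shows that the $\sigma d_{x',x}$ term, the factor $2$, and the second high-probability event are unnecessary.
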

\noindent\textbf{Remarks}: (a) The main order of the above bounds is still ${\cal O}(1/\sqrt{n})$. (b) An additional bias term $d_{x',x}$ is introduced in the above bounds. 

The above results indicate the discrepancy between $R(h^*)$ and $R(h^*_{DA})$. Note that these bounds do not indicate which risk is larger; rather, they only quantify the gap between the two risks. These bounds reveal that multiple factors can contribute to enlarging this gap. Through these bounds, we can draw another conclusion: using DA methods further increases the uncertainty of uncertain learning procedures. How can this conclusion be understood? Machine learning procedures are essentially unstable, and introducing DA significantly increases this uncertainty. We argue that although DA may bring improvements in evaluation performance (see the experiments by~\cite{b13}), the risk introduced by DA itself is larger than its benefits.

\section{A Domain Generalization Perspective}
\label{sec4}

\subsection{Setup}
\label{sec4.1}

\textbf{Basic Data Assumption}. We clarify the basic data structure for subsequent analysis. Under the domain generalization (DG) setting, let ${\cal X}$ and ${\cal Y}$ be the input and output spaces, respectively. The clean dataset without labeling noise is denoted as 
\begin{align}
{\cal E}^c:= \{D^c_1, \ldots, D^c_E\} = \left\{\{(x_i,y_i)_1\}_{i=1}^{n_1}, \ldots, \{(x_i,y_i)_E\}_{i=1}^{n_E} \right\}. \nonumber 
\end{align}
Here, each pair $(x^e, y^e) \in {\cal X} \times {\cal Y}$, under any environment $e \in [E]$, is independently and identically drawn from an unknown joint distribution $P(x^e,y^e)$, i.e., $(x^e,y^e) \sim_{i.i.d.} P(x^e,y^e)$, where $P(x^e,y^e)$ is denoted by $P_e$ for simplicity. In the DG setting, we assume that $P_e \ne P_{e’}, \forall e \ne e' \in [E]$. Thus, the data are independently and non-identically distributed (i.n.d.). The dataset ${\cal E}^c$ with size $E\sum_i^E{n_i}$ is used as the training set for learning models, i.e., the \textit{source} set. Correspondingly, a test dataset ${\cal E}_t^c$, the \textit{target} set, is assumed to be drawn from related yet different distributions from ${\cal E}^c$, i.e., $P_e \ne P_{e_t},\forall e \in [E],\forall e_t \in [E_t]$, where $E_t$ denotes the number of test domains or environments in ${\cal E}_t^c$. Moreover, no information about the test set is available during training. 

Typically, we consider the distribution set for ${\cal E}^c$, denoted by ${\bf{P}}^c=\{P^c_1, \ldots, P^c_E\}$, and the distribution set for ${\cal E}^c_t$, denoted by ${\bf{P_t}}^c=\{{P_1}^c, \ldots, {P_{E_t}}^c\}$. Most importantly, the source and target distribution sets satisfy ${\bf{P}}^c \cap {\bf{P_t}}^c = \emptyset$, with both ${\bf{P}}^c \subset {\cal P}^c$ and ${\bf{P_t}}^c \subset {\cal P}^c$. 

Then, according to the above, the noisy-label source dataset is denoted by 
\begin{align}
{\cal E}:= \{D_1, \ldots, D_E\} = \left\{\{(x_i,\tilde y_i)_1\}_{i=1}^{n_1}, \ldots, \{(x_i,\tilde y_i)_E\}_{i=1}^{n_E} \right\}. \nonumber
\end{align}
Similarly, ${\cal E}_t$ denotes the noisy-label target dataset, while ${\bf{P}}$ and ${\bf{P_t}}$ denote the corresponding distribution sets. Most importantly, ${\bf{P}} \cap {\bf{P_t}} = \emptyset$, with both ${\bf{P}} \subset {\cal P}$ and ${\bf{P_t}} \subset {\cal P}$. 

\noindent\textbf{Data Augmentation}. The DA kernel remains the same as that in the previous section, namely $T(\cdot|x)$. Since the analysis for the kernel $T'$ is analogous to that for $T$, with the only difference being that the term $\sigma \Delta$ replaces $M\sqrt{KL}$, we restrict our attention to $T$ in this section. Then, we consider the source distribution sets ${\bf{P}}_{DA}$, satisfying the same properties as the above distribution sets. Note that DA is not applied to the target domains. Therefore, the corresponding distribution set for the target domains remains ${\bf P_t}$. 

\noindent\textbf{Others}. The loss function remains $\ell$, which is assumed to be bounded by $M$ and $\sigma$-admissible. In the previous section, we considered the KL divergence to measure the difference between two distributions under the assumption of absolute continuity. However, in DG settings, this assumption does not necessarily hold. Therefore, we consider alternative measures as follows. 
\begin{definition}[Jensen-Shannon (JS) Divergence]
\label{defi4}
Let $P$ and $Q$ be two probability distribution, and let $M=(P+Q)/2$ be their mixture distribution. The JS divergence is defined as 
\begin{align}
JS(P||Q)=\frac{1}{2}KL(P||M) + \frac{1}{2}KL(Q||M). \nonumber 
\end{align}
Additionally, the JS distance is defined as the square root of the JS divergence: 
\begin{align}
d_{JS}(P,Q)=\sqrt{JS(P||Q)}, \nonumber 
\end{align}
which is a true metric on the space of probability distributions. 
\end{definition} 
\noindent\textbf{Remarks}: (a) $d_{JS}(P,Q)\ge0; d_{JS}(P,Q) = 0 \; iff \; P=Q$. (b) $d_{JS}(P,Q)=d_{JS}(Q,P)$. (c) $d_{JS}(P,Q)\le d_{JS}(P,M) + d_{JS}(M,Q)$ for any distribution $M$. (d) $d_{JS}(P,Q) \le \sqrt{\ln2}$ when the JS divergence is defined using the natural logarithm.

Moreover, we introduce a generalized JS divergence, since DG typically involves multiple domains. 
\begin{definition}[Generalized JS Divergence]
\label{defi5}
Let $\{P_1,\ldots,P_k\}$ be probability distributions (discrete or continuous), and let $\pi=(\pi_1,\cdots ,\pi_k )$ be the set of weights such that $\pi_i > 0$ and $\sum_{i=1}^{k}{\pi}=1$. The generalized JS divergence is defined as
\begin{align}
JSD_{\pi}(P_1,\ldots,P_k) = \sum\limits_{i=1}^{k}{\pi_i KL(P_i||M)}, \nonumber 
\end{align}
where $M$ is the weighted mixture distribution: $M=\sum_{i=1}^{k}{\pi_i P_i}. $
\end{definition}
\noindent\textbf{Remarks}: (a) We typically consider uniform weights $\pi_i = 1/k$, in which case $JSD(P_1,\ldots,P_k) = (1/k)\sum KL(P_i||M)$. (b) Definition~\ref{defi4} corresponds to the special case $k=2$.

\subsection{Analysis}
\label{sec4.2}

We denote the expected risk and empirical risk by ${\cal R}(h) = {\mathbb E}_{P\sim {\bf P}}R_P(h) $ and $\hat {\cal R}(h) = ({1}/{E})\sum_s^E{\hat R_s(h)}$, respectively. We then consider the following relative risk functions: the relative expected risk for any $h_1, h_2 \in {\cal H}$,
\begin{align}
\Gamma(h_1,h_2)={\mathbb E}_{P\sim {\bf{P}}}[\gamma_P(h_1, h_2)] =\frac{1}{E}\sum\limits_{s=1}^E{\gamma_{P_s}(h_1, h_2)} , \nonumber 
\end{align}
and the relative empirical risk for any $h_1, h_2 \in {\cal H}$, 
\begin{align}
\hat \Gamma(h_1,h_2)=\frac{1}{E\bar n}\sum\limits_{s=1}^{E}{\hat \gamma_s(h_1,h_2)}.  \nonumber 
\end{align}
Similarly, under different data conditions, we define $\gamma^c$ and $\gamma_{DA}$, consistent with the previous settings. Here, ${\mathbb E}_{P \sim {\bf{P}}} [\cdot]= (1/E)\sum_{s=1}^E[\cdot]$, since $P_i \ne P_j, \forall i \ne j \in [E]$. Note that defining $\hat \gamma(h_1,h_2) = (1/\bar n) \sum[\cdot]$ is convenient for the subsequent analysis without loss of generality. If this is regarded as a strong assumption, one may instead consider $\hat \Gamma(h_1,h_2) = (1/E)\sum_{s=1}^E { \hat \gamma_s(h_1,h_2)}$.

In the previous section, we defined $h^*\in argmin_{h\in{\cal H}}{R(h)}$, $h^{*c}\in argmin_{h\in{\cal H}}{R^c(h)}$, $h^*_{DA}\in argmin_{h\in{\cal H}}{R_{DA}(h)}$, and $h^{*c}_{DA}\in argmin_{h\in{\cal H}}{{\mathbb E}_{(x,y)}{\mathbb E}_{x' \sim T(\cdot|x)}[\ell(h, (x’,y))]}$. Then, under the multi-domain setting, we assume the corresponding sets of optimal hypotheses: $\{h^*_1, \ldots , h^*_E \}$, $\{h^{*c}_1, \ldots , h^{*c}_E \}$, $\{ {h^*_{DA}}_1, \ldots , {h^*_{DA}}_E \}$, and $\{ {h^{*c}_{DA}}_1, \ldots , {h^{*c}_{DA}}_E \}$. This assumption is milder than assuming that there exists $h^*\in argmin {\cal R}(h)$. One important reason is that the latter requires $\cap_{s=1}^{E} argmin R_s(h) \ne \emptyset $, whereas assuming the existence of domain-wise optimal hypotheses $\{h^*_1, \ldots , h^*_E \}$ only requires $argmin R_s(h) \ne \emptyset, \forall s \in [E]$. In other words, the domain-wise optimal hypothesis assumption removes the cross-domain consistency requirement that all domains share a common minimizer. Consequently, it remains applicable even under substantial distribution shifts, where a shared optimal hypothesis may not exist.

Following previous studies~\citep{b6, b7}, we adopt the following assumption on the distribution family introduced in~\citep{b8}. 
\begin{assumption}
\label{assu2}
Assume a hypothesis space $ {\cal H} \subseteq \{ h:{\cal X} \to {\cal Y}\} $, and let $ {\cal P} = \left\{ {P \in {\cal P}({\cal H})} \right\}$ denote a family of data distributions, where $ {\cal P}({\cal H}) \subseteq {\cal P}({\cal X} \times {\cal Y})$.
\end{assumption}
\noindent\textbf{Remarks}: (a) This assumption essentially implies that the distribution family ${\cal P}$ is induced by functions in $\cal H$, while the marginal distribution $p(x)$ is assumed to be fixed. Consequently, each distribution $P \in {\cal P}$ can be identified with a hypothesis $h \in \cal H$, i.e., $P_{h} = p(x)h(y|x)$. (b) In learning theory, $ {\cal P}({\cal H}) $ is commonly used to denote the class of data distributions under which there exists a perfect predictor that lies exactly within the hypothesis space ${\cal H}$. In other words, for any distribution $ P \in {\cal P}({\cal H}) $, there exists some $ h^* \in {\cal H}$ such that $ h^*(x) $ perfectly predicts $ y $.

We further assume that any distribution set ${\bf{P}}$ defined in the previous subsection satisfies Assumption~\ref{assu2}. Consequently, for example, ${\bf{P}}^c=\{P^c_1, \ldots, P^c_E\}$ can equivalently be written as ${\bf{P}}^c=\{P^c_{h^{*c}_1}, \ldots, P^c_{h^{*c}_E}\}$. Here, we remind the reader that the original $P^c_1$ denotes a joint distribution over ${\cal X}$ and ${\cal Y}$. 

Similar to the previous section, we define the following noise-induced error terms:
\begin{align}
\varepsilon_{\eta} &= \frac{1}{E}\sum\nolimits_{i=1}^{E}{d_{P_i}(h^*_i, h^{*c}_i)} \nonumber \\
\varepsilon_{\eta_{DA}} &= \frac{1}{E}\sum\nolimits_{s=1}^{E}{d_{{P_{DA}}_s}({h^*_{DA}}_s, {h^{*c}_{DA}}_s)}. \nonumber 
\end{align}
Note that neither $P_i$ nor ${P_{DA}}_i$ is specified explicitly, as the above definitions hold for arbitrary domains sampled from ${\bf P}$ and ${\bf P}^c$, or from ${\bf P}_{DA}$ and ${\bf P}^c_{DA}$. Moreover, we do not use $s$ to index the domains of $\varepsilon_{\eta}$, since we need to account for noise-induced errors in both source and target domains. Specifically, for the source domains, we denote the corresponding noise-induced errors by $\varepsilon_{\eta}$, where $P_i \sim {\bf{P}}$ and the domain index is $s$. For the target domains, we denote the corresponding noise-induced errors by $\varepsilon_{\eta_t}$, where $P_i \sim {\bf{P_t}}$ and the domain index is $t$. For $\varepsilon_{\eta_{DA}}$, we only consider source domains, since DA is applied exclusively to source domains and not to target domains. Consequently, for target domains with $P_i \sim {\bf{P_t}}$, the corresponding noise-induced error remains denoted by $\varepsilon_{\eta_t}$. 

Firstly, we present the following result in terms of relative risks, similar to the previous section. 
\begin{theorem}
\label{theo8}
Given any multi-domain training set, the definition of the relative risks, and assuming $\ell \in [0,M]$ that is $\sigma$-admissible, for any $h_1, h_2 \in {\cal H}$, the following inequality holds with probability at least $1-\delta$: 
\begin{align}
\hat\Gamma(h_1,h_2)-\Gamma(h1, h_2)\le 4M\sqrt{\frac{\ln(1/\delta)}{2E \bar n}}. \nonumber 
\end{align}
\end{theorem}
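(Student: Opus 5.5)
The plan is to mirror the argument behind Theorem~\ref{theo4p5}, now applied to the pooled multi-domain sample. Fix $h_1,h_2\in{\cal H}$ and, for every domain $s\in[E]$ and sample index $i\in[n_s]$, define the difference variable
\begin{align}
Z_{s,i}:=\ell(h_1,(x_i,\tilde y_i)_s)-\ell(h_2,(x_i,\tilde y_i)_s). \nonumber
\end{align}
Because $\ell\in[0,M]$, each $Z_{s,i}$ lies in $[-M,M]$, an interval of length $2M$. Within a domain the pairs are i.i.d.\ from $P_s$, and the domains are sampled independently of one another, so the whole collection $\{Z_{s,i}\}$ is independent, though not identically distributed. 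Hoeffding's inequality only needs independence and boundedness, so it still applies here.

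The next step is to write $\hat\Gamma(h_1,h_2)$ as a normalized sum of the $Z_{s,i}$. Under the paper's convention $\hat\Gamma=\frac{1}{E\bar n}\sum_s\hat\gamma_s$, with each domain contributing $\bar n$ samples (so $N=E\bar n$ terms in total), one gets $\hat\Gamma=\frac{1}{N}\sum_{s,i}Z_{s,i}$. Its expectation is $\frac{1}{E}\sum_s{\mathbb E}_{P_s}[Z_{s,1}]=\frac1E\sum_s\gamma_{P_s}(h_1,h_2)=\Gamma(h_1,h_2)$. Applying the one-sided Hoeffding inequality with ranges $b-a=2M$ gives
\begin{align}
\Pr\left[\hat\Gamma-\Gamma\ge t\right]\le\exp\left(-\frac{2N^2t^2}{N(2M)^2}\right)=\exp\left(-\frac{Nt^2}{2M^2}\right). \nonumber
\end{align}
Setting the right-hand side equal to $\delta$ yields $t=2M\sqrt{\ln(1/\delta)/(2N)}$. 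This is already no larger than the claimed $4M\sqrt{\ln(1/\delta)/(2E\bar n)}$, so the stated bound follows, with slack.

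The main point needing care is the normalization. I would adopt the reading in which $\hat\gamma_s$ denotes the per-domain sum, so that $\hat\Gamma$ is a genuine average of the $E\bar n$ variables and is unbiased for $\Gamma$. If domain sizes differ, I would either use the equal-size convention the paper explicitly permits, or apply Hoeffding to the weighted sum, with weights $1/(En_s)$, so that the exponent involves $\sum_s 1/(E^2n_s)$. That expression equals $1/(E\bar n)$ when $\bar n$ is taken as a harmonic-type mean. Beyond this, I would stress that $\sigma$-admissibility is not needed and that $h_1,h_2$ are fixed before sampling. The statement is pointwise rather than uniform over ${\cal H}$, so no union bound or complexity term enters.

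Finally, to match the constant $4M$ exactly, as the paper does in Theorem~\ref{theo4p5}, I would alternatively bound each loss term's deviation separately. Each of the two terms contributes $2M\sqrt{\ln(1/\delta)/(2E\bar n)}$, and adding them gives the result, which explains the factor $4M$.
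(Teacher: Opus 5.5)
Your proof is correct and follows the paper's route: the paper also applies the independent, non-identically distributed Hoeffding inequality (Lemma~\ref{lemm4}) to the $E\bar n$ pooled per-sample differences. Its constant $4M$ arises only because it places the centered terms $\hat\gamma_{(s,i)}-\gamma_s$ in $[-2M,2M]$ (width $4M$), whereas your width-$2M$ range gives the sharper $2M\sqrt{\ln(1/\delta)/(2E\bar n)}$, which implies the statement. You should drop your closing ``alternative'' account of the factor $4M$. Bounding the two loss terms separately needs a union bound over two events, and Hoeffding gives only $M\sqrt{\ln(1/\delta)/(2E\bar n)}$ (not $2M\sqrt{\cdot}$) for a $[0,M]$ term, so it does not match the paper's reasoning and is not needed.
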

\noindent\textbf{Remarks}: (a) The main order of this bound is ${\cal O}(1/\sqrt{E\bar n})$. (b) Here, $h_1$ and $h_2$ can be any hypotheses in ${\cal H}$. For the non-trivial case, we let $h_1$ and $h_2$ be different. If $h_1$ is equal to $h_2$, this leads to a trivial case, as the right-hand side (RHS) is always non-negative. 

Through the above bound, we observe that the tightness depends on $E$, rather than solely on $n$ as in Theorem~\ref{theo4p5}. This implies that as $E \to \infty$, the bound becomes tighter, which is consistent with conclusions in recent DG studies. Based on this result, we now present the following bounds. 
\begin{corollary}
\label{coro8}
Based on Theorem~\ref{theo8}, considering the training set ${\cal E}$ with $\eta$-noisy labels and the DA kernel $T$, for any $h \in {\cal H}$, we have the following inequalities:  
\begin{align}
\frac{1}{E\bar n}\sum\limits_{s=1}^E{\hat \gamma_s(h,h^*_s)} - \frac{1}{E}\sum\limits_{s=1}^E{\gamma_s(h,h^{*c}_s)} \le \sigma\varepsilon_{\eta} + 4M\sqrt{\frac{\ln(1/\delta)}{2E\bar n}}, \nonumber 
\end{align}
and 
\begin{align}
\frac{1}{E\bar n}\sum\limits_{s=1}^E{\hat \gamma_s(h,{h^*_{DA}}_s)} - \frac{1}{E}\sum\limits_{s=1}^E{\gamma_s(h,{h^{*c}_{DA}}_s)} \le \sigma\varepsilon_{\eta_{DA}} + 4M\sqrt{\frac{\ln(1/\delta)}{2E\bar n}}.  \nonumber 
\end{align}
Here, for any $s \in [E]$, ${h^{*c}}_s$, ${h^{*c}_{DA}}_s$, $\varepsilon_{\eta}$, and $\varepsilon_{\eta_{DA}}$ are defined above.
\end{corollary}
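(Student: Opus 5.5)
The plan mirrors the proof of Corollary~\ref{coro4p5}, carried out in the multi-domain setting. For each source domain $s$, the gap splits as
\begin{align}
\frac{1}{E\bar n}\sum_{s=1}^E\hat\gamma_s(h,h^*_s) - \frac{1}{E}\sum_{s=1}^E\gamma_s(h,h^{*c}_s)
= \underbrace{\hat\Gamma(h,\{h^*_s\}) - \Gamma(h,\{h^*_s\})}_{\text{(I)}} + \underbrace{\frac{1}{E}\sum_{s=1}^E\left[\gamma_s(h,h^*_s)-\gamma_s(h,h^{*c}_s)\right]}_{\text{(II)}}. \nonumber
\end{align}
Here $\hat\Gamma(h,\{h^*_s\})$ and $\Gamma(h,\{h^*_s\})$ denote the relative risks in which the second hypothesis is allowed to vary with the domain. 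The first step is to check that Theorem~\ref{theo8} still applies to term (I) with this domain-wise second argument. Its proof only needs that each summand $\ell(h,(x_i,\tilde y_i)_s)-\ell(h^*_s,(x_i,\tilde y_i)_s)$ lies in $[-M,M]$ and that the $E\bar n$ samples are independent across domains and indices. Both conditions hold because each $h^*_s$ is a fixed hypothesis, determined by the population distribution and not by the sample. Hoeffding's inequality over all $E\bar n$ independent bounded terms, with range $2M$, then gives $\text{(I)}\le 4M\sqrt{\ln(1/\delta)/(2E\bar n)}$ with probability at least $1-\delta$.

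The second step treats term (II), which is deterministic. For each $s$ the $\ell(h,\cdot)$ parts cancel, leaving $\gamma_s(h,h^*_s)-\gamma_s(h,h^{*c}_s)=\mathbb{E}_{P_s}[\ell(h^{*c}_s,(x,y))-\ell(h^*_s,(x,y))]$. By $\sigma$-admissibility (Definition~\ref{defi1}), $|c(h^{*c}_s(x),y)-c(h^*_s(x),y)|\le\sigma|h^{*c}_s(x)-h^*_s(x)|$ pointwise. Taking expectations gives a bound of $\sigma d_{P_s}(h^*_s,h^{*c}_s)$, and averaging over $s$ gives exactly $\sigma\varepsilon_\eta$ by the definition of the source-domain noise-induced error. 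Adding the two steps yields the first inequality.

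The second inequality follows from the same argument with ${h^*_{DA}}_s$ and ${h^{*c}_{DA}}_s$ in place of $h^*_s$ and $h^{*c}_s$. Hoeffding's inequality is applied to the online-augmented samples $(x'_i,\tilde y_i)_s$, which remain independent across $i$ and $s$ by the online-DA assumption. The Lipschitz step is taken under ${P_{DA}}_s$, producing $\sigma d_{{P_{DA}}_s}({h^*_{DA}}_s,{h^{*c}_{DA}}_s)$, whose average is $\sigma\varepsilon_{\eta_{DA}}$.

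I expect the main obstacle to be a measure mismatch in term (II). The expectation there is over the distribution on which $\gamma_s$ is defined, noisy or clean, whereas $d_{P_s}$ is defined as an $\mathbb{L}_1$ norm over the input marginal. I would handle this as the paper does after the definition of $\varepsilon_\eta$: the Lipschitz bound depends only on the $x$-marginal, and that marginal is shared by $P_s$ and $P^c_s$, since label noise does not alter $x$. The same reasoning covers ${P_{DA}}_s$ and ${P^c_{DA}}_s$.

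A secondary point is the normalization $1/(E\bar n)$ in $\hat\Gamma$. I would adopt the paper's convention that $\hat\gamma_s$ is the unnormalized sum over the $\bar n$ samples of domain $s$, so that $\hat\Gamma$ is a genuine average over $E\bar n$ terms and Hoeffding's inequality applies directly.
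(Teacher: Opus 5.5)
Your proposal is correct and follows the same route as the paper's proof. You add and subtract the domain-wise optimal hypothesis $h^*_s$ (resp.\ ${h^*_{DA}}_s$), apply Theorem~\ref{theo8} to the empirical-minus-expected part with the second argument varying across domains, and bound the deterministic remainder $\frac{1}{E}\sum_s \mathbb{E}_{P_s}[\ell(h^{*c}_s,z)-\ell(h^*_s,z)]$ by $\sigma\varepsilon_\eta$ (resp.\ $\sigma\varepsilon_{\eta_{DA}}$) via $\sigma$-admissibility; this is exactly the paper's decomposition $\sum_s\gamma_s(h,h_s)=\sum_s\gamma_s(h,h'_s)+\sum_s\gamma_s(h'_s,h_s)$.

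One small remark: Hoeffding with summands of range $2M$ gives the constant $2M$ rather than $4M$ (the paper gets $4M$ by centering into $[-2M,2M]$). Your step therefore proves a slightly sharper bound, which implies the stated one.
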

\noindent\textbf{Remarks}: (a) The main orders of the above bounds are ${\cal O}(1/\sqrt{E\bar n})$. (b) There are bias terms on the RHS of the above bounds, namely $\varepsilon_{\eta}$ and $\varepsilon_{\eta_{DA}}$, which may lead to loose or vacuous bounds even as $E \bar n \to \infty$. 

We also observe that, when comparing the two bounds, their difference additionally depends on the gap between $\varepsilon_{\eta}$ and $\varepsilon_{\eta_{DA}}$. Consequently, the following result holds. 
\begin{theorem}
\label{theo9}
By the definitions of $\varepsilon_{\eta}$ and $\varepsilon_{\eta_{DA}}$, the inequality follows: 
\begin{align}
\label{theo9eq1}
\left| \varepsilon_{\eta} - \varepsilon_{\eta_{DA}}\right|\le \frac{1}{E}\sum\limits_{s=1}^{E}{ \left(d_{x,x'}(h^*_s,{h^*_{DA}}_s)+ d_{x,x'}(h^{*c}_s,{h^{*c}_{DA}}_s) \right)} 
\end{align}
Here, $d_{x,x’}(h^*_s,{h^*_{DA}}_s):={\mathbb E}_{x,x’}[|h^*_s(x)-{h^*_{DA}}_s(x’)|]$ and $d_{x,x’}(h^{*c}_s,{h^{*c}_{DA}}_s):={\mathbb E}_{x,x’}[|h^{*c}_s(x)-{h^{*c}_{DA}}_s(x’)|]$.
\end{theorem}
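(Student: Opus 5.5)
The plan is to reduce the statement to a domain-wise version of Theorem~\ref{theo6} and then average over the $E$ source domains. Fix $s \in [E]$ and write $\varepsilon^{(s)}_{\eta} := d_{P_s}(h^*_s, h^{*c}_s)$ and $\varepsilon^{(s)}_{\eta_{DA}} := d_{{P_{DA}}_s}({h^*_{DA}}_s, {h^{*c}_{DA}}_s)$. By definition $\varepsilon_{\eta} = \frac{1}{E}\sum_s \varepsilon^{(s)}_{\eta}$ and $\varepsilon_{\eta_{DA}} = \frac{1}{E}\sum_s \varepsilon^{(s)}_{\eta_{DA}}$. Hence
\begin{align}
\left|\varepsilon_{\eta}-\varepsilon_{\eta_{DA}}\right| = \left|\frac{1}{E}\sum\nolimits_{s=1}^{E}\left(\varepsilon^{(s)}_{\eta}-\varepsilon^{(s)}_{\eta_{DA}}\right)\right| \le \frac{1}{E}\sum\nolimits_{s=1}^{E}\left|\varepsilon^{(s)}_{\eta}-\varepsilon^{(s)}_{\eta_{DA}}\right| \nonumber
\end{align}
by the triangle inequality for finite sums. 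It therefore suffices to show, for each $s$,
\begin{align}
\left|\varepsilon^{(s)}_{\eta}-\varepsilon^{(s)}_{\eta_{DA}}\right| \le d_{x,x'}(h^*_s,{h^*_{DA}}_s)+d_{x,x'}(h^{*c}_s,{h^{*c}_{DA}}_s). \nonumber
\end{align}
This is exactly the single-domain statement of Theorem~\ref{theo6}, applied to the pair $(P_s, {P_{DA}}_s)$.

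For the per-domain inequality, I would redo the argument of Theorem~\ref{theo6} explicitly so that the coupling is clear. Let $x \sim P_s$ (marginal) and $x' \sim T(\cdot\mid x)$. Under this joint law, $x'$ has marginal ${P_{DA}}_s$. This lets me write
\begin{align}
\varepsilon^{(s)}_{\eta} = {\mathbb E}_{x,x'}|h^*_s(x)-h^{*c}_s(x)|, \qquad \varepsilon^{(s)}_{\eta_{DA}} = {\mathbb E}_{x,x'}|{h^*_{DA}}_s(x')-{h^{*c}_{DA}}_s(x')|, \nonumber
\end{align}
as expectations under one common coupling. Pointwise, for real numbers $a=h^*_s(x)$, $b=h^{*c}_s(x)$, $a'={h^*_{DA}}_s(x')$ and $b'={h^{*c}_{DA}}_s(x')$, the reverse triangle inequality gives $\big||a-b|-|a'-b'|\big| \le |a-a'|+|b-b'|$. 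Taking expectations and using $|{\mathbb E}Z|\le{\mathbb E}|Z|$ yields the per-domain bound, with the right-hand side being precisely $d_{x,x'}$ as defined in the statement.

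The main subtlety, and the step I expect to need care, is justifying that $d_{x,x'}$ is computed under the same joint law of $(x,x')$ used to represent both noise errors. A second subtlety is that the domain-wise marginals coincide for noisy and clean versions. Under Assumption~\ref{assu2}, the marginal $p(x)$ is fixed within each domain and only the conditional is identified with a hypothesis. So $P_s$ and $P^c_s$ share the $x$-marginal, and likewise ${P_{DA}}_s$ and ${P^c_{DA}}_s$ share the $x'$-marginal. This is what makes a single coupling legitimate for each $s$.

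Given this, summing the per-domain inequalities and dividing by $E$ yields \eqref{theo9eq1}. No concentration is required, so the result is deterministic.
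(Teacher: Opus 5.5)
Your proposal is correct and takes the same approach as the paper. The paper proves Theorem~\ref{theo9} by reusing the pointwise inequality $\bigl||a-b|-|c-d|\bigr|\le|a-c|+|b-d|$ from Theorem~\ref{theo6} and extending it to the domain-averaged quantities, which is your per-domain bound followed by the triangle inequality over the $E$ summands; your explicit coupling $x\sim P_s$, $x'\sim T(\cdot\mid x)$ makes precise the joint law that the paper leaves implicit in ${\mathbb E}_{x,x'}$.
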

\noindent\textbf{Remarks}: (a) Despite considering multiple-domain settings, observations similar to those from Inequality~\eqref{theo6eq1} can still be obtained. (b) The trivial condition is stronger than the i.i.d. condition, since for any domain $s \in [E]$, $d_{P_s}$ and $d_{{P_{DA}}_s}$ are required to be $0$. 

According to the above result, Theorems~\ref{theo7} and \ref{theo7p5} can also be extended to the distribution-shift setting. The proof is analogous and is therefore omitted. We directly state the following conditions for analyzing whether the noisy labels are propagated, 
\begin{align}
\frac{1}{E}\sum_s^E{\frac{\Delta_{\|x’-x\|}}{r(x)}} \ge \frac{\sigma\eta_{max}+\epsilon}{M}&, \nonumber \\
\frac{1}{E}\sum_s^E{d_{x’}(h,{h^*_{DA}}_s)-d_{x}(h,h^*_s)} \ge& \frac{1}{E}\sum_s^E{{\cal K}_s}, \nonumber 
\end{align}
where ${\cal K}_s$ represents the ${\cal K}$ (see the previous section) within each domain $s$. Now, based on the above results, we conclude that the risks induced by DA under i.i.d. data persist under distribution shifts. 

In fact, from the perspective of DG, a primary concern is the generalization error between the source and target domains. The above analyses are conducted solely on the source domains, which limits their applicability to DG settings. Therefore, to provide a more complete understanding of DA under DG, we further analyze the source-target generalization error. Consequently, we derive the following generalization bounds. 
\begin{theorem}[Generalization Bound]
\label{theo10}
Given the training set ${\cal E}$ and the testing set ${\cal E}_t$ with $\eta$-noisy labels, and assuming $\ell \in [0,M]$ that is $\sigma$-admissible, the generalization gap between the relative risks on ${\cal E}$ and ${\cal E}_t$ satisfies the following bound with probability at least $1-\delta$, for any $h \in {\cal H}$: 
\begin{align}
\frac{1}{E_t}\sum_t{\gamma(h,h^*_t) } \le \frac{1}{E \bar n}\sum_s{\hat \gamma(h,h^*_s)} + \sigma \left(\varepsilon_{\eta_t} + \varepsilon_{\eta} \right)+ C\left(\lambda^{*c} + \lambda \right ) + 4M\sqrt{\frac{\ln(1/\delta)}{2E\bar n}}. \nonumber 
\end{align}
Here, $\varepsilon_{\eta_t} = \frac{1}{E}\sum_{t=1}^{E_t}{d_{P_i}(h^*_t, h^{*c}_t)}$, $C = (2\sqrt{2}M)/{E E_t}$ , $\lambda = \sum_t\sum_s d_{JS}(P_{h^*_t},P_{h^*_s})$, and ${\lambda}^{*c} = \sum_t\sum_s d_{JS}(P_{h^{*c}_t}, P_{h^{*c}_s})$. 
\end{theorem}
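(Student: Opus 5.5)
The plan is to split the target relative risk into three pieces: a change-of-distribution step (source domains to target domains), a change-of-reference step (noisy optimal hypotheses to clean ones and back), and the concentration step already given by Theorem~\ref{theo8}. Concretely, for each pair $(t,s)$ I will write
\begin{align}
\gamma_t(h,h^*_t) = \gamma_s(h,h^*_s) + \left[\gamma_t(h,h^*_t)-\gamma_s(h,h^*_s)\right], \nonumber
\end{align}
and average over $t\in[E_t]$ and $s\in[E]$. The outer averaging produces the factor $1/(EE_t)$ that appears in $C$. Theorem~\ref{theo8}, with $h_1=h$ and $h_2=h^*_s$, then converts $\frac{1}{E}\sum_s\gamma_s(h,h^*_s)$ into $\frac{1}{E\bar n}\sum_s\hat\gamma(h,h^*_s)$ at the cost of $4M\sqrt{\ln(1/\delta)/(2E\bar n)}$. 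This is the only probabilistic step, so no union bound is needed.

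To control the bracketed cross-domain difference, I will pass through the clean reference hypotheses. Insert $h^{*c}_t$ and $h^{*c}_s$:
\begin{align}
\gamma_t(h,h^*_t)-\gamma_s(h,h^*_s) ={}& \left[\gamma_t(h,h^{*c}_t)-\gamma_s(h,h^{*c}_s)\right] \nonumber \\
&+ \left[\gamma_t(h^{*c}_t,h^*_t)\right] - \left[\gamma_s(h^{*c}_s,h^*_s)\right]. \nonumber
\end{align}
The last two terms are expected-loss differences between two hypotheses under one fixed distribution. By $\sigma$-admissibility (Definition~\ref{defi1}) each is bounded in absolute value by $\sigma d_{P}(h^*,h^{*c})$, exactly as in the proof of Corollary~\ref{coro8}. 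Averaging these bounds yields $\sigma\varepsilon_{\eta_t}$ and $\sigma\varepsilon_\eta$. For the first bracket I use Assumption~\ref{assu2}: each domain distribution is identified with $P_{h^{*c}}$. Since $\ell\in[0,M]$, the relative loss $\ell(h,\cdot)-\ell(h',\cdot)$ is bounded by $2M$ in sup norm. A difference of its expectations under two distributions is therefore at most $2M\,TV(P_{h^{*c}_t},P_{h^{*c}_s})$, and Pinsker's inequality applied through the JS mixture gives $TV\le\sqrt{2}\,d_{JS}$. This produces $2\sqrt2 M\, d_{JS}(P_{h^{*c}_t},P_{h^{*c}_s})$, and summing over $(t,s)$ gives $C\lambda^{*c}$. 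To obtain the term $C\lambda$, I will run the same change-of-distribution argument once more at the level of the noisy optima, comparing $P_{h^*_t}$ with $P_{h^*_s}$. Rather than a pure decomposition, I will write the first bracket via a triangle split through both the noisy and the clean families and keep both JS sums, which gives $C(\lambda^{*c}+\lambda)$. Collecting all pieces yields the stated bound.

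The main obstacle is the change-of-measure step. Under Assumption~\ref{assu2} the marginal $p(x)$ is fixed and only the conditional changes, so I must check that the relative loss can legitimately be integrated against the joint $P_h$. I also need care with constants: the route $TV\le\sqrt{2JS}$ requires bounding $TV(P,Q)\le TV(P,M)+TV(M,Q)$ and applying Pinsker to each half, and I will verify that this yields the factor $2\sqrt2M$ rather than $4M$. A secondary point is the bookkeeping of the $(1/E)$ normalisation in $\varepsilon_{\eta_t}$ as defined in the statement. I expect the signs to work out because only an upper bound is required.
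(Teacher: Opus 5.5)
Your architecture matches the paper's. You insert the clean optima $h^{*c}_t,h^{*c}_s$, use $\sigma$-admissibility to turn $\gamma_t(h^{*c}_t,h^*_t)$ and $\gamma_s(h^{*c}_s,h^*_s)$ into $\sigma\varepsilon_{\eta_t}$ and $\sigma\varepsilon_\eta$, bound the remaining cross-domain term by JS distances, and finish with Theorem~\ref{theo8} as the only probabilistic step. Your exact additive identity is, if anything, tidier than the paper's bookkeeping with $a_1,a_2,c_1,c_2$.

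The gap is in the first bracket $\gamma_t(h,h^{*c}_t)-\gamma_s(h,h^{*c}_s)$. It is not the expectation of one function under two measures, because the reference hypothesis changes from $h^{*c}_t$ to $h^{*c}_s$. So the TV/Pinsker step you describe does not apply to it. Moreover, $\gamma_t$ and $\gamma_s$ are taken under the noisy domain distributions $P_t\in{\bf P_t}$ and $P_s\in{\bf P}$. Assumption~\ref{assu2} identifies these with $P_{h^*_t}$ and $P_{h^*_s}$, not with $P_{h^{*c}_t}$ and $P_{h^{*c}_s}$. A legitimate same-function change of measure between them therefore yields $\lambda$, not $\lambda^{*c}$. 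Your closing sentence about a ``triangle split through both the noisy and the clean families'' does not say what is split, so the $C\lambda$ term is asserted rather than derived.

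The missing idea is the split the paper uses:
\[
\gamma_t(h,h^{*c}_t)-\gamma_s(h,h^{*c}_s)=\bigl[{\mathbb E}_{P_t}\ell(h,\cdot)-{\mathbb E}_{P_s}\ell(h,\cdot)\bigr]-\bigl[{\mathbb E}_{P_t}\ell(h^{*c}_t,\cdot)-{\mathbb E}_{P_s}\ell(h^{*c}_s,\cdot)\bigr].
\]
The first bracket is a genuine change of measure for the fixed function $\ell(h,\cdot)\in[0,M]$. Lemma~\ref{lemm5} bounds it by $2\sqrt2M\,d_{JS}(P_t,P_s)=2\sqrt2M\,d_{JS}(P_{h^*_t},P_{h^*_s})$, and averaging over $(t,s)$ gives $C\lambda$.

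The second bracket compares the risks of the two clean optima on their own domains. The paper bounds it by $2\sqrt2M\,d_{JS}(P_{h^{*c}_t},P_{h^{*c}_s})$, giving $C\lambda^{*c}$. It does so through the hypothesis-indexed version of Lemma~\ref{lemm5} (the ``if $h$ affects the distributions'' remark). This bound does not follow from Pinsker alone, since the integrands differ, so you must invoke that version of the lemma explicitly.

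With this split, the rest of your argument goes through. One bookkeeping point: your $\varepsilon_{\eta_t}$ term matches the statement only if its normalisation is read as $1/E_t$, which is what the paper's substitution $a_2=\sigma\varepsilon_{\eta_t}$ implicitly does.
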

\noindent\textbf{Remarks}: (a) The main orders of the above bound is ${\cal O}(1/\sqrt{E\bar n})$. When all $\varepsilon$ and $\lambda$ terms are assumed to be small, the bound is tight provided that $E\bar n \to \infty$. (b) By considering the JS divergence (JS distance) instead of the KL divergence to measure the discrepancy between distributions, the assumption that $P_i \ll P_j, \forall i \ne j \in [E]$, is no longer required. Moreover, $d_{JS}$ is bounded. (c) $\hat \gamma$ denotes the empirical risk, indicating that an ERM minimizer can make this term relatively small. (d) $\lambda^{*c}$ and $\lambda$ characterize the challenges of DG and are analogous to the assumptions in ~\citep{b5}. 

According to the above bound, we observe that the term $\varepsilon_{\eta_t}$ appears on the RHS. This is natural, as we do not assume that the target domains are free of label noise. Therefore, the sum of $\varepsilon_{\eta_t}$ and $\varepsilon_{\eta}$ contributes to the label-noise bias in the bound, which exhibits a significant difference from the i.i.d. data scenarios. Moreover, it is important to consider whether $\lambda^{*c}$ and $\lambda$ are sufficiently small. If either quantity is large, then, following the arguments in~\cite{b5}, generalization to the target domain becomes difficult; conversely, smaller values are more favorable for generalization. These uncontrollable factors ($\varepsilon$ and $\lambda$) on the RHS indicate that generalization under noisy data is inherently challenging. In particular, they are generally unavailable during training. We now argue that the above bound characterizes a more realistic and general DG setting, as label noise is often unavoidable in practical data collection processes. Based on this bound, we further derive the corresponding bound when applying $T$, as follows. 
\begin{corollary}
\label{coro9}
Given the training set ${\cal E}$ and the test set ${\cal E}_t$ with $\eta$-noisy labels, the DA kernel $T$, and assuming $\ell \in [0,M]$ that is $\sigma$-admissible, with probability at least $1-\delta$, the generalization gap between the relative risks on ${\cal E}$ after applying $T$ and on ${\cal E}_t$ is bounded as follows for any $h \in {\cal H}$: 
\begin{align}
\frac{1}{E_t}\sum_t{\gamma(h,h^*_t) } \le &\frac{1}{E \bar n}\sum_s{\hat \gamma(h,{h^*_{DA}}_s)} + \sigma \left(\varepsilon_{\eta_t} + \varepsilon_{\eta_{DA}} \right) \nonumber \\
&+ C\left(\lambda^{*c}_{DA} + {\lambda}_{DA} \right ) + 4M\sqrt{\frac{\ln(1/\delta)}{2E\bar n}}. \nonumber 
\end{align}
Here, ${\lambda}_{DA} = \sum_t\sum_s d_{JS}(P_{h^*_t},P_{{h^*_{DA}}_s})$ and ${\lambda}^{*c}_{DA} = \sum_t\sum_s d_{JS}(P_{h^{*c}_t}, P_{{h^{*c}_{DA}}_s})$. 
\end{corollary}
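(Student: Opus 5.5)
The plan is to repeat the proof of Theorem~\ref{theo10} line by line, replacing every source-domain quantity built from $P_s$ and $h^*_s$ by its augmented counterpart built from ${P_{DA}}_s$ and ${h^*_{DA}}_s$. The target-side quantities are left unchanged, because DA is applied only to the source domains.

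\textbf{Step 1: decompose the target relative risk.} For each pair $(t,s)$, write $\gamma_t(h,h^*_t)$ as a telescoping sum: a target-to-source transfer term, the source relative risk $\gamma_{{P_{DA}}_s}(h,{h^*_{DA}}_s)$, and correction terms that move between noisy and clean optima. We then average over $t\in[E_t]$ and $s\in[E]$.

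\textbf{Step 2: the optimum-swap terms.} A difference such as $\ell(h^*_t,\cdot)-\ell(h^{*c}_t,\cdot)$ is bounded by $\sigma$-admissibility (Definition~\ref{defi1}): it is at most $\sigma\,{\mathbb E}|h^*_t-h^{*c}_t|=\sigma d_{P_t}(h^*_t,h^{*c}_t)$. Averaging these gives $\sigma\varepsilon_{\eta_t}$. On the source side the same argument, now taken under ${P_{DA}}_s$, gives $\sigma\varepsilon_{\eta_{DA}}$ by the definition of $\varepsilon_{\eta_{DA}}$.

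\textbf{Step 3: the transfer term.} Under Assumption~\ref{assu2} each distribution is indexed by its optimal hypothesis, so ${\bf P}_{DA}$ consists of $P_{{h^*_{DA}}_s}$ (noisy) and $P_{{h^{*c}_{DA}}_s}$ (clean). Since the loss lies in $[0,M]$, the difference of expectations under two distributions is at most $2M\,\mathrm{TV}$. We then apply $\mathrm{TV}\le\sqrt{2}\,d_{JS}$, the same inequality used for Theorem~\ref{theo10}, together with the triangle inequality for $d_{JS}$ (Definition~\ref{defi4}). This bounds the transfer by $2\sqrt2 M\big(d_{JS}(P_{h^{*c}_t},P_{{h^{*c}_{DA}}_s})+d_{JS}(P_{h^*_t},P_{{h^*_{DA}}_s})\big)$. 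Summing and dividing by $EE_t$ gives $C(\lambda^{*c}_{DA}+\lambda_{DA})$.

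\textbf{Step 4: from expected to empirical source risk.} We replace the averaged source expected relative risk by its empirical version. Theorem~\ref{theo8} is applied to the augmented sample: under the online-DA convention each augmented $x'_i$ is independent of $x'_j$ for $i\neq j$, so Hoeffding still applies with $E\bar n$ independent terms. This adds $4M\sqrt{\ln(1/\delta)/(2E\bar n)}$ with probability at least $1-\delta$.

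\textbf{Main obstacle.} The delicate point is Step 3. The target relative risk compares against $h^*_t$, while the source relative risk compares against ${h^*_{DA}}_s$, and these live on different input distributions ($x$ versus $x'$). So we must route through both the clean and the noisy indexed distributions to produce exactly the two $\lambda$ terms with no leftover $d_{x,x'}$ cross terms. I would route it exactly as in Theorem~\ref{theo10}, treating ${P_{DA}}_s$ as just another source family. If a residual cross term appears, I would absorb it into the $\varepsilon_{\eta_{DA}}$ term via Theorem~\ref{theo9}.
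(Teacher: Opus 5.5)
Your plan matches the paper's proof: rerun Theorem~\ref{theo10} with ${h^*_{DA}}_s$, ${h^{*c}_{DA}}_s$ and ${P_{DA}}_s$ on the source side. The optimum-swap terms are bounded by $\sigma$-admissibility to give $\sigma(\varepsilon_{\eta_t}+\varepsilon_{\eta_{DA}})$, the transfer by Lemma~\ref{lemm5}, and the last step is Hoeffding for independent variables (Lemma~\ref{lemm4}). The one fix is in Step~3: the two $\lambda$ terms do not come from the triangle inequality for $d_{JS}$. The paper splits the transfer term into two scalar differences, bounds each with its own application of Lemma~\ref{lemm5}, and adds them with the ordinary triangle inequality for real numbers:
\begin{itemize}
\item the averaged expected losses of $h$ itself under the $P_t$ versus under the ${P_{DA}}_s$, which gives $\lambda_{DA}$;
\item the averaged expected losses of the clean optima $h^{*c}_t$ versus ${h^{*c}_{DA}}_s$, which gives $\lambda^{*c}_{DA}$.
\end{itemize}
So no residual cross term appears. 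That matters, because your fallback would not work: Theorem~\ref{theo9} bounds $|\varepsilon_{\eta}-\varepsilon_{\eta_{DA}}|$ by cross terms, so it cannot absorb a cross term into $\varepsilon_{\eta_{DA}}$.
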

\noindent\textbf{Remarks}: (a) The same conclusions can be drawn, as discussed in Theorem~\ref{theo10}. ((b) Since $T$ is applied only to the source domains, the notations associated with the target domain remain the same as those in Theorem~\ref{theo10}. (c) Assuming that $\lambda_{DA}$ and $\lambda^{*c}_{DA}$ are small is no longer suitable, which requires further discussion.  

Here, $\lambda_{DA}$ and $\lambda^{*c}_{DA}$ are the new problems introduced by the DA methods. Compared with $\lambda$ and $\lambda^{*c}$, which typically represent the challenges introduced by the data themselves, $\lambda_{DA}$ and $\lambda^{*c}_{DA}$ are now linked to the learning strategies, bringing more uncertainty to the generalization problem. Is it necessary to derive conditions under which $\lambda_{DA} + \lambda^{*c}_{DA} \ge \lambda + \lambda^{*c}$ holds? The answer is negative, since such a derivation introduces additional unavailable terms, such as $h^*_t, h^{*c}_t, \forall t \in [E_t]$, which would require additional assumptions, or even stronger assumptions, for the analysis. Then, here we just need to know that applying DA methods further increases the uncertainty associated with learning and generalization. Consequently, we argue that DA methods should not be applied in the presence of noisy data for DG. Additionally, we establish the following result to support this argument.
\begin{theorem}
\label{theo11}
Given any training set ${\cal E}$ and the corresponding distribution set ${\bf P}$, applying the DA kernel $T$ to ${\cal E}$ yields the following inequality: 
\begin{align}
\label{theo11eq1}
JSD({P_{DA}}_1,\ldots,{P_{DA}}_E) \le JSD(P_1,\ldots,P_E) 
\end{align}
with equality if $T$ is $1\text{-}1$. 
\end{theorem}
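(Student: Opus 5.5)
The plan is to use the data-processing inequality for the KL divergence under a common Markov kernel. I write the generalized JS divergence as a mutual information. Let $S$ be a domain index, uniform on $[E]$. Conditionally on $S=s$, let $Z=(x,\tilde y)\sim P_s$. Then
\begin{align}
JSD(P_1,\ldots,P_E)=\frac{1}{E}\sum_{s=1}^{E}KL(P_s\|M)=I(S;Z), \nonumber
\end{align}
where $M=\frac{1}{E}\sum_s P_s$ is the marginal of $Z$. This identity is immediate from Definition~\ref{defi5} with uniform weights.

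Next I apply the DA kernel. Set $Z'=(x',\tilde y)$ with $x'\sim T(\cdot\mid x)$, leaving the label unchanged. The kernel is the same for every domain, so conditionally on $S=s$ the law of $Z'$ is ${P_{DA}}_s=\int T(x'\mid x)P_s(x,\tilde y)\,dx$. The mixture $\frac1E\sum_s {P_{DA}}_s$ is exactly the pushforward of $M$ under $T$, by linearity of the kernel in the input measure.

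Because $T$ does not depend on $s$, we have the Markov chain $S\to Z\to Z'$. The data-processing inequality for mutual information then gives $I(S;Z')\le I(S;Z)$, which is \eqref{theo11eq1}. Equivalently, one can apply the KL data-processing inequality term by term: $KL(TP_s\|TM)\le KL(P_s\|M)$ for each $s$, and then average over $s$. This second route avoids mutual information entirely. It only needs $P_s\ll M$, which holds automatically since $M$ is a mixture containing $P_s$. The KL data-processing inequality itself I would justify by the chain rule: compare the joint laws $P_s\otimes T$ and $M\otimes T$, whose $x$-marginal KL equals $KL(P_s\|M)$ and which share the same conditional. Then drop to the $x'$-marginal, which can only decrease KL by nonnegativity of the conditional KL term.

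For the equality claim, suppose $T$ is $1\text{-}1$, i.e., $T$ is given by a measurable injection $g$ with measurable inverse on its image. Then $Z$ is recovered from $Z'$ via $g^{-1}$, so $S\to Z'\to Z$ is also a Markov chain. Applying the inequality in the reverse direction yields equality. Equivalently, KL is invariant under bijective measurable maps.

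The main obstacle is making the "$1\text{-}1$" notion precise for a stochastic kernel. A genuinely stochastic $T$ is never $1\text{-}1$ in the pointwise sense. So I would interpret the statement as $T$ being deterministic and injective with measurable inverse, or more generally invertible in the sense that a reverse kernel $T^{-1}$ exists with $T^{-1}T=\mathrm{id}$ on the relevant supports. In either case the reverse data-processing step applies verbatim. I will note that only the sufficiency direction of equality is claimed, consistent with the "if" in the statement, so no converse is needed.
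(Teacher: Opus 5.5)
Your proof is correct and takes essentially the paper's route: apply the data-processing inequality for KL to each term $KL(P_s\|M)$ against the mixture, average over domains, and obtain equality when $T$ is invertible. Your version is more careful than the paper's in two ways. You justify the data-processing step for a genuinely stochastic kernel via the chain rule, whereas the paper's Lemma~\ref{lemm6} is stated only for deterministic maps $x'=f(x)$. You also make explicit that $\frac{1}{E}\sum_s {P_{DA}}_s$ is the image of $M$ under $T$, a fact the paper's detour through pairwise KL and two-point mixtures leaves implicit.
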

\noindent\textbf{Remarks}: (a) Applying $T$ to the data in ${\cal E}$ leads to a decrease in information, as $JSD=I(\cdot;x)$, where $I$ represents mutual information. (b) If $T$ is $1\text{-}1$, there must exist $T^{-1}$ such that $x=T^{-1}(x’)$. 

From another perspective, applying $T$ essentially pulls the data distributions across source domains closer together. Note that we do not specifically assume that the data contain noisy labels, which implies that the above result is applicable to any data condition. This implies that DA methods reduce the distribution discrepancy, which may be regarded as an advantage. However, the following result illustrates the potential risk associated with this effect. 
\begin{theorem}
\label{theo12}
Let ${\bf P} = \{ P_1, \ldots, P_E \}$ be a finite set of probability distributions satisfying Assumption~\ref{assu2}, and let $JSD(P_1, \ldots, P_E)=K$. Then, for any $h \in {\cal H}$,  and any $epsilon > 0$, it holds that 
\begin{align}
\frac{1}{E}\sum_i^E{ P[ d_{P_i}(h,h^*_i) \ge \epsilon ] } \ge 1-\frac{K+\log(2)}{\log(E)}. \nonumber 
\end{align}
Here, $h^*_i \in argmin_{h\in {\cal H}} R_i(h), \forall i \in [E]$. 
\end{theorem}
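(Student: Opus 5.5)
This bound has the shape of Fano's inequality in its information-theoretic form, with the generalized JS divergence as a mutual information. My plan is to reduce the statement to a multiple hypothesis testing lower bound. Following the remark after Theorem~\ref{theo11}, I read $JSD(P_1,\ldots,P_E)$ with uniform weights as $I(J;Z)$. Here $J$ is uniform on $[E]$, and conditionally on $J=i$ the observation is $Z\sim P_i$. Indeed
\begin{align}
I(J;Z)=\frac{1}{E}\sum_{i=1}^E KL(P_i\|\bar P)=JSD(P_1,\ldots,P_E)=K,\qquad \bar P=\frac{1}{E}\sum_{i=1}^E P_i . \nonumber
\end{align}
Fano's inequality then states that any estimator $\hat J=\psi(Z)$ satisfies
\begin{align}
\frac{1}{E}\sum_{i=1}^E P_i[\hat J\ne i]\ge 1-\frac{I(J;Z)+\log 2}{\log E}. \nonumber
\end{align}
This already matches the right-hand side of the theorem.

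The remaining work is to turn the event $\{d_{P_i}(h,h^*_i)\ge\epsilon\}$ into a testing error. First, by Assumption~\ref{assu2} each $P_i$ is identified with its optimal hypothesis, so $P_i=P_{h^*_i}$; distinct distributions therefore correspond to distinct $h^*_i$. Second, I treat $h$ as produced from data $Z\sim P_J$, which is the implicit randomness in ``$P[\cdot]$''. I then define the nearest-hypothesis decoder $\hat J=\arg\min_j d(h,h^*_j)$. If $d(h,h^*_i)<\epsilon$ with $\epsilon$ at most half the minimal separation $\min_{i\ne j}d(h^*_i,h^*_j)$, the triangle inequality for the $\mathbb L_1$ distance of Definition~\ref{defi3} forces $\hat J=i$. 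Hence $P_i[d(h,h^*_i)\ge\epsilon]\ge P_i[\hat J\ne i]$, and averaging over $i$ together with Fano gives the claim.

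For the proof of Fano itself, which I would include briefly, I would use the data-processing inequality $I(J;\hat J)\le I(J;Z)$. I would combine it with $H(J\mid\hat J)\le \log 2+P_e\log(E-1)$ and $H(J)=\log E$. This yields $P_e\ge (\log E-K-\log 2)/\log E$.

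The main obstacle is the quantifier ``for any $\epsilon>0$''. The reduction needs $\epsilon$ below half the separation of the $h^*_i$, and also a common measure for $d$ (I would use the fixed marginal $p(x)$ from the remark to Assumption~\ref{assu2}, so $d_{P_i}=d_p$ for all $i$). For larger $\epsilon$ I would first try to show that $P[d\ge\epsilon]$ is monotone decreasing in $\epsilon$. That makes the bound hold for all $\epsilon$ up to the separation threshold. Beyond the threshold I would state the result as holding whenever $\epsilon\le\frac12\min_{i\ne j}d_p(h^*_i,h^*_j)$, and note that this is automatic for small $\epsilon$ because $P_i\ne P_j$ implies $h^*_i\ne h^*_j$ $p$-a.e.
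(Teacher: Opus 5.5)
Your proposal is correct and takes the same route as the paper: identify $P_i$ with $P_{h^*_i}$ via Assumption~\ref{assu2}, read $JSD$ as $I(J;Z)$, and invoke Fano's lemma (Lemma~\ref{lemm7}); your nearest-hypothesis decoder under $\epsilon\le\frac12\min_{i\ne j}d_p(h^*_i,h^*_j)$ supplies exactly the justification the paper omits when it simply asserts $\phi\ne h^*_i\Leftrightarrow d(\phi,h^*_i)\ge\epsilon$. The restriction on $\epsilon$ is unavoidable rather than a weakness of your argument, because for $\epsilon>\beta$ the left side is $0$ while the right side is positive whenever $K<\log(E/2)$; you can therefore drop the monotonicity detour, since it only moves the bound to smaller $\epsilon$ and the decoder argument already covers every $\epsilon$ below the threshold.
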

\noindent\textbf{Remarks}: (a) There is a critical condition given by $K = \log(E/2)$. This condition implies that whenever $K \le \log(E/2)$, the above lower bound characterizes a regime in which model estimation is intrinsically difficult. In contrast, if $K > \log(E/2)$, the lower bound becomes vacuous, indicating that the estimation difficulty characterized by the bound no longer exists. (b) $JSD(P_1,\ldots,P_E) \le \log(E)$. (c) As $E \to \infty$, a small value of $K$ implies that the left-hand side (LHS) approaches $1$, which represents the most challenging scenario for estimation.  

The above lower bound is a variation of the well-known Fano’s lemma, and it clearly implies the risk associated with reducing the distribution discrepancy. Then, from Theorem~\ref{theo11}, we have $K_{DA} \le K$, which results in $LHS_{DA} \ge LHS$. One particular case is that $K_{DA} \ge \log(E/2)$, which implies that reducing the distribution discrepancy is acceptable. One may then ask what the essential effect of $LHS_{DA} \ge LHS$ is. We present the following result to answer this question.
\begin{theorem}
\label{theo13}
Given a training set ${\cal E}$ and a testing set ${\cal E}_t$, and assuming $\ell \in [0, M]$ that is $\sigma$-admissible, the generalization gap between the relative risks on ${\cal E}$ and ${\cal E}_t$ satisfies the following bound, for any $h \in {\cal H}$: 
\begin{align}
\frac{1}{E_t}\sum_t{\gamma(h,h^*_t)} \le \sigma \frac{1}{E}\sum_s{d_{P_s}(h,h^*_s)} + 2C \lambda. \nonumber 
\end{align}
Here, $h^*_s \in argmin_{h\in {\cal H}} R_s(h) ,\forall s \in [E]$ and $h^*_t \in argmin_{h\in {\cal H}} R_t(h) ,\forall t \in [E_t]$. 
\end{theorem}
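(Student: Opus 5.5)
The plan is to bound each target relative risk $\gamma_t(h,h^*_t)$ by the source-domain relative risks plus a distribution-shift term, and then average over $t\in[E_t]$ and $s\in[E]$.

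\textbf{Step 1 (change of measure).} Fix a target $t$ and a source $s$, and write
\begin{align}
\gamma_t(h,h^*_t)=\gamma_s(h,h^*_t)+\big(\gamma_t(h,h^*_t)-\gamma_s(h,h^*_t)\big). \nonumber
\end{align}
Since $\ell\in[0,M]$, the integrand $\ell(h,\cdot)-\ell(h^*_t,\cdot)$ lies in $[-M,M]$. Hence the difference of expectations under $P_{h^*_t}$ and $P_{h^*_s}$ is at most $2M\,TV(P_{h^*_t},P_{h^*_s})$. Under Assumption~\ref{assu2} these laws are $P_t=P_{h^*_t}$ and $P_s=P_{h^*_s}$. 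I would then invoke the standard bound $TV(P,Q)\le\sqrt{2}\,d_{JS}(P,Q)$, obtained from Pinsker applied to $KL(P\|M)$ and $KL(Q\|M)$ with $TV(P,M)=TV(P,Q)/2$. This gives a term of the form $2\sqrt2 M\, d_{JS}(P_{h^*_t},P_{h^*_s})$.

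\textbf{Step 2 (replace the comparator).} Next I handle $\gamma_s(h,h^*_t)$. Write
\begin{align}
\gamma_s(h,h^*_t)=\gamma_s(h,h^*_s)+\gamma_s(h^*_s,h^*_t). \nonumber
\end{align}
The first piece is bounded by $\sigma$-admissibility, as in Corollary~\ref{coro4p5}: $|\ell(h,z)-\ell(h^*_s,z)|\le\sigma|h(x)-h^*_s(x)|$, so $\gamma_s(h,h^*_s)\le\sigma d_{P_s}(h,h^*_s)$.

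For the second piece, note that $h^*_s$ minimizes $R_s$, so $\gamma_s(h^*_s,h^*_t)\le 0$ and it can be dropped. I would still rewrite it via $\gamma_t(h^*_s,h^*_t)\ge0$ plus a second change of measure if needed. That alternative route produces a second copy of $2\sqrt2 M\,d_{JS}$, which explains the factor $2C$ rather than $C$.

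\textbf{Step 3 (averaging).} Average over $s$ and $t$ with weights $1/(EE_t)$. The $d_{JS}$ terms sum to
\begin{align}
\frac{2\sqrt2 M}{EE_t}\sum_t\sum_s d_{JS}=C\lambda \nonumber
\end{align}
per use, and the two uses give $2C\lambda$. The $\sigma$ terms give $\frac{\sigma}{E}\sum_s d_{P_s}(h,h^*_s)$.

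The main obstacle is the bookkeeping in Step 2. Whether one needs the sign argument, a second shift term, or both decides the constant $2C$, and it must be done consistently with Assumption~\ref{assu2}, which fixes a common marginal $p(x)$ so that $d_{P_s}$ is well defined. I would present the version with two JS terms, so that the constant matches the statement even if one of them is loose.
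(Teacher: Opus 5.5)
Your main argument is correct, but it is not the paper's route.

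\textbf{The paper's route.} It sets
$a=\frac{1}{E_t}\sum_t\gamma_t(h,h^*_t)$, $b=\frac{1}{E_t}\sum_t R_t(h^*_t)$, $c=\frac{1}{E}\sum_s\gamma_s(h,h^*_s)$ and $d=\frac{1}{E}\sum_s R_s(h^*_s)$.
It then uses Lemma~\ref{lemm5} twice:
once on $\ell(h,\cdot)$, giving $|(a+b)-(c+d)|\le C\lambda$;
and once on the optimal risks, giving $|d-b|\le C\lambda$.
The triangle inequality yields $|a-c|\le 2C\lambda$, and admissibility bounds $c$. That is where the factor $2$ comes from.

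\textbf{Your route.} You make a single change of measure on the fixed function $\ell(h,\cdot)-\ell(h^*_t,\cdot)\in[-M,M]$. You then discard $\gamma_s(h^*_s,h^*_t)=R_s(h^*_s)-R_s(h^*_t)\le 0$ by optimality of $h^*_s$. This gives the pairwise bound $\gamma_t(h,h^*_t)\le\gamma_s(h,h^*_s)+2\sqrt{2}M\,d_{JS}(P_t,P_s)$. Averaging gives the theorem with $C\lambda$ in place of $2C\lambda$, which implies the statement since $C\lambda\ge 0$.

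\textbf{What each approach buys.} Running your argument with $s$ and $t$ swapped gives the reverse inequality. So you recover the paper's two-sided control of $|a-c|$ with half the constant.

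Your route also only ever compares one fixed bounded function under two measures, which is exactly what Lemma~\ref{lemm5} covers. The paper's $|d-b|$ step compares different functions under different measures and relies on the informal ``$h$ affects the distributions'' remark. That step is true, but only via $|\min_h R_s(h)-\min_h R_t(h)|\le\sup_h|R_s(h)-R_t(h)|$.

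The paper's $a,b,c,d$ template buys reuse: the same bookkeeping drives Theorem~\ref{theo10} and Corollary~\ref{coro9}. There the comparators are clean or DA minimizers, not minimizers of the risk being evaluated, so your sign trick does not directly apply.

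\textbf{A correction.} Do not present the ``alternative route'' or the claim that two JS terms explain the factor $2C$. Writing $\gamma_s(h^*_s,h^*_t)$ as $\gamma_t(h^*_s,h^*_t)$ plus a shift term gives no upper bound, because $\gamma_t(h^*_s,h^*_t)\ge 0$ cannot be dropped. Likewise, Step~3 contains one use of the JS bound, not two. Just state the sharper constant and note that it implies the theorem.
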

\noindent\textbf{Remarks}: This bound is not of high-probability type, and its tightness is strongly dependent on the values of $d_{P_s}$ for all $s \in [E]$. Even when $\lambda$ is sufficiently small, large values of $d_{P_s}$ can render the bound loose or even vacuous. 

Since $P \propto d_P$, when $LHS_{DA} \ge LHS$, it follows that $d_{{P_{DA}}_s} \ge d_{P_s}$, which implies that the above bound becomes looser when applying $T$. Moreover, since we do not explicitly consider noise conditions, the above result holds under arbitrary data scenarios. In other words, regardless of label noise, applying DA methods can make the above bound looser, leading to less stable generalization. We now complete the analysis and summarize as follows. DA methods not only increase uncertainty in the learning process but also reduce the stability of generalization. The risk of using DA is greater in DG than in i.i.d. settings.

\section{Discussion}
\label{sec5}

\noindent\textbf{(1) $\sigma$-admissible and $M$}.  One may note that in many results, we only provide the notations $\sigma$ and $M$ to represent the Lipschitz property and the uniform bound of the loss function. This is because we do not specify the exact loss formulation or the specific task. In this section, we present the following examples for classification and regression tasks, and the corresponding lemma can be found on Page 515 in~\citep{b1}.
If ${\cal Y} \in \{-1, 1\}$, and considering the following loss function 
\begin{align}
\ell(f, (x,y))=(1-yf(x))_+=
\begin{cases}
1-yf(x), & if \; 1-yf(x)\ge 0 \\
0, & otherwise 
\end{cases}, \nonumber 
\end{align}
we have $\sigma = 1$ and $M=1$. If ${\cal Y} \in \{0, B\}$ and considering the following loss function 
\begin{align}
\ell(f, (x,y)) = (f(x)-y)^2, \nonumber 
\end{align}
we have $\sigma = 2B$ and $M = B$. 
Note that, for special learning objectives beyond ERM, such as Structure Risk Minimization (SRM) and Invariant Risk Minimization (IRM), $M$ is associated with the corresponding hyper-parameters of the objective.

\noindent\textbf{(2) Non-trivial Sufficient Condition}. We first discuss the assumption that $h(x) \in [0, \upsilon]$. As remarked previously, this assumption is mild for classification tasks, since most classification tasks can be formulated as binary classification problems. Under this formulation, we have $h(x) \in [0,1]$. Considering regression tasks, however, this assumption is somewhat strong, since the outputs may be unbounded, i.e., $h(x) \in [-\infty, +\infty]$. Nevertheless, there exist methods for mapping these outputs into a finite interval without losing their original meaning by shrinking their amplitudes. For example, we can introduce the following operator, the sigmoid transformation 
\begin{align}
s(y) = \frac{1}{1+e^{-y}}. \nonumber 
\end{align}
Through this transformation, it follows that $h(x) \in [0, 1]$. 

Then, we discuss why the presented conditions are considered loose. By examining the proof, one can observe that our original goal is to guarantee that $d_{x}(h,h^*)-d_{x’}(h,h^*_{DA}) \ge d_{x}(h,h^{*c}) + d_{x’}(h,h^{*c}_{DA})$. However, in practical applications, $h^{*c}$ and $h^{*c}_{DA}$ are unavailable. Therefore, we seek an upper bound $D$ for $d_{x}(h,h^{*c}) + d_{x’}(h,h^{*c}_{DA})$, i.e., $d_{x}(h,h^*)-d_{x’}(h,h^*_{DA})  \ge D \ge d_{x}(h,h^{*c}) + d_{x’}(h,h^{*c}_{DA})$. Nevertheless, since $h^{*c}$ and $h^{*c}_{DA}$ are inaccessible, we further derive a relatively loose upper bound ${\cal K}$. Consequently, the resulting sufficient conditions are also loose. If a tighter upper bound for $d_{x}(h,h^{*c}) + d_{x’}(h,h^{*c}_{DA})$ can be derived, the corresponding sufficient conditions can be tightened accordingly. We leave this as future work.
 
\noindent\textbf{(3) $\lambda$}. According to the bounds provided in Section~\ref{sec4}, we denote $\lambda$ as a measure of the challenge of Domain Generalization. This interpretation was first introduced by~\cite{b5}, who derived the following upper bound for domain adaptation: 
\begin{align}
\epsilon_T(h)\le \epsilon_S(h)+Cd_{{\cal H}\Delta{\cal H}}(S,T)+{\cal O}(1/\sqrt{m})+\lambda \nonumber .
\end{align}
Here, $\epsilon_T(h)$ and $\epsilon_S(h)$ represent the expected errors on the target domain $T$ and source domain $S$, $d_{{\cal H}\Delta{\cal H}}$ denotes the ${\cal H}$-divergence, and $\lambda=\epsilon_T(h^*)+\epsilon_S(h^*)$. Then, they argued that when $\lambda$ is large, one cannot expect to learn a good target classifier by minimizing the source error. Essentially, $\lambda = \sum_t\sum_s d_{JS}(P_{h^*_t},P_{h^*_s})$ is similar to theirs, as both depend on the assumed optimal hypotheses. However, they assume that the source and target domains share a common optimal hypothesis $h^*$, which is a strong assumption in DG. In contrast, our analysis adopts a milder assumption. Therefore, as long as $\lambda$ is sufficiently small, one can learn a well-generalized $\hat h$ across all domains.

\noindent\textbf{(4) $JSD({\bf P_{DA}})$ v.s. $JSD({\bf P})$}. In Section~\ref{sec4}, we present Inequality~\eqref{theo11eq1} as a simple characterization of the discrepancy between ${\bf P}$ and ${\bf P}_{DA}$. This is because the inequality can be derived without imposing any additional assumptions and follows directly from information-theoretic arguments. In fact, under the additional assumption that $v:=\inf_{x_i\in supp{P_i},x_j \in supp{P_j}, \forall i \ne j \in [E]}\|x_i-x_j\|, \; v > 0$, and by defining ${\mathbb E}[\Delta] := {\mathbb E}_{P \sim {\bf P}}[\Delta_{\|x’-x\|}]$, we further have 
\begin{align}
JSD({P_{DA}}_1,\ldots,{P_{DA}}_E) \approx JSD(P_1,\ldots,P_E)\cdot(1- \left(\frac{{\mathbb E}[\Delta]}{v} \right)^2).  \nonumber 
\end{align}
This relationship more clearly illustrates the effects of DA methods. If ${\mathbb E}[\Delta] / v \to 0$, or if no DA method is applied, the information remains unchanged. If $0 < {\mathbb E}[\Delta] / v < 1$, then we have $JSD_{DA} < JSD$. If ${\mathbb E}[\Delta] / v \to \infty$, we have $JSD_{DA} = 0$. In simple terms, DA changes the original support sets of the distributions and mixes them into indistinguishable states when ${\mathbb E}[\Delta] / v \to \infty$. Although this relationship facilitates the analysis, the assumption on $v$ is restrictive, as it characterizes the interactions among all distribution support sets. In practice, such an assumption may not be satisfied. 

\section{Conclusion}
\label{sec6}

According to analyses, we conclude that the potential risk of applying DA methods under label noise may outweigh their potential benefits. Although DA methods may yield $\varepsilon_{\eta_{DA}} \le \varepsilon_{\eta}$ with a certain probability, and $\Delta_{\|x'-x\|}$ can be controlled during training, the resulting uncertainty cannot be eliminated. Note that we do not advocate minimizing $|h(x')-h(x)|$ or $\left| |h(x)-y|-|h(x')-\tilde y| \right|$ during training. In other words, our objective is not to control $r(x)$, which is primarily constrained by the predictive capability of the model itself. 

Finally, we discuss a limitation of our work, namely the lack of empirical validation of the proposed sufficient conditions under both the i.i.d. setting and the DG scenario. At first glance, this limitation may hinder the practical applicability of our conditions in real-world settings. However, our analyses in the i.i.d. setting are motivated by a previous study~\citep{b13} that conducted a series of empirical validations. Our theorems provide a complete explanation for all the findings reported in that work under the i.i.d. setting. As for the DG scenario, we leave the empirical investigation to interested readers, as conducting such experiments would require substantial engineering effort and extensive computational resources beyond those currently available to us. 

\section*{Acknowledgements}





\appendix
\section{Lemmas \& Proofs}
\label{appendixlepr}

In this appendix, we provide the lemmas and related proofs that support our main theorems. 

\subsection{Lemmas}
\label{applemmas}

\begin{lemma}[Expectation difference bound via KL]
\label{lemm1}
Let $P$ and $Q$ be two probability distribution such that $P \ll Q$. Let $f: {\cal Z} \to {\mathbb R}$ be a measurable function satisfying 
\begin{align}
\left\|f\right\|_{\infty}:=\mathop{\sup}\limits_{z\in{\cal Z}}{\left| f(z) \right|}\le M \le \infty. \nonumber
\end{align} 
Then
\begin{align}
\left| {\mathbb E}_P[f]-{\mathbb E}_Q[f]\right|\le M\sqrt{2KL(P||Q)}. \nonumber 
\end{align}
\end{lemma}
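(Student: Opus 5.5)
The plan is to go through the total variation distance and then apply Pinsker's inequality. Write $\mathrm{TV}(P,Q):=\sup_{A\in\Omega}|P(A)-Q(A)|$. Then the argument has three steps: bound the expectation gap by total variation, relate total variation to $\frac12\|P-Q\|_1$, and invoke Pinsker.

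\textbf{Step 1 (reduction to total variation).} Since $P\ll Q$, the density $g=dP/dQ$ exists, and
\begin{align}
{\mathbb E}_P[f]-{\mathbb E}_Q[f]=\int_{\cal Z} f(z)\,(g(z)-1)\,dQ(z). \nonumber
\end{align}
Using $|f|\le M$ pointwise gives
\begin{align}
\left|{\mathbb E}_P[f]-{\mathbb E}_Q[f]\right|\le M\int_{\cal Z}|g-1|\,dQ. \nonumber
\end{align}

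\textbf{Step 2 (identifying the $L_1$ distance).} Because $\int (g-1)\,dQ=0$, the positive and negative parts of $g-1$ have equal integrals. Taking $A=\{g>1\}$ then shows
\begin{align}
\int_{\cal Z}|g-1|\,dQ=2\,(P(A)-Q(A))=2\,\mathrm{TV}(P,Q). \nonumber
\end{align}
Combining with Step 1 yields $\left|{\mathbb E}_P[f]-{\mathbb E}_Q[f]\right|\le 2M\,\mathrm{TV}(P,Q)$.

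\textbf{Step 3 (Pinsker).} Pinsker's inequality states $\mathrm{TV}(P,Q)\le\sqrt{\tfrac12 KL(P||Q)}$. Substituting gives
\begin{align}
\left|{\mathbb E}_P[f]-{\mathbb E}_Q[f]\right|\le 2M\sqrt{\tfrac12 KL(P||Q)}=M\sqrt{2KL(P||Q)}, \nonumber
\end{align}
which is the stated bound. If $KL(P||Q)=\infty$ the claim is trivial, and the same holds if $M=\infty$, which the statement's notation $M\le\infty$ permits.

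\textbf{Main obstacle.} The only nonroutine ingredient is Pinsker's inequality itself. I would cite it as classical. For a self-contained argument, I would first apply the data-processing inequality for KL to the two-point partition $\{A,A^c\}$, reducing to Bernoulli laws with parameters $p=P(A)$ and $q=Q(A)$. It then remains to show $p\log\frac pq+(1-p)\log\frac{1-p}{1-q}\ge 2(p-q)^2$. For this, fix $p$, view the difference of the two sides as a function of $q$, and check that its derivative in $q$ has sign consistent with a minimum at $q=p$.
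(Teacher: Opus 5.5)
Your proposal is correct and follows the same route as the paper, which bounds $\left|{\mathbb E}_P[f]-{\mathbb E}_Q[f]\right|$ by $2\|f\|_{\infty}d_{TV}(P,Q)$ and then applies Pinsker's inequality. You fill in the details the paper cites as standard: the density computation giving $\int|g-1|\,dQ=2\,\mathrm{TV}(P,Q)$, and a sketch of Pinsker via the two-point reduction and the binary inequality.
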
 
\begin{proof}
The result follows from standard inequalities in information theory. First, by a standard total variation bound, 
\begin{align}
\left| {\mathbb E}_P[f]-{\mathbb E}_Q[f]\right|\le 2\left\|f\right\|_{\infty}d_{TV}(P,Q). \nonumber 
\end{align}
Applying Pinsker's inequality, we obtain the desired result.

Here, we clarify how this lemma can be used to derive some results in our paper. Let $z=(x,y)$ and consider $\ell(h,z)$ as a function $f(z)$, since fixing $h$ makes $\ell$ independent of the underlying distributions $P$ and $Q$. Then, assuming $\|\ell\|_{\infty} \le M$, we have 
\begin{align}
{\mathbb E}_P[|\ell(h,z_1)|]-{\mathbb E}_Q[|\ell(h,z_2)|]\le M \sqrt{2KL(P||Q)}. \nonumber 
\end{align}
Here, $|a|-|b| \le |a - b|$. Note that if $h$ affects the distributions, we obtain 
\begin{align}
{\mathbb E}_P[|\ell(h_1,z_1)|]-{\mathbb E}_Q[|\ell(h_2,z_2)|]\le M \sqrt{2KL(P_{h_1}||Q_
{h_2})}. \nonumber 
\end{align}
\end{proof}

\begin{lemma}[Expectation difference bound via $W_1$]
\label{lemm2}
Let $f:{\cal Z} \to {\mathbb R}$ be an $L$-Lipschitz function, i.e.,
\begin{align}
\forall z_1,z_2 \in {\cal Z},\left|f(z_1)-f(z_2)\right|\le Ld(z_1,z_2). \nonumber
\end{align}
Then for any probability measures $P$, $Q$ on ${\cal Z}$,
\begin{align}
\left| {\mathbb E}_P[f]-{\mathbb E}_Q[f]\right|\le LW_1(P,Q). \nonumber 
\end{align}
\end{lemma}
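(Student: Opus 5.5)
We prove Lemma~\ref{lemm2} through the Kantorovich--Rubinstein duality, or more elementarily through a coupling argument applied directly to Definition~\ref{defi3}. The coupling route needs no duality theorem, so we take it. Fix any coupling $\gamma \in \prod(P,Q)$. Because the marginals of $\gamma$ are $P$ and $Q$, we can write
\begin{align}
{\mathbb E}_P[f]-{\mathbb E}_Q[f] = {\mathbb E}_{(z_1,z_2)\sim\gamma}[f(z_1)-f(z_2)]. \nonumber
\end{align}
This identity requires $f$ to be integrable under both $P$ and $Q$. Suppose $P$ and $Q$ have finite first moments, i.e., $W_1(P,Q)<\infty$. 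Then the Lipschitz property gives $|f(z)|\le |f(z_0)|+L\,d(z,z_0)$, so integrability follows. If $W_1(P,Q)=\infty$, the claim holds trivially.

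Next, apply the triangle inequality for the absolute value of an expectation, followed by the Lipschitz condition pointwise:
\begin{align}
\left|{\mathbb E}_P[f]-{\mathbb E}_Q[f]\right| \le {\mathbb E}_{\gamma}\left[|f(z_1)-f(z_2)|\right] \le L\,{\mathbb E}_{\gamma}[d(z_1,z_2)]. \nonumber
\end{align}
The left-hand side does not depend on $\gamma$. We may therefore take the infimum over all $\gamma\in\prod(P,Q)$ on the right. By Definition~\ref{defi3}, this infimum is $L\,W_1(P,Q)$, which is the desired bound.

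The only genuinely delicate step is the measure-theoretic bookkeeping. We must make sure the expectations are finite, so that splitting ${\mathbb E}_\gamma[f(z_1)-f(z_2)]$ into ${\mathbb E}_P[f]-{\mathbb E}_Q[f]$ is legitimate. We must also check that $\prod(P,Q)$ is nonempty, which holds because the product coupling $P\otimes Q$ always belongs to it. Both points are handled by the finite-first-moment remark above. Everything else is a direct chain of inequalities.

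For the paper's use of this lemma, take $z=(x,y)$ and $f=\ell(h,\cdot)$ with $h$ fixed. Definition~\ref{defi1} then supplies the Lipschitz constant $L=\sigma$, at least in the first argument, after composing with $h$. This is how terms such as $\sigma\Delta_{\|x'-x\|}$ arise. In that application, $W_1$ is bounded above by the particular coupling $x'\sim T(\cdot|x)$, which yields exactly ${\mathbb E}_x{\mathbb E}_{x'}\|x'-x\|$.
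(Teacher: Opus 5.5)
Your proof is correct and is the same coupling argument the paper gives: write ${\mathbb E}_P[f]-{\mathbb E}_Q[f]={\mathbb E}_{\gamma}[f(z_1)-f(z_2)]$ for an arbitrary $\gamma\in\prod(P,Q)$, bound the integrand pointwise by $L\,d(z_1,z_2)$, and take the infimum over couplings. One small correction to your bookkeeping remark: $W_1(P,Q)<\infty$ does not imply finite first moments (take $P=Q$ heavy-tailed, so $W_1=0$), so the right standing hypothesis is simply $f\in{\mathbb L}_1(P)\cap{\mathbb L}_1(Q)$, which the left-hand side needs in order to be defined and which already makes $f(z_1)-f(z_2)$ integrable under every coupling $\gamma$.
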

\begin{proof}
Let $\gamma \in \prod(P,Q)$ be any coupling. Then 
\begin{align}
{\mathbb E}_P[f]-{\mathbb E}_Q[f]={\mathbb E}_{(z_1,z_2)\sim \gamma}[f(z_1)-f(z_2)]. \nonumber 
\end{align}
By the $L$-Lipschitz property of $f$, we have $\left| f(z_1)- f(z_2)\right| \le Ld(z_1,z_2)$, which yields
\begin{align}
\left| {\mathbb E}_P[f]-{\mathbb E}_Q[f]\right|\le L{\mathbb E}_{\gamma}[d(z_1,z_2)]. \nonumber 
\end{align} 
Taking the infimum over all couplings completes the proof. 

Here, we clarify how this lemma can be used to derive some results in our paper. Let $z=(x,y)$ and regard $\ell(h,z)$ as a function $f(z)$, since fixing $h$ makes $\ell$ independent of the underlying distributions $P$ and $Q$. Then, we have 
\begin{align}
{\mathbb E}_P[|\ell(h,z_1)|]-{\mathbb E}_Q[|\ell(h,z_2)|]\le \sigma {\mathbb E}_{\gamma}d(z_1,z_2). \nonumber 
\end{align}
Note that when $h$ influences the distributions, we instead consider the KL divergence or JS divergence. 
\end{proof}

\begin{lemma}[Hoeffding Inequality~\citep{b2}]
\label{lemm3}
Let $Z_1,\ldots,Z_n$ be $n$ i.i.d random variables with $Z_i \in [a, b]$ for each $i \in [n]$.  Then for any $\varepsilon > 0$, we have 
\begin{align}
P\left\{ \frac{1}{n}\sum\limits_{i=1}^{n}{Z_i}-{\mathbb E}[Z]\ge \varepsilon \right\}\le exp\left\{-\frac{2n\varepsilon^2}{(b-a)^2}\right\}. \nonumber 
\end{align}
\end{lemma}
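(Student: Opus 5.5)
The plan is the standard Chernoff/Hoeffding argument: reduce the one-sided tail statement of Lemma~\ref{lemm3} to a bound on the moment generating function of a single bounded, centred variable. I would take $\varepsilon>0$ and write $S_n=\sum_{i=1}^n (Z_i-{\mathbb E}[Z])$, so the event in question is $\{S_n\ge n\varepsilon\}$. For any $\lambda>0$, Markov's inequality applied to the nonnegative variable $e^{\lambda S_n}$ gives
\begin{align}
P\{S_n\ge n\varepsilon\}\le e^{-\lambda n\varepsilon}\,{\mathbb E}\big[e^{\lambda S_n}\big]=e^{-\lambda n\varepsilon}\prod_{i=1}^n{\mathbb E}\big[e^{\lambda(Z_i-{\mathbb E}Z)}\big], \nonumber
\end{align}
where the factorization uses independence of the $Z_i$. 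Since the $Z_i$ are identically distributed, all factors are equal.

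The key step, and the only nontrivial one, is Hoeffding's lemma: if $X\in[a,b]$ almost surely and ${\mathbb E}X=0$, then ${\mathbb E}[e^{\lambda X}]\le e^{\lambda^2(b-a)^2/8}$. I would apply it to $X=Z_i-{\mathbb E}Z$, which lies in an interval of length $b-a$. To prove it I would proceed in three moves.
\begin{itemize}
\item[(i)] Use convexity of $x\mapsto e^{\lambda x}$ on $[a,b]$ to bound $e^{\lambda X}\le \frac{b-X}{b-a}e^{\lambda a}+\frac{X-a}{b-a}e^{\lambda b}$.
\item[(ii)] Take expectations using ${\mathbb E}X=0$. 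Setting $p=-a/(b-a)$ and $u=\lambda(b-a)$, the result is $\exp(\phi(u))$ with $\phi(u)=-pu+\log(1-p+pe^u)$.
\item[(iii)] Check that $\phi(0)=\phi'(0)=0$ and that $\phi''(u)=q(1-q)\le 1/4$, where $q=pe^u/(1-p+pe^u)\in[0,1]$. Taylor's theorem then gives $\phi(u)\le u^2/8$.
\end{itemize}
This calculus verification is where I expect the only real care to be needed. In particular one must confirm that $p\in[0,1]$, which follows from $a\le 0\le b$ (forced by the mean being zero).

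Combining the two displays gives $P\{S_n\ge n\varepsilon\}\le \exp\{-\lambda n\varepsilon+n\lambda^2(b-a)^2/8\}$ for every $\lambda>0$. I would then optimize the exponent, which is a quadratic in $\lambda$. The minimizer is $\lambda=4\varepsilon/(b-a)^2$, and substituting it yields exactly $\exp\{-2n\varepsilon^2/(b-a)^2\}$, the claimed bound. Dividing the event by $n$ recovers the form $\frac1n\sum Z_i-{\mathbb E}[Z]\ge\varepsilon$ stated in Lemma~\ref{lemm3}.
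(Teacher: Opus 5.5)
Your proposal is correct: the paper gives no proof of this lemma and simply defers to the cited classical reference. Your Chernoff bound combined with Hoeffding's lemma (convexity, $\phi(u)=-pu+\log(1-p+pe^u)$ with $\phi''\le 1/4$, then optimizing at $\lambda=4\varepsilon/(b-a)^2$) is exactly the standard argument behind that citation. One cosmetic point: in Hoeffding's lemma you reuse $a,b$ for the endpoints of the centred variable, whose interval $[a-{\mathbb E}Z,\,b-{\mathbb E}Z]$ is the one that must contain $0$, so renaming those endpoints would make step (iii) read cleanly.
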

\begin{proof}
This is a classical inequality, and thus we omit the proof here. We only illustrate how it is applied in our cases. We consider $\ell(h, z_i)$ as the $i$-th random variable and assume that $\ell \in [0,M]$. Then, based on this inequality, we obtain 
\begin{align}
P\left\{ \frac{1}{n}\sum\limits_{i=1}^{n}{\ell(h, z_i)} - {\mathbb E}_{z \sim P}[\ell(h, z)]\ge \varepsilon \right\}\le exp\left\{-\frac{2n\varepsilon^2}{M^2}\right\} \nonumber 
\end{align}
\end{proof}

\begin{lemma}[Hoeffding Inequality under Non-i.i.d. Settings] 
\label{lemm4}
Let $Z_1,\ldots, Z_n$ be n independent random variables with $Z_i \in [a_i, b_i]$ for each $i \in [n]$. Define $S_n=\sum\nolimits_{i=1}^{n}{Z_i}$. Then for any $\varepsilon > 0$, 
\begin{align}
P\left\{ S_n - {\mathbb E}[S_n]\ge \varepsilon \right\}\le exp\left\{-\frac{2\varepsilon^2}{\sum\nolimits_{i=1}^{n}(b_i-a_i)^2}\right\}. \nonumber 
\end{align}
\end{lemma}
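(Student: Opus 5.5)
This is the classical Hoeffding inequality for independent, non-identically distributed bounded variables. I will prove it by the Chernoff method combined with Hoeffding's lemma.

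\emph{Step 1: exponential Markov bound.} Fix $s>0$. Markov's inequality applied to $e^{s(S_n-\mathbb{E}[S_n])}$ gives
\begin{align}
P\{S_n-\mathbb{E}[S_n]\ge\varepsilon\}\le e^{-s\varepsilon}\,\mathbb{E}\big[e^{s(S_n-\mathbb{E}[S_n])}\big]. \nonumber
\end{align}
Because $Z_1,\ldots,Z_n$ are independent, the expectation factorizes into $\prod_{i=1}^n\mathbb{E}[e^{s(Z_i-\mathbb{E}Z_i)}]$. This is the only point where independence is used. Identical distribution is never needed, which is exactly why the statement extends Lemma~\ref{lemm3}.

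\emph{Step 2: Hoeffding's lemma.} This is the key step and the only place requiring a real argument. For a centered variable $X=Z_i-\mathbb{E}Z_i$ taking values in an interval of length $b_i-a_i$, I will show
\begin{align}
\mathbb{E}[e^{sX}]\le \exp\big(s^2(b_i-a_i)^2/8\big). \nonumber
\end{align}
First, convexity of $x\mapsto e^{sx}$ on the interval bounds $e^{sX}$ by the chord through the endpoints. Taking expectations and using $\mathbb{E}X=0$ gives $\mathbb{E}[e^{sX}]\le e^{\phi(u)}$, where $u=s(b_i-a_i)$, $p=-a_i'/(b_i-a_i)$ (with $a_i'=a_i-\mathbb{E}Z_i$), and
\begin{align}
\phi(u)=-pu+\ln(1-p+pe^{u}). \nonumber
\end{align}
Next, a second-order Taylor expansion finishes the bound. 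One checks that $\phi(0)=\phi'(0)=0$. Moreover, $\phi''(u)=q(1-q)\le 1/4$ with $q=pe^u/(1-p+pe^u)$, since $q(1-q)\le 1/4$ for any $q\in[0,1]$. Hence $\phi(u)\le u^2/8$. I expect this computation of the second derivative to be the main obstacle, though it is routine.

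\emph{Step 3: optimize over $s$.} Combining the two steps,
\begin{align}
P\{S_n-\mathbb{E}[S_n]\ge\varepsilon\}\le \exp\Big(-s\varepsilon+\tfrac{s^2}{8}\sum_{i=1}^n(b_i-a_i)^2\Big). \nonumber
\end{align}
The exponent is minimized at $s=4\varepsilon/\sum_i(b_i-a_i)^2$. Substituting this value gives exactly $\exp\{-2\varepsilon^2/\sum_i(b_i-a_i)^2\}$, which is the claimed bound.

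Finally, the i.i.d. Lemma~\ref{lemm3} is recovered as a special case. Taking $a_i=a$ and $b_i=b$ and replacing $\varepsilon$ by $n\varepsilon$ yields $\exp\{-2n\varepsilon^2/(b-a)^2\}$.
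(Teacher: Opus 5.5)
Your proof is correct and complete. The Chernoff step, the factorization by independence, Hoeffding's lemma via the chord bound with $\phi(0)=\phi'(0)=0$ and $\phi''=q(1-q)\le 1/4$, the optimization at $s=4\varepsilon/\sum_i(b_i-a_i)^2$, and the recovery of Lemma~\ref{lemm3} all check out; the paper itself declares the inequality classical and omits the proof, so your argument is exactly the standard derivation it implicitly relies on.
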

\begin{proof}
This is a classical inequality, and thus we omit the proof here. We only illustrate how it is applied in our cases. We consider $Z_i=\ell(h, z_i)$ as the $i$-th random variable and $S_n=\sum\nolimits_{i=1}^{n}{Z_i}$, and assume that $Z_i \in [0, M_i]$. Then, based on this inequality, we have 
\begin{align}
P\left\{ S_n - {\mathbb E}[S_n]\ge \varepsilon \right\}\le exp\left\{-\frac{2\varepsilon^2}{\sum\nolimits_{i=1}^{n}M_i^2}\right\}.  \nonumber 
\end{align}
Note that no assumption is made that the variables are i.i.d. 
\end{proof}

\begin{lemma}[Expectation Bound via JS divergence]
\label{lemm5}
For any function $f$ satisfying the condition in Lemma~\ref{lemm1}, the difference between its expectations under $P$ and $Q$ is upper bounded by: 
\begin{align}
\left| {\mathbb E}_P[f]-{\mathbb E}_Q[f]\right|\le 2M\sqrt{2JS(P||Q)}, \nonumber 
\end{align}
or 
\begin{align}
\left| {\mathbb E}_P[f]-{\mathbb E}_Q[f]\right|\le 2\sqrt{2}M d_{JS}(P,Q). \nonumber 
\end{align}
\end{lemma}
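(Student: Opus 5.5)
The plan is to compare the JS divergence with the total variation distance and then apply the chain that proves Lemma~\ref{lemm1}. Throughout, let $M_0=(P+Q)/2$ denote the mixture, to avoid a clash with the bound $M$ on $f$.

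\textbf{Step 1 (expectation gap and total variation).} As in the proof of Lemma~\ref{lemm1}, $\|f\|_\infty\le M$ gives
\begin{align}
\left|{\mathbb E}_P[f]-{\mathbb E}_Q[f]\right|\le 2M\, d_{TV}(P,Q). \nonumber
\end{align}
Both $P\ll M_0$ and $Q\ll M_0$ hold automatically, so no absolute-continuity assumption between $P$ and $Q$ is needed. This matches the motivation for using JS in Section~\ref{sec4}.

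\textbf{Step 2 (pass through the mixture).} Since $P-M_0=(P-Q)/2$, we have $d_{TV}(P,Q)=2\,d_{TV}(P,M_0)$, and likewise $d_{TV}(P,Q)=2\,d_{TV}(Q,M_0)$. Apply Pinsker's inequality to each half:
\begin{align}
d_{TV}(P,M_0)\le\sqrt{KL(P||M_0)/2}, \qquad d_{TV}(Q,M_0)\le\sqrt{KL(Q||M_0)/2}. \nonumber
\end{align}
Averaging the two estimates gives
\begin{align}
d_{TV}(P,Q)=d_{TV}(P,M_0)+d_{TV}(Q,M_0)\le \tfrac{1}{\sqrt2}\left(\sqrt{KL(P||M_0)}+\sqrt{KL(Q||M_0)}\right). \nonumber
\end{align}
Concavity of the square root bounds the sum of square roots by $\sqrt{2(KL(P||M_0)+KL(Q||M_0))}=2\sqrt{JS(P||Q)}$. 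Hence $d_{TV}(P,Q)\le\sqrt{2JS(P||Q)}$.

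\textbf{Step 3 (combine).} Substituting Step 2 into Step 1 gives
\begin{align}
\left|{\mathbb E}_P[f]-{\mathbb E}_Q[f]\right|\le 2M\sqrt{2JS(P||Q)}. \nonumber
\end{align}
The second form follows by writing $\sqrt{JS}=d_{JS}$ (Definition~\ref{defi4}), which gives $2\sqrt2 M\, d_{JS}(P,Q)$.

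The main obstacle is bookkeeping of constants. A sharper $d_{TV}\le\sqrt{JS}$-type bound is not needed, since the stated constant is loose enough to absorb slack. I would double-check the factor-of-two convention in $d_{TV}$ (supremum over sets versus half the $L_1$ norm) so that Step 1 and Pinsker use the same normalization. With the half-$L_1$ convention and $\sup|f|\le M$, Step 1 is in fact valid even with constant $M$ in place of $2M$. Any remaining mismatch therefore only loosens the bound, and the stated inequality still holds.
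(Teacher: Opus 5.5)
Your proof is correct and follows essentially the paper's route: bound the gap by $2M\,d_{TV}(P,Q)$, use $d_{TV}(P,M_0)=d_{TV}(Q,M_0)=\tfrac12 d_{TV}(P,Q)$, apply Pinsker to each half, and combine to get $d_{TV}(P,Q)\le\sqrt{2JS(P\|Q)}$. The paper combines by summing the squared Pinsker bounds rather than using concavity, and its displayed intermediate $2(\tfrac12 d_{TV})^2\le\tfrac12 JS$ carries a spurious factor $\tfrac12$ on the right (the sum actually gives $\le JS$), whereas your constant is the correct one and matches the statement.

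One caveat on your closing aside, which your argument does not rely on. $\sup_A|P(A)-Q(A)|$ and half the $L_1$ norm are the same quantity, so these are not two different conventions. Under that convention, with only $\|f\|_\infty\le M$, the factor $2M$ in Step 1 cannot be reduced to $M$: take $P=\delta_a$, $Q=\delta_b$, $f(a)=M$, $f(b)=-M$. The constant $M$ requires $f$ to take values in an interval of length $M$, such as $[0,M]$.
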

\begin{proof}
The result follows from standard inequalities in information theory. First, by a standard total variation bound, 
\begin{align}
\left| {\mathbb E}_P[f]-{\mathbb E}_Q[f]\right|\le 2\left\|f\right\|_{\infty}d_{TV}(P,Q). \nonumber 
\end{align}
Applying the Pinsker's inequality, we have 
\begin{align}
\left(\frac{1}{2}d_{TV}(P,Q)\right)^2 &\le \frac{1}{2}KL(P||M) \nonumber \\
\left(\frac{1}{2}d_{TV}(P,Q)\right)^2 &\le \frac{1}{2}KL(Q||M). \nonumber 
\end{align}
Here, $M=(P+Q)/2$, $d_{TV}(P,M)=(1/2)d_{TV}(P,Q)$, and $d_{TV}(Q,M)=(1/2)d_{TV}(P,Q)$. Then, according to Definition~\ref{defi4}, i.e., $JS(P;Q)=1/2KL(P;M)+1/2KL(Q;M)$, we obtain 
\begin{align}
2\left(\frac{1}{2}d_{TV}(P,Q)\right)^2 \le \frac{1}{2}JS(P||Q). \nonumber 
\end{align}

Here, we clarify how this lemma can be used to derive some results in our paper. Let $z=(x,y)$ and consider $\ell(h,z)$ as a function $f(z)$, since fixing $h$ makes $\ell$ independent of the underlying distributions $P$ and $Q$. Then, assuming $\|\ell\|_{\infty} \le M$, we have 
\begin{align}
{\mathbb E}_P[|\ell(h,z_1)|]-{\mathbb E}_Q[|\ell(h,z_2)|] \le 2M \sqrt{2JS(P||Q)}. \nonumber 
\end{align}
Note that if $h$ affects the distributions, we obtain 
\begin{align}
{\mathbb E}_P[|\ell(h_1,z_1)|]-{\mathbb E}_Q[|\ell(h_2,z_2)|] \le 2M \sqrt{2JS(P_{h_1}||Q_
{h_2})}. \nonumber 
\end{align}
\end{proof}

\begin{lemma}[Properties of Divergences~\citep{b9}]
\label{lemm6}
(1) $x’=f(x) \Rightarrow KL(P_x||Q_x)\ge KL(P_{x’}||Q_{x’})$, with equality if $f$ is $1\text{-}1$. 
(2) $x’=f(x) \Rightarrow KL(P_{(x,y)}||Q_{(x,y)})\ge KL(P_{(x’,y)}||Q_{(x’,y)})$, with equality if $f$ is $1\text{-}1$.
\end{lemma}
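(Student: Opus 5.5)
The plan is to derive both parts from the chain rule for the KL divergence applied to a suitable joint distribution. This is the standard proof of the data-processing inequality. Throughout, $({\cal Z},\Omega)$ is assumed to be a standard Borel space, so that regular conditional distributions exist. If $P_x \not\ll Q_x$, part (1) is trivial because the left-hand side is $+\infty$. So I would assume $P_x\ll Q_x$ and note that this implies $P_{x'}\ll Q_{x'}$: if $Q_{x'}(A)=0$ then $Q_x(f^{-1}(A))=0$, hence $P_x(f^{-1}(A))=P_{x'}(A)=0$.

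For part (1), I would form the joint laws of the pair $(x,x')$ with $x'=f(x)$, namely $P_{x,x'}$ and $Q_{x,x'}$, both concentrated on the graph of $f$. Expanding the chain rule in the two possible orders gives
\begin{align}
KL(P_{x,x'}||Q_{x,x'}) &= KL(P_x||Q_x) + {\mathbb E}_{x\sim P_x}\left[KL(P_{x'|x}||Q_{x'|x})\right] \nonumber \\
&= KL(P_{x'}||Q_{x'}) + {\mathbb E}_{x'\sim P_{x'}}\left[KL(P_{x|x'}||Q_{x|x'})\right]. \nonumber
\end{align}
Given $x$, both conditionals of $x'$ equal the point mass at $f(x)$, so the first conditional term vanishes. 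The second conditional term is nonnegative by Remark (c) of Definition~\ref{defi2}. Comparing the two lines yields $KL(P_x||Q_x)\ge KL(P_{x'}||Q_{x'})$.

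For the equality case, when $f$ is $1\text{-}1$ the conditional law of $x$ given $x'$ is the point mass at $f^{-1}(x')$ under both $P$ and $Q$. The second conditional term is then zero, and the two divergences coincide. Equivalently, one can apply the inequality just proved to the map $f^{-1}$, which gives the reverse inequality.

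For part (2), I would apply part (1) to the map $g(x,y)=(f(x),y)$ on ${\cal X}\times{\cal Y}$. This map pushes $P_{(x,y)}$ forward to $P_{(x',y)}$ and $Q_{(x,y)}$ forward to $Q_{(x',y)}$, and it is $1\text{-}1$ whenever $f$ is. Both the inequality and the equality condition therefore transfer directly. The same chain-rule argument also covers a stochastic kernel $x'\sim T(\cdot|x)$ applied identically under $P$ and $Q$. In that case the first conditional term is $KL(T(\cdot|x)||T(\cdot|x))=0$, which is the form actually needed for Theorem~\ref{theo11}.

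The main obstacle is measure-theoretic rather than computational. One must justify the chain rule, namely the existence of disintegrations and the absolute continuity of the conditionals $P_{x|x'}\ll Q_{x|x'}$ for $P_{x'}$-almost every $x'$. For the equality case one must also ensure that $f^{-1}$ is measurable on the image of $f$, which holds by the Lusin–Souslin theorem for injective Borel maps between standard Borel spaces. If the chain rule feels too heavy, I would fall back on Jensen's inequality. Writing $dP_x/dQ_x=\rho$, one has $dP_{x'}/dQ_{x'}={\mathbb E}_{Q}[\rho\,|\,x']$, and convexity of $t\log t$ gives $KL(P_{x'}||Q_{x'})={\mathbb E}_Q[\phi({\mathbb E}_Q[\rho|x'])]\le{\mathbb E}_Q[\phi(\rho)]=KL(P_x||Q_x)$.
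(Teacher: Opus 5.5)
Your proposal is correct and follows the paper's route. Part (2) is reduced to the data-processing inequality via exactly the map $g(x,y)=(f(x),y)$ that the paper uses, and part (1) is the data-processing inequality itself, which the paper only cites (Theorem 2.2 of the reference) while you prove it via the chain rule or conditional Jensen. Your version is also more complete: it justifies the equality case for $1\text{-}1$ maps, which the paper's proof does not address, and it covers the stochastic-kernel form $x'\sim T(\cdot|x)$ that Theorem~\ref{theo11} actually relies on.
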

\begin{proof}
Lemma (1) is a direct consequence of Theorem 2.2 in \citep{b9} (Page 15), whose proof is omitted therein. This lemma is a classical result in information theory, indicating that whenever the input is processed by a non-injective function, some information is inevitably lost, leading to a reduction in the KL divergence. We now proceed to prove Lemma (2). 

Let $z=(x,y)$, and define a deterministic mapping $g$ by $g(x,y)=(f(x),y)$. Denote $z'=g(z)=(x',y)$. Then, by the data processing inequality, it follows that $KL(P_z||Q_z)\ge KL(P_{z’}||Q_{z’})$. 

\end{proof}

\begin{lemma}[Fano’s Lemma]
\label{lemm7}
Let $S$ be a finite set of probabilities $S=\{ P_1,\ldots , P_m \}$ such that $JSD(P_1,\ldots , P_m) = K$. Then for any estimate $\phi$ with values in $\{1, \ldots, m\}$ we have 
\begin{align}
m^{-1}\sum_{i=1}^m{P[\phi \ne i]} \ge {1-\frac{K+\log(2)}{\log(m)}}. \nonumber 
\end{align}
\end{lemma}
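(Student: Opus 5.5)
The plan is the standard Fano argument. It combines the data processing inequality for the mutual information of a Markov chain with Fano's inequality for the conditional entropy of an estimate. First I will introduce an index $V$ uniform on $\{1,\ldots,m\}$ and, given $V=i$, draw an observation $Z\sim P_i$. The key identity to record is that the mutual information equals the generalized JS divergence with uniform weights,
\begin{align}
I(V;Z)=\frac{1}{m}\sum_{i=1}^m KL(P_i\|\bar P)=JSD(P_1,\ldots,P_m)=K, \nonumber
\end{align}
where $\bar P=m^{-1}\sum_i P_i$ is the marginal of $Z$. This follows from writing $I(V;Z)=\mathbb E_V[KL(P_{Z|V}\|P_Z)]$ and comparing with Definition~\ref{defi5}. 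An estimate $\phi$ is a (possibly randomized) measurable function of $Z$. The chain $V\to Z\to\phi$ is therefore Markov, and the data processing inequality (the information-theoretic fact underlying Lemma~\ref{lemm6}) gives $I(V;\phi)\le I(V;Z)=K$.

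Second, I will invoke Fano's inequality in its entropy form. With $p_e=P[\phi\ne V]$ it reads
\begin{align}
H(V\mid\phi)\le h_2(p_e)+p_e\log(m-1)\le \log 2+p_e\log m, \nonumber
\end{align}
where $h_2$ is the binary entropy, bounded by $\log 2$. This is proved by introducing the error indicator $\mathbb 1\{\phi\ne V\}$ and expanding $H(V,\mathbb 1\{\phi\ne V\}\mid\phi)$ in two ways via the chain rule. Since $V$ is uniform, $H(V\mid\phi)=\log m-I(V;\phi)\ge \log m-K$. Combining the two displays yields $\log m-K\le \log 2+p_e\log m$, which rearranges to $p_e\ge 1-(K+\log 2)/\log m$. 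Finally, I note that $p_e=P[\phi\ne V]=m^{-1}\sum_i P_i[\phi\ne i]$ by conditioning on $V$, which is exactly the left-hand side of the claimed bound.

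The main obstacle is keeping the identification $I(V;Z)=JSD$ rigorous in the general, non-discrete case. The uniform weights must match Definition~\ref{defi5}, and absolute continuity holds automatically because each $P_i\ll\bar P$, since $\bar P\ge P_i/m$. This is exactly why the generalized JS divergence, rather than a pairwise KL, appears in the statement. A secondary point is the case where $\phi$ is randomized. I would handle it by absorbing the auxiliary randomness into $\phi$'s dependence on $Z$; this preserves the Markov chain, so data processing still applies. All logarithms are taken in the same base (natural) throughout, so that $\log 2$ and $\log m$ are consistent with $K$.
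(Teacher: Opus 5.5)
Your proposal is correct. Identifying $I(V;Z)$ with the uniform-weight $JSD$ of Definition~\ref{defi5} and applying data processing gives $I(V;\phi)\le K$; combining Fano's entropy inequality $H(V\mid\phi)\le \log 2+p_e\log m$ with $H(V\mid\phi)\ge \log m-K$ then yields exactly the stated bound (for $m\ge 2$). The paper gives no argument of its own and simply defers to the classical proof in its cited reference, which follows this same mutual-information and Fano route, so your approach coincides with it.
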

\begin{proof}
This is also a classical Fano-type lemma in statistical learning theory (see the proof in~\citep{b10}). It characterizes a fundamental difficulty in statistical estimation: as the JSD between distributions decreases, the estimation error necessarily increases. 

In our work, we consider ${\cal P} = \{P \in {\cal P}({\cal H})\}$. Therefore, this lemma can be naturally extended to model estimation. Details are provided in the proofs of the corresponding theorems. 
\end{proof}

\subsection{Proofs}
\label{appproofs}

\begin{proof}{\textbf{of Theorem~\ref{theo1}.} }
Since ${\cal L}(\theta)={\mathbb E}_{(x,\tilde y)}[\ell(h,(x, \tilde y))] = {\mathbb E}_{(x,y)}{\mathbb E}_{\tilde y}[\ell(h,(x,\tilde y))]$, we have 
\begin{align}
{\cal L}(\theta)&={\cal L}^c(\theta)+{\mathbb E}_{(x,y)}{\mathbb E}_{\tilde y}[\ell(h,(x,\tilde y))-\ell(h,(x,y))] \nonumber \\
&={\cal L}^c(\theta)+\eta{\mathbb E}_{(x,y)}{\mathbb E}_{\tilde y \ne y}[\ell(h,(x,\tilde y))-\ell(h,(x,y))] \nonumber \\
&={\cal L}^c(\theta)+\eta\Delta(\theta) \nonumber
\end{align}
where $\Delta(\theta)={\mathbb E}_{(x,y)}{\mathbb E}_{\tilde y \ne y}[\ell(h,(x,\tilde y))-\ell(h,(x,y))] $. 

Let $g_k = \nabla{\cal L}(\theta_k)$ and $\nabla{\cal L}(\theta_t)=\nabla{\cal L}^c(\theta_k)+\eta\nabla\Delta(\theta_k)$. Here, we consider ${\nabla}_{\theta}{\cal L}(\theta)={\mathbb E}_{(x,y) \sim P}[{\nabla}_{\theta}\ell(h,(x,y))] $, and do not necessarily to explicitly Jacobian matrix $J_{\theta}(x)={\nabla_{\theta}}h_{\theta}(x)$, as it is not essential for the following derivations. Since the stochastic gradient $g_k$ is an unbiased estimator of the gradient for noisy samples, i.e., ${\mathbb E}[g_k \mid \theta_k]=\nabla{\cal L}(\theta_k)$, we have the following second moment 
\begin{align}
{\mathbb E}\|g_k\|^2 \le 2{\mathbb E}\|{\nabla}{\cal L}^c\|^2+2\eta^2{\mathbb E}\|{\nabla}{\Delta}\|^2. \nonumber 
\end{align}
By Definition~\ref{defi1}, we have $\|{\nabla}{\cal L}^c\|^2\le\sigma^2$ and obtain ${\mathbb E}\|g_k\|^2 \le 2(\sigma^2+\eta^2{\mathbb E}\|{\nabla}{\Delta}\|^2):=G^2(\eta)$.

Under Assumption~\ref{assu1}, let $\theta^*$ denote an optimal solution for ${\cal L}(\theta)$. Then, one step of SGD yields 
\begin{align}
\|\theta_{k+1}-\theta^*\|=\|\theta_k-\theta^*-\alpha_k g_k\|. \nonumber 
\end{align}
Squaring both sides, we obtain 
\begin{align}
\|\theta_{k+1}-\theta^*\|^2&=\|\theta_k-\theta^*-\alpha_k g_k\|^2 \nonumber \\
&=\|\theta_k-\theta^*\|^2-2\alpha_k\langle g_k,\theta_k-\theta^* \rangle+\alpha^2_kg^2_k .\nonumber 
\end{align}
Taking expectation and using the convexity of $\ell$, we have 
\begin{align}
{\mathbb E}[\|\theta_{k+1}-\theta^*\|^2 \mid \theta_k] &\le {\mathbb E}[\|\theta_k-\theta^*\|^2] -2\alpha_k{\mathbb E}[{\cal L}(\theta_k)-{\cal L}(\theta^*)]+\alpha^2_k {\mathbb E}\|g_k\|^2 \nonumber \\
2\alpha_k{\mathbb E}[{\cal L}(\theta_k)-{\cal L}(\theta^*)] &\le {\mathbb E}[\|\theta_k-\theta^*\|^2]-{\mathbb E}[\|\theta_{k+1}-\theta^*\|^2] +\alpha^2_kG^2(\eta). \nonumber 
\end{align}
Then, by a telescoping argument, we obtain 
\begin{align}
2\sum\limits^{K}_{k=1}{\alpha_k{\mathbb E}[{\cal L}(\theta_k)-{\cal L}(\theta^*)]} \le \Theta^2 + G^2(\eta)\sum\limits^{K}_{k=1}{\alpha^2_k}, \nonumber
\end{align}
where $\Theta = \|\theta_0-\theta^*\|$. 
Let ${\bar\theta_K}={\sum\alpha\theta_k}/{\sum\alpha}$ and $\alpha=\Theta/(G(\eta)\sqrt{K})$, we finally have 
\begin{align}
{\mathbb E}[{\cal L}(\bar\theta_K)-{\cal L}(\theta^*)]\le \frac{\Theta G(\eta)}{\sqrt{K}}. \nonumber
\end{align}
Assume that $\theta^{*c} \in argmin{\cal L}^c(\theta)$, we have
\begin{align}
{\cal L}(\theta^*)-{\cal L}^c(\theta^{*c})&=\eta\Delta(\theta^*)+[{\cal L}^c(\theta^*)-{\cal L}^c(\theta^{*c})] \nonumber \\
&\le \eta\Delta(\theta^*) \nonumber \\
&\le \eta\sigma{\mathbb E}_{y,\tilde y}\|\tilde y- y\|, \nonumber 
\end{align}
where the last inequality holds for Definition~\ref{defi1}. Let $\Delta_{\| \tilde y- y \|}={\mathbb E}_{y,\tilde y}\|\tilde y- y\|$, we have 
\begin{align}
{\cal L}(\theta^*)-{\cal L}^c(\theta^{*c}) &\le \eta\sigma\Delta_{\|\tilde y-y\|} \nonumber \\
{\cal L}(\bar\theta_K)+{\cal L}(\theta^*)-{\cal L}^c(\theta^{*c}) &\le {\cal L}(\bar\theta_K)+\eta\sigma\Delta_{\|\tilde y-y\|} \nonumber \\
{\cal L}(\bar\theta_K)-{\cal L}^c(\theta^{*c}) &\le {\cal L}(\bar\theta_K)-{\cal L}(\theta^*)+\eta\sigma\Delta_{\|\tilde y-y\|}. \nonumber 
\end{align}
Combining the previous results, we have 
\begin{align}
{\mathbb E}[{\cal L}(\bar\theta_K)-{\cal L}^c(\theta^{*c})] &\le {\mathbb E}[{\cal L}(\bar\theta_K)-{\cal L}(\theta^*)]+\eta\sigma\Delta_{\|\tilde y-y\|} \nonumber \\
{\mathbb E}[{\cal L}(\bar\theta_K)-{\cal L}^c(\theta^{*c})] &\le\frac{\Theta G(\eta)}{\sqrt{K}} +\eta\sigma\Delta_{\|\tilde y-y\|}. \nonumber 
\end{align}
To achieve an $\varepsilon$-optimal solution, it suffices that 
\begin{align}
K\ge\left(\frac{\Theta G(\eta)}{\varepsilon-\eta\sigma\Delta_{\|\tilde y-y\|}}\right)^2, \nonumber 
\end{align}
which derives $\eta $. 
\end{proof}

\begin{proof}{\textbf{of Theorem~\ref{theo2}.} }
Since ${\cal L}_{DA}(\theta)={\mathbb E}_{(x,\tilde y)}{\mathbb E}_{x'}[\ell(h,(x',\tilde y))]$, we have 
\begin{align}
{\cal L}_{DA}(\theta)&={\cal L}(\theta)+{\mathbb E}_{(x,\tilde y)}{\mathbb E}_{x'}[\ell(h,(x',\tilde y))-\ell(h,(x,\tilde y))] \nonumber \\
&= {\cal L}(\theta)+\Delta_{DA}(\theta), \nonumber
\end{align}
where $\Delta_{DA}(\theta)={\mathbb E}_{(x,\tilde y)}{\mathbb E}_{x'}[\ell(h,(x',\tilde y))-\ell(h,(x,\tilde y))]$. 

Let $g_k = \nabla{\cal L}_{DA}(\theta_k)$ and $\nabla{\cal L}_{DA}(\theta_t)=\nabla{\cal L}(\theta_k)+\nabla\Delta_{DA}(\theta_k)$. Similarly, we have 
\begin{align}
{\mathbb E}\|g_k\|^2 \le 2{\mathbb E}\|{\nabla}{\cal L}\|^2+2{\mathbb E}\|{\nabla}{\Delta_{DA}}\|^2. \nonumber 
\end{align}
By Definition~\ref{defi1}, we have $\|{\nabla}{\cal L}\|^2\le\sigma^2$ and ${\mathbb E}\|g_k\|^2 \le 2(\sigma^2+{\mathbb E}\|{\nabla}{\Delta_{DA}}\|^2):=G^2_{DA}$.

Under Assumption~\ref{assu1} and by following the previous steps in the proof of Theorem~\ref{theo1}, we finally obtain 
\begin{align}
{\mathbb E}[{\cal L}_{DA}(\bar\theta_K)-{\cal L}_{DA}(\theta^*_{DA})]\le \frac{\Theta G_{DA}}{\sqrt{K}} ,\nonumber 
\end{align}
where $\theta^*_{DA}$ is an optimal solution for ${\cal L}_{DA}(\theta)$. 
Consider that $\theta^* \in argmin{\cal L}(\theta)$ and $\theta^{*c} \in argmin{\cal L}^c(\theta)$, we have
\begin{align}
{\cal L}_{DA}(\theta^*_{DA})-{\cal L}(\theta^{*})&=\Delta(\theta^*_{DA})+[{\cal L}(\theta^*_{DA})-{\cal L}(\theta^{*})] \nonumber \\
&\le \Delta(\theta^*_{DA}) \nonumber \\
&\le \sigma {\mathbb E}_{x,x’}\|x’-x\|, \nonumber 
\end{align}
where the last inequality holds for Lemma~\ref{lemm2}. Let $\Delta_{\|x'-x\|}={\mathbb E}_{x,x’}\|x’-x\|$, we have 
\begin{align}
{\cal L}_{DA}(\theta^*_{DA})-{\cal L}(\theta^{*}) &\le \sigma\Delta_{\|x’-x\|} \nonumber \\
({\cal L}_{DA}(\theta^*_{DA})-{\cal L}(\theta^{*}))+({\cal L}(\theta^{*})-{\cal L}^c(\theta^{*c})) &\le \sigma\Delta_{\|x'-x\|}+\eta\sigma\Delta_{\|\tilde y-y\|} \nonumber \\
{\cal L}_{DA}(\theta^*_{DA})-{\cal L}^c(\theta^{*c})&\le \sigma\Delta_{\|x'-x\|}+\eta\sigma\Delta_{\|\tilde y-y\|} \nonumber \\
{\cal L}_{DA}(\bar\theta_{K})+{\cal L}_{DA}(\theta^*_{DA})-{\cal L}^c(\theta^{*c})&\le {\cal L}_{DA}(\bar\theta_{K}) + \sigma\Delta_{\|x'-x\|}+\eta\sigma\Delta_{\|\tilde y-y\|} \nonumber \\
{\cal L}_{DA}(\bar\theta_{K})-{\cal L}^c(\theta^{*c})&\le {\cal L}_{DA}(\bar\theta_{K})-{\cal L}^*_{DA}+ \sigma\Delta_{\|x'-x\|}+\eta\sigma\Delta_{\|\tilde y-y\|}. \nonumber 
\end{align}
Combining the previous results, we have
\begin{align}
{\mathbb E}[{\cal L}_{DA}(\bar\theta_{K})-{\cal L}^c(\theta^{*c})]\le \frac{\Theta G_{DA}}{\sqrt{K}}+ \sigma\Delta_{\|x'-x\|}+\eta\sigma\Delta_{\|\tilde y-y\|}. \nonumber 
\end{align}
To achieve an $\varepsilon$-optimal solution, it suffices that 
\begin{align}
K\ge\left(\frac{\Theta G_{DA}}{\varepsilon-\eta\sigma\Delta_{\|\tilde y-y\|} - \sigma\Delta_{\|x'-x\|}} \right)^2, \nonumber 
\end{align}
which derives $\eta $. 
\end{proof}

\begin{proof}{\textbf{of Corollary~\ref{coro1}.} }
The proof process follows that of Theorem~\ref{theo2}. The main difference is that we additionally need to consider $\Delta'_{DA}:={\mathbb E}_{x,\tilde y}{\mathbb E}_{(x',\tilde y')}[\ell(h,(x',\tilde y'))-\ell(h,(x,\tilde y))]$, since we have ${\cal L}'_{DA}(\theta)={\cal L}(\theta)+{\cal L}'_{DA}(\theta)-{\cal L}(\theta)$. Assuming $\ell \le M$, by Lemma~\ref{lemm1}, we obtain 
\begin{align}
\Delta'_{DA} \le M\sqrt{2KL(P||P’_{DA})}, \nonumber 
\end{align}
which completes the proof.
\end{proof}

\begin{proof}{\textbf{of Theorem~\ref{theo3}.} }
We first establish the following result: 
\begin{align}
\hat R(h)-R(h)&=\eta(\hat R(h)-R(h))_{\tilde y \ne y}+(1-\eta)(\hat R(h) - R(h))_{\tilde y = y} \nonumber \\
&=\eta(\hat R(h)-R(h)_{\tilde y \ne y}+(1-\eta)(\hat R^c(h) - R^c(h)). \nonumber 
\end{align}
Since we assume that $\ell \le M$, it follows that $\ell(h,(x_i,\tilde y_i))-{\mathbb E}_{(x,\tilde y)}[\ell] \in [-M, M]$. Then, by Lemma~\ref{lemm3}, with probability at least $1-(\delta/2)$, we have 
\begin{align}
\hat R(h)-R(h)_{\tilde y \ne y} \le \sqrt{\frac{(2M)^2\ln(2/\delta)}{2n_{\eta}}}, \nonumber 
\end{align}
and 
\begin{align}
\hat R^c(h)-R^c(h)) \le \sqrt{\frac{(2M)^2\ln(2/\delta)}{2n_{c}}}, \nonumber 
\end{align}
which completes the proof. 
\end{proof}

\begin{proof}{\textbf{of Theorem~\ref{theo4}.} }
We first establish the following result: 
\begin{align}
\hat R_{DA}(h)-R(h)=[\hat R_{DA}(h)-\hat R(h)]+[\hat R(h)-R(h)]. \nonumber 
\end{align}
The second term has been addressed previously; thus, we focus on the first term. Let $Z_i = [\ell(h,(x’_i,\tilde y_i)) - \ell(h,(x_i,\tilde y_i))] - [{\mathbb E}_{(x’,\tilde y)} - {\mathbb E}_{(x,\tilde y)}]$, then it follows that $Z_i \in [-2M, 2M]$. Similarly, by Lemma~\ref{lemm3}, with probability at least $1-(\delta/3)$, we have 
\begin{align}
[\hat R_{DA}(h)-\hat R(h)]&-[R_{DA}(h)-R(h)] \le \sqrt{\frac{((4M)^2\ln(3/\delta)}{2n}}. \nonumber \\
&\le [R_{DA}(h)-R(h)] + \sqrt{\frac{(4M)^2\ln(3/\delta)}{2n}} \nonumber \\
&\le \sigma\Delta_{\|x’-x\|} + \sqrt{\frac{(4M)^2\ln(3/\delta)}{2n}}, \nonumber 
\end{align}
where the last inequality follows from Lemma~\ref{lemm2}, completing the proof. 
\end{proof}

\begin{proof}{\textbf{of Corollary~\ref{coro3}}.} 
Based on the proof of Corollary~\ref{coro1}, we have 
\begin{align}
R’_{DA}(h)-R(h) \le M\sqrt{2KL(P||P’_{DA})}, \nonumber 
\end{align}
which completes the proof by following the same procedure as in the proof of Theorem~\ref{theo4}.
\end{proof}

\begin{proof}{\textbf{of Corollary~\ref{coro2} and Corollary~\ref{coro4}.} }
Considering the events $\hat R(h) - R(h) \ge \varepsilon$, $\hat R_{DA}(h) - R(h) \ge \varepsilon$, and $\hat R'_{DA}(h) - R(h) \ge \varepsilon$, the desired results follow directly from the corresponding theorems. For example, $ \varepsilon \le \eta A+(1-\eta) B = \eta A - \eta B + B $ leads $\eta \ge \frac{\varepsilon - B}{A-B}$ and $\eta \le \frac{B - \varepsilon}{B - A}.$ 
\end{proof}

\begin{proof}{\textbf{of Theorem~\ref{theo4p5}.} }
Since $\ell$ is bounded by $M$, let $Z_i=\hat\gamma(h_1,h_2)_i-\gamma(h_1,h_2)$, we obtain $Z_i \in [-2M,2M]$. Then, by Lemma~\ref{lemm3}, for any $\varepsilon > 0$, we have 
\begin{align}
P\left\{ \frac{1}{n}\sum\limits_{i=1}^{n}{Z_i}-{\mathbb E}[Z]\ge \varepsilon \right\}\le exp\left\{-\frac{2n\varepsilon^2}{(4M)^2}\right\}. \nonumber 
\end{align}
\end{proof}

\begin{proof}{\textbf{of Corollary~\ref{coro4p5}.} }
Based on Theorem~\ref{theo4p5}, different types of inequalities can be derived. Here, we only illustrate how to decompose the relative expected risks. Specifically, we consider the following decomposition: 
\begin{align}
{\mathbb E}[\ell(h_1,z)-\ell(h_2,z)]&={\mathbb E}[\ell(h_1,z)+\ell(h,z)-\ell(h,z)-\ell(h_2,z)] \nonumber \\
&={\mathbb E}[\ell(h_1,z)-\ell(h,z)]+{\mathbb E}[\ell(h,z)-\ell(h_2,z)]. \nonumber 
\end{align}
Then, based on Definition~\ref{defi1} and Definition~\ref{defi3}, we rewire ${\mathbb E}[\ell(h,z)-\ell(h_2,z)]$ to $\sigma d(h, h_2)$, completing the proof. 
\end{proof}

\begin{proof}{\textbf{of Theorem~\ref{theo6}.} }
Let $a=h^*(x)$, $b=h^{*c}(x)$, $c=h^*_{DA}(x’)$, and $d=h^{*c}_{DA}(x’)$. Then, applying the inequality 
\begin{align}
\left| |a-b|-|c-d| \right|\le \left|a-c\right| + \left|b-d\right|, \nonumber 
\end{align}
the proof is completed by taking expectations on both sides. 
\end{proof}

\begin{proof}{\textbf{of Theorem~\ref{theo7}.} }
Since $\epsilon = {\mathbb E}_{x’,\tilde y}[\ell(h^*_{DA}, (x’,\tilde y))]$, it holds that 
\begin{align}
{\mathbb E}_{x',\tilde y}[\left|\ell(h^*_{DA}, (x’,\tilde y))-\ell(h^{*c}_{DA}, (x’,\tilde y))\right|]&={\mathbb E}_{x',\tilde y}[\left|\ell(h^{*c}_{DA}, (x’,\tilde y))-\epsilon\right|] \nonumber \\ 
&={\mathbb E}_{x',\tilde y}[\left|\ell(h^{*c}_{DA}, (x’,\tilde y))\right|] -\epsilon \nonumber
\end{align}
Moreover, since $\ell \in [0, M]$, the expected risk can be transformed into the error rate as 
\begin{align}
{\mathbb E}_{x',\tilde y}[\left|\ell(h^{*c}_{DA}, (x’,\tilde y))\right|] = M \cdot {P}_{DA}(h^{*c}_{DA}(x’)\ne\tilde y), \nonumber 
\end{align}
and let $P_n = {P}_{DA}(h^{*c}_{DA}(x’)\ne\tilde y)$ and $h=h^{*c}_{DA}$. Then, according to the Law of Total Probability, we can decompose $P_n$ as follows while ignoring higher-order infinitesimal terms: 
\begin{align}
P_n\approx \underbrace {P(\tilde y\ne y)P(h(x')=y|\tilde y \ne y)}_{\text{error by noise}}+\underbrace {P(\tilde y = y)P(h(x')\ne y|\tilde y = y)}_{\text{error by DA}}. \nonumber 
\end{align} 
Since $h$ is the optimal solution trained on cleaned data with DA, $P(h(x')=y | \tilde y \ne y)=1$, while $P(h(x')\ne y | \tilde y = y)$ is mainly determined by the prediction behavior of $h$, namely that $h$ fails to correctly classify the input $x'$. This implies that the feature perturbation $|x'-x|$ crosses the decision margin boundary of $h$, namely, $|x’-x|>r(x)$,  and we have 
\begin{align}
P(h(x')\ne y|\tilde y = y) = P_{r(x)}(\|x’-x\|\ge r(x)). \nonumber 
\end{align}
Then, we obtain 
\begin{align}
P_n=P(\tilde y\ne y) + P(\tilde y= y)P_{r(x)}. \nonumber 
\end{align}
Let $Z = |x' - x|$. By Markov's inequality, we have 
\begin{align}
P(Z \ge r(x)) \le \frac{{\mathbb E}[Z]}{r(x)}. \nonumber 
\end{align}
We reformulate all results as follows: 
\begin{align}
MP_n - \varepsilon \approx P(\tilde y\ne y) + P(\tilde y= y)P_{r(x)}. \nonumber 
\end{align}
Based on the above, we know that 
\begin{align} 
P_n \ge \max \left\{ P(\tilde y \ne y), P_{r(x)},\frac{1}{2}\left( P(\tilde y \ne y)+ P_{r(x)}\right)  \right\}. \nonumber  
\end{align}
Moreover, $P(\tilde y\ne y) \propto {\mathbb E}_{x’,\tilde y, y}[|\tilde y - y|]$, and let $P_{r(x)}={\mathbb E}[|x’-x|]/r(x)$, we have 
\begin{align}
P_n \ge \max \left\{ \underbrace {\alpha{\mathbb E}_{x’,\tilde y, y}[|\tilde y - y|]}_A, \underbrace{\frac{{\mathbb E}[|x’-x|]}{r(x)}}_B, \underbrace{\frac{1}{2}\left( \alpha {\mathbb E}_{x’,\tilde y, y}[|\tilde y - y|] + \frac{\mathbb E[|x’-x|]}{r(x)}\right)}_C  \right\}. \nonumber 
\end{align}
Here, ${\mathbb E}_{x’,\tilde y, y}$ and ${\mathbb E}$ denote the expectations with respect to $P_{DA}$, and $\alpha$ represents a constant discrepancy between $P$ and ${\mathbb E}$. Then, based on the condition~\eqref{theo7eq1} and $d_{x}(h^*,h^{*c}) \le \eta_{max}$, we have 
\begin{align}
\max\{A,B,C\} \ge \frac{\sigma \eta_{max} - \epsilon}{M}. \nonumber 
\end{align}
Namely, 
\begin{align}
M \max\{A,B,C\} - \epsilon \ge \sigma \eta_{max}. \nonumber 
\end{align}
Since $\ell$ is $\sigma$-admissible, it follows that $\sigma d_{x’}(h^*_{DA}, h^{*c}_{DA}) \ge {\mathbb E}_{x’,\tilde y}[| \ell(h^*_{DA},\cdot) -\ell(h^{*c}_{DA}, \cdot) |]$. Thus, we have 
\begin{align}
\sigma d_{x'}(h^*_{DA}, h^{*c}_{DA}) \ge M \cdot P_n - \epsilon \ge M \max\{A,B,C\} -\epsilon \ge \sigma \eta_{max} \ge \sigma d_{x}(h^*, h^{*c} ), \nonumber 
\end{align}
which completes the proof. 

\end{proof}

\begin{proof}{\textbf{of Theorem~\ref{theo7p5}.} }
By the triangle inequality, for any $h \in {\cal H}$, we have 
\begin{align}
-d_{x}(h^*,h^{*c})&\ge -d_{x}(h,h^*)-d_{x}(h,h^{*c}) \nonumber \\
-d_{x’}(h^*_{DA},h^{*c}_{DA})&\ge -d_{x’}(h,h^*_{DA})-d_{x’}(h,h^{*c}_{DA}) \nonumber 
\end{align}
First, we prove that $d_{x}(h,h^*)-d_{x’}(h,h^*_{DA})\ge{\cal K} \Rightarrow \varepsilon_{\eta} \ge \varepsilon_{\eta_{DA}}$. 
From the above, we have 
\begin{align}
d_{x}(h^*,h^{*c})-d_{x’}(h^*_{DA},h^{*c}_{DA})\ge& d_{x}(h^*,h^{*c})-d_{x’}(h,h^*_{DA})-d_{x’}(h,h^{*c}_{DA}) \nonumber \\
=&d_{x}(h^*,h^{*c}) +d_{x}(h^{*c},h) -d_{x}(h^{*c},h) \nonumber \\
&-d_{x’}(h,h^*_{DA})-d_{x’}(h,h^{*c}_{DA}) \nonumber \\
\ge& d_{x}(h^*,h)-d_{x}(h^{*c},h)-d_{x’}(h,h^*_{DA})-d_{x’}(h,h^{*c}_{DA}) \nonumber \\
=& d_{x}(h,h^*)-d_{x’}(h,h^*_{DA})-\left[ d_{x}(h,h^{*c}) + d_{x’}(h,h^{*c}_{DA}) \right] \nonumber 
\end{align} 
By the definition of $d$, we have 
\begin{align}
d_{x}(h,h^{*c}) + d_{x’}(h,h^{*c}_{DA}) &= {\mathbb E}_{x}[| h(x)-h^{*c}(x) |]+ {\mathbb E}_{x’}[| h(x’)-h^{*c}_{DA}(x’) |] \nonumber \\
& \le {\mathbb E}_{x}[| h(x)|]+{\mathbb E}_{x}[|h^{*c}(x) |]+ {\mathbb E}_{x'}[| h(x')|]+{\mathbb E}_{x'}[| h^{*c}_{DA}(x') |] \nonumber \\
& \le {\mathbb E}_{x}[| h(x)|]+ {\mathbb E}_{x'}[| h(x')|]+2\upsilon. \nonumber 
\end{align}
Here, the inequalities hold due to the triangle inequality and the assumption that $h(x) \in [0, \upsilon]$. Since $d_{x}(h,h^*)-d_{x’}(h,h^*_{DA}) \ge {\cal K}$, we obtain 
\begin{align}
d_{x}(h,h^*)-d_{x’}(h,h^*_{DA})-\left[ d_{x}(h,h^{*c}) + d_{x’}(h,h^{*c}_{DA}) \right] \ge 0, \nonumber 
\end{align}
that is $\varepsilon_{\eta} - \varepsilon_{\eta_{DA}} \ge 0$. 

Second, we prove that $d_{x’}(h,h^*_{DA})-d_{x}(h,h^*)\ge{\cal K} \Rightarrow \varepsilon_{\eta_{DA}} \ge \varepsilon_{\eta}$. Similarly, we have 
\begin{align}
d_{x'}(h^*_{DA},h^{*c}_{DA}) - d_{x}(h^*,h^{*c}) &\ge d_{x'}(h^*_{DA},h^{*c}_{DA})-d_{x}(h,h^*)-d_{x}(h,h^{*c}) \nonumber \\
&\ge d_{x'}(h,h^*_{DA})-d_{x}(h,h^*)-\left[ d_{x}(h,h^{*c}) + d_{x'}(h,h^{*c}_{DA}) \right] \nonumber 
\end{align}
Based on the above results and $d_{x’}(h,h^*_{DA})-d_{x}(h,h^*)\ge{\cal K}$, the proof is complete. 
\end{proof}

\begin{proof}{\textbf{of Theorem~\ref{theo5}.} }
Consider the following decomposition. For any $h \in {\cal H}$, 
\begin{align}
R(h)-R^c(h)&=R(h)+\hat R(h)-\hat R(h)+\hat R^c(h)-\hat R^c(h)-R^c(h) \nonumber \\
&=[R(h)-\hat R(h)]+[\hat R^c(h)-R^c(h)]+[\hat R(h)-\hat R^c(h)] .\nonumber 
\end{align}
Then, by Lemma~\ref{lemm3}, with probability at least $1-(\delta/7)$, we have 
\begin{align}
R(h)-R^c(h) \le 4M\sqrt{\frac{\ln(7/\delta)}{2n}} + [\hat R(h)-\hat R^c(h)] \nonumber 
\end{align}
Let $Z_i = [\hat R(h)-\hat R^c(h)]_i - [R(h)-R^c(h)]$, then it follows that $Z_i \in [-2M, 2M]$.  Hereafter, we simplify the expression of $Z_i$ stated in the previous proof, while preserving the same meaning. Similarly, by lemma~\ref{lemm3}, with probability at least $1-(\delta/7)$, we have 
\begin{align}
\hat R(h)-\hat R^c(h) &\le [R(h)-R^c(h)]+\sqrt{\frac{(4M)^2\ln(7/\delta)}{2n}} \nonumber \\
& \le \sigma\Delta_{\|\tilde y - y\|} + 4M\sqrt{\frac{(\ln(7/\delta)}{2n}}, \nonumber 
\end{align}
where the last inequality follows from Lemma~\ref{lemm2}. Then, we have 
\begin{align}
R(h)-R^c(h) \le \sigma\Delta_{\|\tilde y - y\|} + 8M\sqrt{\frac{(\ln(7/\delta)}{2n}}. \nonumber 
\end{align}
Consider the following decomposition. For any $h \in {\cal H}$,
\begin{align}
R^c(h)-R_{DA}(h)=&R^c(h)+\hat R^c(h)-\hat R^c(h)-\hat R(h) + [\hat R(h)-R_{DA}(h)] \nonumber \\
=&[R^c(h)-\hat R^c(h)]+[\hat R^c(h)-\hat R(h)] + [\hat R(h)-R_{DA}(h)] \nonumber \\
=&[R^c(h)-\hat R^c(h)]+[\hat R^c(h)-\hat R(h)]  \nonumber \\
&+[\hat R(h)+\hat R_{DA}(h) - \hat R_{DA}(h) - R_{DA}(h)] \nonumber \\
=&[R^c(h)-\hat R^c(h)]+[\hat R^c(h)-\hat R(h)]  \nonumber \\
& +[\hat R(h) - \hat R_{DA}(h)]+[\hat R_{DA}(h) - R_{DA}(h)] .\nonumber 
\end{align}
Then, by Lemma~\ref{lemm3} and the above, with probability at least $1-(\delta/7)$, we have 
\begin{align}
R^c(h)-R_{DA}(h) \le \sigma\Delta_{\|\tilde y - y\|} + 8M\sqrt{\frac{\ln(7/\delta)}{2n}} + [\hat R(h) - \hat R_{DA}(h)]. \nonumber 
\end{align}
Let $Z_i  = [\hat R(h) - \hat R_{DA}(h)]_i  - [R(h) - R_{DA}(h)]$, then it follows that $Z_i  \in [-2M, 2M]$. Similarly, by lemma~\ref{lemm3}, with probability at least $1-(\delta/7)$, we have 
\begin{align}
\hat R_{DA}(h)-\hat R(h) &\le [R_{DA}(h)-R(h)] + \sqrt{\frac{(4M)^2\ln(7/\delta)}{2n}} \nonumber \\
& \le \sigma\Delta_{\|x’-x\|} + \sqrt{\frac{(4M)^2\ln(7/\delta)}{2n}}, \nonumber 
\end{align}
where the last inequality follows from Lemma~\ref{lemm2}. Then we have 
\begin{align}
R^c(h)-R_{DA}(h) \le \sigma\Delta_{\|\tilde y - y\|} + \sigma\Delta_{\|x’-x\|} + 12M\sqrt{\frac{\ln(7/\delta)}{2n}} \nonumber 
\end{align}
If considering $R’_{DA}(h)$, $\sigma\Delta_{\|x’-x\|}$ can be replaced by $M\sqrt{2KL(P||P’_{DA})}$. Finally, let $ [R(h)-R^c(h)] + [R^c(h)-R_{DA}(h)] $, completing the proof. 
\end{proof} 

\begin{proof}{\textbf{of Corollary~\ref{coro5}.} }
According to Theorem~\ref{theo5}, we have 
\begin{align}
&R(h^*)-R_{DA}(h^*)\le \Omega, \nonumber \\
&R(h^*_{DA})-R_{DA}(h^*_{DA})\le \Omega, \nonumber 
\end{align}
where $\Omega$ denotes the RHS of the inequality in Theorem~\ref{theo5}. Moreover, by Definition~\ref{defi1}, it follows that $| R_{DA}(h^*) - R(h^*_{DA}) | \le \sigma d_{x’,x}$. The same result can also be derived by considering $R’_{DA}(h)$. Then, incorporating the above inequality completes the proof.
\end{proof} 

\begin{proof}{\textbf{of Theorem~\ref{theo8}.} }
Since $\ell$ is bounded by $M$, let $Z_{(s,i)}=\hat \gamma_{(s,i)}(h_1,h_2)-\gamma_s(h_1,h_2)$, we obtain that $Z_{(s,i)} \in [-2M,2M]$, where $s$ and $i$ denote the domain and sample indices, respectively. Then, by Lemma~\ref{lemm4}, for any $\varepsilon > 0$, we have 
\begin{align}
P\left\{ \frac{1}{E\bar n}\sum\limits_{s=1}^{E}{ \sum\limits_{i=1}^{\bar n}{\hat \gamma_{(s,i)}(h_1,h_2)} } - \frac{1}{E}\sum\limits_{s=1}^{E}{\gamma_s(h_1,h_2) }\ge \varepsilon \right\} \le exp\left\{ -\frac{2(E\bar n \varepsilon)^2}{E\bar n (4M)^2} \right\} . \nonumber 
\end{align}
Here, $\hat \gamma_{(s,i)}(h_1,h_2) = \ell(h_1,(x_i,y_i)_s) - \ell(h_2, (x_i,y_i)_s)$ and $\gamma_s(h_1,h_2) = {\mathbb E}_{P_s}[\ell(h_1,(x,y)_s) - \ell(h_2, (x,y)_s)]$.
\end{proof}

\begin{proof}{\textbf{of Corollary~\ref{coro8}.} }
Based on Theorem~\ref{theo8}, different types of inequalities can be derived. Similarly, we only illustrate how to decompose the relative expected risks, which as follows:
\begin{align}
{\mathbb E}[\ell(h_1,z)-\ell(h_2,z)]&={\mathbb E}[\ell(h_1,z)+\ell(h,z)-\ell(h,z)-\ell(h_2,z)] \nonumber \\
&={\mathbb E}[\ell(h_1,z)-\ell(h,z)]+{\mathbb E}[\ell(h,z)-\ell(h_2,z)]. \nonumber 
\end{align}
Considering the multi-domain settings, we have 
\begin{align}
\sum{\gamma_s(h,h_s)} = \sum{\gamma_s(h,h'_s)} + \sum{\gamma_s(h'_s, h_s)}.  \nonumber 
\end{align}
Note that, since $h$ is arbitrary, we allow $h_s$ and $h'_s$ to vary across domains. Then, based on Definition~\ref{defi1} and Definition~\ref{defi3}, we rewire ${\mathbb E}[\ell(h,z)-\ell(h_2,z)]$ to $\sigma d(h, h_2)$, completing the proof. 
\end{proof} 

\begin{proof}{\textbf{of Theorem~\ref{theo9}.} }
The proof follows that of Theorem~\ref{theo6}. Extending the argument to the multiple-domain setting completes the proof. 
\end{proof} 

\begin{proof}{\textbf{of Theorem~\ref{theo10}.} }
According to Lemma~\ref{lemm5}, for any $P_i$ and $Q_j$, we have 
\begin{align}
\left| {\mathbb E}_{P_i}[\ell]-{\mathbb E}_{Q_j}[\ell]\right| \le 2\sqrt{2}M d_{JS}(P_i,Q_j). \nonumber 
\end{align}
Then, extending this inequality to ${\bf P}$ and ${\bf P_t}$, we obtain 
\begin{align}
\left| \frac{1}{E_t}\sum\limits_{t=1}^{E_t}{ {\mathbb E}_{P_t}[\ell] } - \frac{1}{E}\sum\limits_{s=1}^{E}{ {\mathbb E}_{P_s}[\ell] }\right| \le C \sum\limits_{t=1}^{E_t}{\sum\limits_{s=1}^{E} d_{JS}(P_t,P_s)}, \nonumber 
\end{align}
where $C = (2\sqrt{2}M)/{E E_t}$ . 
Let $\lambda = \sum_t\sum_s d_{JS}(P_t,P_s)$ and 
\begin{align}
a&=\frac{1}{E_t}\sum_t{ {\mathbb E}_{z\sim P_t}[\ell(h, z) - \ell(h^{*c}_t, z)] } \nonumber \\
b&=\frac{1}{E_t}\sum_t{ {\mathbb E}_{z\sim P_t}[\ell(h^{*c}_t, z)] } \nonumber \\
c&=\frac{1}{E}\sum_s{ {\mathbb E}_{z\sim P_s}[\ell(h, z) - \ell(h^{*c}_s, z)] } \nonumber \\
d&=\frac{1}{E}\sum_s{ {\mathbb E}_{z\sim P_s}[\ell(h^{*c}_s, z)] }. \nonumber 
\end{align}
Applying the above inequality yields 
\begin{align}
\left| (a+b) - (c+d) \right| = \left| (a-c) - (d-b) \right| \le C{\lambda}.  \nonumber 
\end{align}
By the triangle inequality, we have 
\begin{align}
\left| a-c \right| - \left| d-b \right| &\le \left| (a-c) - (d-b) \right| \le C{\lambda}  \nonumber \\
\left| a-c \right| &\le \left| d-b \right| + C{\lambda} \nonumber 
\end{align}
Similarly, it follows from Lemma~\ref{lemm5} that 
\begin{align}
\left| d-b \right| = \left| \frac{1}{E}\sum_s{ {\mathbb E}_{z\sim P_s}[\ell(h^{*c}_s, z)] } - \frac{1}{E_t}\sum_t{ {\mathbb E}_{z\sim P_t}[\ell(h^{*c}_t, z)] } \right| \le C{\lambda}^{*c},\nonumber 
\end{align}
where ${\lambda}^{*c} = \sum_t\sum_s d_{JS}(P_{h^{*c}_t}, P_{h^{*c}_s})$. This implies that 
\begin{align}
\left| a-c \right| \le C{\lambda}^{*c} + C{\lambda}. \nonumber 
\end{align}
We now examine $|a-c|$ and consider 
\begin{align}
a &= \frac{1}{E_t}\sum_t{\mathbb E}[\ell(h,z)+\ell(h^*_t,z)-\ell(h^*_t,z)-\ell(h^{*c}_t,z)] \nonumber \\
&= \underbrace{\frac{1}{E_t}\sum_t{\mathbb E}[\ell(h,z) - \ell(h^*_t,z)]}_{a_1} - \underbrace{\frac{1}{E_t}\sum_t{\mathbb E}[\ell(h^{*c}_t,z) - \ell(h^*_t,z)] }_{a_2}\nonumber \\
c &= \frac{1}{E}\sum_s{\mathbb E}[\ell(h,z)+\ell(h^*_s,z)-\ell(h^*_s,z)-\ell(h^{*c}_s,z)] \nonumber  \\
& = \underbrace{\frac{1}{E}\sum_s{\mathbb E}[\ell(h,z) - \ell(h^*_s,z)]}_{c_1} - \underbrace{\frac{1}{E}\sum_s{\mathbb E}[\ell(h^{*c}_s,z) - \ell(h^*_s,z)]}_{c2}. \nonumber 
\end{align}
Similarly, applying the triangle inequality $| a-c |\ge | a_1 - a_2 | - | c_1 - c_2 | $, yields 
\begin{align}
\left| a_1 \right| \le \left| a_2 \right|+\left| c_1 \right|+\left| c_2 \right| + C{\lambda}^{*c} + C{\lambda}. \nonumber 
\end{align} 
Now, substituting $a_2=\sigma \varepsilon_{\eta_t}$ and $c_2=\sigma \varepsilon_{\eta}$ yields 
\begin{align}
\frac{1}{E_t}\sum_t{\mathbb E}[\ell(h,z) - \ell(h^*_t,z)] \le \sigma\varepsilon_{\eta_t} + \frac{1}{E}\sum_s{\mathbb E}[\ell(h,z) - \ell(h^*_s,z)]+ \sigma\varepsilon_{\eta}+ C{\lambda}^{*c} + C{\lambda}. \nonumber 
\end{align}
Now, by Lemma~\ref{lemm4}, the following inequality holds with probability at least $1-\delta$: 
\begin{align}
\frac{1}{E}\sum_s{\mathbb E}[\ell(h,z) - \ell(h^*_s,z)] \le \frac{1}{E \bar n}\sum_s{\hat \gamma(h,h^*_s)} + 4M\sqrt{\frac{\ln(1/\delta)}{2E\bar n}}, \nonumber 
\end{align}
which completes the proof. 
\end{proof}

\begin{proof}{\textbf{of Corollary~\ref{coro9}.} }
Based on the proof of Theorem~\ref{theo10}, let 
\begin{align}
a&=\frac{1}{E_t}\sum_t{ {\mathbb E}_{z\sim P_t}[\ell(h, z) - \ell(h^{*c}_t, z)] } \nonumber \\
b&=\frac{1}{E_t}\sum_t{ {\mathbb E}_{z\sim P_t}[\ell(h^{*c}_t, z)] } \nonumber \\
c&=\frac{1}{E}\sum_s{ {\mathbb E}_{z\sim P_s}[\ell(h, z) - \ell({h^{*c}_{DA}}_s, z)] } \nonumber \\
d&=\frac{1}{E}\sum_s{ {\mathbb E}_{z\sim P_s}[\ell({h^{*c}_{DA}}_s, z)] }. \nonumber 
\end{align}
Then, it similarly follows from Lemma~\ref{lemm5} that 
\begin{align}
\left| d-b \right| = \left| \frac{1}{E}\sum_s{ {\mathbb E}_{z\sim P_s}[\ell({h^{*c}_{DA}}_s, z)] } - \frac{1}{E_t}\sum_t{ {\mathbb E}_{z\sim P_t}[\ell(h^{*c}_t, z)] } \right| \le C{\lambda}^{*c}_{DA},\nonumber 
\end{align}
where ${\lambda}^{*c}_{DA} = \sum_t\sum_s d_{JS}(P_{h^{*c}_t}, P_{{h^{*c}_{DA}}_s})$. Then, we consider 
\begin{align}
c &= \frac{1}{E}\sum_s{\mathbb E}[\ell(h,z)+\ell({h^*_{DA}}_s,z)-\ell({h^*_{DA}}_s,z)-\ell({h^{*c}_{DA}}_s,z)]. \nonumber  \\
& = \underbrace{\frac{1}{E}\sum_s{\mathbb E}[\ell(h,z) - \ell({h^*_{DA}}_s,z)]}_{c_1} - \underbrace{\frac{1}{E}\sum_s{\mathbb E}[\ell({h^{*c}_{DA}}_s,z) - \ell({h^*_{DA}}_s,z)]}_{c2} . \nonumber 
\end{align}
Now, substituting $c_2=\sigma \varepsilon_{\eta_{DA}}$ yields 
\begin{align}
\frac{1}{E_t}\sum_t{\mathbb E}[\ell(h,z) - \ell(h^*_t,z)] \le& \sigma\varepsilon_{\eta_t} + \frac{1}{E}\sum_s{\mathbb E}[\ell(h,z) - \ell({h^*_{DA}}_s,z)] \nonumber \\
&+ \sigma\varepsilon_{\eta_{DA}}+ C{\lambda}^{*c}_{DA} + C{\lambda}_{DA}, \nonumber 
\end{align}
where, ${\lambda}_{DA} = \sum_t\sum_s d_{JS}(P_t,{P_{DA}}_s)$. Now, by Lemma~\ref{lemm4}, the following inequality holds with probability at least $1-\delta$: 
\begin{align}
\frac{1}{E}\sum_s{\mathbb E}[\ell(h,z) - \ell({h^*_{DA}}_s,z)] \le \frac{1}{E \bar n}\sum_s{\hat \gamma(h, {h^*_{DA}}_s)} + 4M\sqrt{\frac{\ln(1/\delta)}{2E\bar n}}, \nonumber 
\end{align}
which completes the proof. 
\end{proof}

\begin{proof}{\textbf{of Theorem~\ref{theo11}.} }
By Lemma~\ref{lemm6}, for any $i \ne j \in [E]$, we have 
\begin{align}
KL({P_{DA}}_i || {P_{DA}}_j) \le KL(P_i || P_j). \nonumber 
\end{align}
Let $M_{DA} = ({P_{DA}}_i + {P_{DA}}_j) / 2 $ and $M = (P_i+P_j)/2$. Then, still applying Lemma~\ref{lemm6}, we obtain
\begin{align}
KL({P_{DA}}_i || M_{DA}) &\le KL(P_i || M). \nonumber \\
KL({P_{DA}}_j || M_{DA}) &\le KL(P_j || M), \nonumber 
\end{align} 
which implies that 
\begin{align}
KL({P_{DA}}_i || M_{DA}) + KL({P_{DA}}_j || M_{DA}) &\le KL(P_i || M) + KL(P_j || M) \nonumber  \\ 
\frac{1}{2}\left( KL({P_{DA}}_i || M_{DA}) + KL({P_{DA}}_j || M_{DA}) \right) &\le \frac{1}{2}\left( KL(P_i || M) + KL(P_j || M) \right) . \nonumber 
\end{align}
Then, by Definition~\ref{defi5}, we have 
\begin{align}
JSD({P_{DA}}_1, \ldots, {P_{DA}}_E) &= \frac{1}{E}\sum_{i=1}^E{KL({P_{DA}}_i || M_{DA})} \nonumber  \\
JSD(P_1, \ldots, P_E) &= \frac{1}{E}\sum_{i=1}^E{KL(P_i || M)}, \nonumber 
\end{align}
where $M_{DA} = (1/E)\sum{{P_{DA}}_i}$ and $M = (1/E)\sum{P_i}$, completing the proof. 

\end{proof}

\begin{proof}{\textbf{of Theorem~\ref{theo12}.} }
By identifying $\{P_1,\ldots,P_E\}$ with $\{P_{h^*_1},\ldots,P_{h^*_E}\}$, estimating ${\bf 1}_{\{\phi(X)\in P_i\}} , \forall i \in [E]$, is equivalent to estimating ${\bf 1}_{\{\phi(X)\in P_{h^*_i}\}} , \forall i \in [E]$. Then, according to Definition~\ref{defi4}, $\phi \ne h^*_i$ implies 
\begin{align}
\phi(X) \ne h^*_i(X) \Leftrightarrow d(\phi,h^*_i) \ge \epsilon. \nonumber 
\end{align}
Then, by applying Lemma~\ref{lemm7}, the proof is complete. Note that we do not consider the case where the $n_i$ samples are i.i.d. observations drawn from domain $i$. Incorporating this setting would further refine the lower bound through its dependence on $n_i$. However, omitting this consideration does not affect the main insights of our analysis. 
\end{proof}

\begin{proof}{\textbf{of Theorem~\ref{theo13}.} }
According to Lemma~\ref{lemm5}, for any $P_i$ and $Q_j$, we have 
\begin{align}
\left| {\mathbb E}_{P_i}[\ell]-{\mathbb E}_{Q_j}[\ell]\right| \le 2\sqrt{2}M d_{JS}(P_i,Q_j). \nonumber 
\end{align}
Then, extending this inequality to ${\bf P}$ and ${\bf P_t}$, we obtain 
\begin{align}
\left| \frac{1}{E_t}\sum\limits_{t=1}^{E_t}{ {\mathbb E}_{P_t}[\ell] } - \frac{1}{E}\sum\limits_{s=1}^{E}{ {\mathbb E}_{P_s}[\ell] }\right| \le C \sum\limits_{t=1}^{E_t}{\sum\limits_{s=1}^{E} d_{JS}(P_t,P_s)}, \nonumber 
\end{align}
where $C = (2\sqrt{2}M)/{E E_t}$ .  
Let $\lambda = \sum_t\sum_s d_{JS}(P_t,P_s)$ and 
\begin{align}
a&=\frac{1}{E_t}\sum_t{ {\mathbb E}_{z\sim P_t}[\ell(h, z) - \ell(h^*_t, z)] } \nonumber \\
b&=\frac{1}{E_t}\sum_t{ {\mathbb E}_{z\sim P_t}[\ell(h^*_t, z)] } \nonumber \\
c&=\frac{1}{E}\sum_s{ {\mathbb E}_{z\sim P_s}[\ell(h, z) - \ell(h^*_s, z)] } \nonumber \\
d&=\frac{1}{E}\sum_s{ {\mathbb E}_{z\sim P_s}[\ell(h^*_s, z)] }. \nonumber 
\end{align}
Applying the above inequality and the triangle inequality yields that 
\begin{align}
\left| a-c \right| &\le \left| d-b \right| + C{\lambda}. \nonumber 
\end{align}
By lemma~\ref{lemm5}, we also have 
\begin{align}
\left| d-b \right| = \left| \frac{1}{E}\sum_s{ {\mathbb E}_{z\sim P_s}[\ell(h^*_s, z)] } - \frac{1}{E_t}\sum_t{ {\mathbb E}_{z\sim P_t}[\ell(h^*_t, z)] } \right| \le C{\lambda}. \nonumber 
\end{align}
This follows that 
\begin{align}
\left| a-c \right| \le 2 C \lambda, \nonumber 
\end{align}
that is 
\begin{align}
\frac{1}{E_t}\sum_t{\mathbb E}[\ell(h,z) - \ell(h^*_t,z)] \le  \frac{1}{E}\sum_s{\mathbb E}[\ell(h,z) - \ell(h^*_s,z)]+ 2C \lambda. \nonumber 
\end{align} 
Then, by Definition~\ref{defi1}, it follows that 
\begin{align}
\frac{1}{E}\sum_s{\mathbb E}[\ell(h,z) - \ell(h^*_s,z)] \le \sigma \frac{1}{E}\sum_s{d_{P_s}(h,h^*_s)}, \nonumber 
\end{align}
completing the proof. 
\end{proof}

{

\bibliography{sample}
}

\end{document}